\theoremstyle{plain}
\newtheorem{theorem}{Theorem}[section]
\newtheorem{lemma}[theorem]{Lemma}
\theoremstyle{definition}
\newtheorem{assumption}[theorem]{Assumption}
\theoremstyle{remark}
\newtheorem{remark}[theorem]{Remark}
\DeclareMathOperator*{\argmin}{arg\,min}
\newcolumntype{N}{>{\centering\arraybackslash}m{.6in}}
\newcolumntype{G}{>{\centering\arraybackslash}m{2in}}
\newcommand{\cmark}{\ding{51}}%
\newcommand{\xmark}{\ding{55}}%
\newenvironment{proofsketch}{%
  \proof}{\endproof}
\newenvironment{customthm}[1]
  {\innercustomthm}
  {\endinnercustomthm}
\icmltitlerunning{The Number of Trials Matters in Infinite-Horizon General-Utility Markov Decision Processes}
\begin{document}

\twocolumn[
\icmltitle{The Number of Trials Matters in Infinite-Horizon \\ General-Utility Markov Decision Processes}




\begin{icmlauthorlist}
\icmlauthor{Pedro P. Santos}{yyy,xxx}
\icmlauthor{Alberto Sardinha}{zzz}
\icmlauthor{Francisco S. Melo}{yyy,xxx}
\end{icmlauthorlist}

\icmlaffiliation{yyy}{INESC-ID, Lisbon, Portugal}
\icmlaffiliation{zzz}{PUC-Rio, Rio de Janeiro, Brazil}
\icmlaffiliation{xxx}{Instituto Superior Técnico, Lisbon, Portugal}


\icmlcorrespondingauthor{Pedro P. Santos}{pedro.pinto.santos@tecnico.ulisboa.pt}

\icmlkeywords{Machine Learning, ICML}

\vskip 0.3in
]



\printAffiliationsAndNotice{}  

\begin{abstract}
The general-utility Markov decision processes (GUMDPs) framework generalizes the MDPs framework by considering objective functions that depend on the frequency of visitation of state-action pairs induced by a given policy. In this work, we contribute with the first analysis on the impact of the number of trials, i.e., the number of randomly sampled trajectories, in infinite-horizon GUMDPs. We show that, as opposed to standard MDPs, the number of trials plays a key-role in infinite-horizon GUMDPs and the expected performance of a given policy depends, in general, on the number of trials. We consider both discounted and average GUMDPs, where the objective function depends, respectively, on discounted and average frequencies of visitation of state-action pairs.  First, we study policy evaluation under discounted GUMDPs, proving lower and upper bounds on the mismatch between the finite and infinite trials formulations for GUMDPs. Second, we address average GUMDPs, studying how different classes of GUMDPs impact the mismatch between the finite and infinite trials formulations. Third, we provide a set of empirical results to support our claims, highlighting how the number of trajectories and the structure of the underlying GUMDP influence policy evaluation.
\end{abstract}

\section{Introduction}
\label{sec:introduction}
Markov decision processes (MDPs) \citep{puterman2014markov} provide a mathematical framework to study stochastic sequential decision-making. In MDPs, the agent aims to find a mapping from states to actions such that some function of the stream of rewards is maximized. The specification of the scalar reward function, which expresses the degree of desirability of each state-action pair, allows the encoding of different objectives. MDPs have found a wide range of applications in different domains \citep{white_1988}, such as inventory management \citep{dvoretzky_1952}, optimal stopping \citep{chow1971great} or queueing control \citep{stidham_1978}. MDPs are also of key importance in the field of reinforcement learning (RL) \citep{sutton_1998} since the agent-environment interaction is typically formalized under the framework of MDPs. Recent years witnessed significant progress in solving challenging problems across various domains using RL \citep{mnih_2015,silver_2017,lillicrap_2016}. Such results attest to the flexibility of MDPs as a general framework to study sequential decision-making under uncertainty.

However, there are relevant objectives that cannot be easily specified within the framework of MDPs \citep{abel2022expressivity}. These include, for example, imitation learning \citep{hussein_2017,Osa_2018}, pure exploration problems \citep{hazan_2019}, risk-averse RL \citep{garcia_2015}, diverse skills discovery \citep{eysenbach2018diversity,achiam2018variational} and constrained MDPs \citep{altman_1999,efroni_2020}. Such objectives, including the scalar reward objective of standard MDPs, can be formalized under the framework of general utility Markov decision processes (GUMDPs) \citep{zhang2020variational,mutti_2023}. In GUMDPs, the objective is, instead, encoded as a function of the occupancy induced by a given policy, i.e., as a function of the frequency of visitation of states (or state-action pairs) induced when running the policy on the MDP. Recent works have unified such objectives under the same framework and proposed general algorithms to solve GUMDPs under convex objective functions \citep{zhang2020variational,zahavy_2021,geist2022concave}. Extensions to the case of unknown dynamics are also provided by the aforementioned works.

Despite providing a more flexible framework with respect to objective-specification in comparison to standard MDPs, \citet{mutti_2023} show that finite-horizon GUMDPs implicitly make an infinite trials assumption. In other words, GUMDPs implicitly assume the performance of a given policy is evaluated under an infinite number of episodes of interaction with the environment. Since this assumption may be violated under many interesting application domains, the authors introduce a modification of GUMDPs where the objective function depends on the empirical state-action occupancy induced over a finite number of episodes. Under the introduced finite trials formulation, the authors show that the class of Markovian policies does not suffice, in general, to achieve optimality and that non-Markovian policies may need to be considered. Finally, the authors suggest that the difference between finite and infinite trials fades away under the infinite-horizon setting.

In this work, we contribute with the first analysis on the impact of the number of trials in infinite-horizon GUMDPs. We show that \textit{the number of trials plays a key role in infinite-horizon GUMDPs}, as opposed to what has been suggested \citep{mutti_2023}. Such a finding is of interest and relevance as: (i) the infinite-horizon setting is one of the most prevalent settings in the planning/RL literature and has found important applications in different domains where the lifetime of the agent is uncertain or infinite; and (ii) the assumption that the agent is evaluated under an infinite number of trajectories is usually violated in relevant application domains. We focus our attention on discounted and average GUMDPs, where the objective function depends on discounted and average occupancies, respectively. We show, both theoretically and empirically, that the agent's performance may depend on the number of infinite-length trajectories drawn to evaluate its performance, but also on the structure of the underlying GUMDP. Our analysis fundamentally differs, from a technical point of view, from that in \citet{mutti_2023} where the authors consider the finite-horizon case; this is because discounted and average occupancies are inherently different than occupancies induced under the finite-horizon setting.

\section{Background}
\label{sec:background}
Infinite-horizon MDPs \citep{puterman2014markov} provide a mathematical framework to study sequential decision making and are formally defined as a tuple $\mathcal{M} = (\mathcal{S}, \mathcal{A}, p, p_0, r)$ where: $\mathcal{S}$ is the discrete finite state space; $\mathcal{A}$ is the discrete finite action space; $p: \mathcal{S} \times \mathcal{A} \rightarrow \Delta(\mathcal{S})$ is the transition probability function with $\Delta(\mathcal{S})$ being the set of distributions over $\mathcal{S}$, $p_0 \in \Delta(\mathcal{S})$ is the initial state distribution; and $r: \mathcal{S} \times \mathcal{A} \rightarrow \mathbb{R}$ is the bounded reward function. The interaction protocol is: (i) an initial state $S_0$ is sampled from $p_0$; (ii) at each step $t$, the agent observes the state of the environment $S_t \in \mathcal{S}$ and chooses an action $A_t \in \mathcal{A}$. The environment evolves to state $S_{t+1} \in \mathcal{S}$ with probability $p(\cdot \rvert S_t, A_t)$, and the agent receives a reward $R_t$ with expectation given by $r(S_t,A_t)$; (iii) the interaction repeats infinitely.

A decision rule $\pi_t$ specifies the procedure for action selection at each timestep $t$. A stochastic Markovian decision rule maps the current state to a distribution over actions, i.e., $\pi_t : \mathcal{S} \rightarrow \Delta(\mathcal{A})$. In the case of deterministic Markovian decision rules $\pi_t : \mathcal{S} \rightarrow \mathcal{A}$ instead. A policy $\pi = \{\pi_0, \pi_1, \ldots\}$ is a sequence of decision rules, one for each timestep. If, for all timesteps, the decision rules are deterministic or stochastic, we say the policy is deterministic or stochastic, respectively. We denote the class of Markovian policies with $\Pi_{\text{M}}$ and the class of Markovian deterministic policies with $\Pi^\text{D}_{\text{M}}$. A policy is stationary if it consists of the same decision rule for all timesteps. We denote with $\Pi_{\text{S}}$ the set of stationary policies and with $\Pi_{\text{S}}^\text{D}$ the set of stationary deterministic policies. We highlight that $\Pi_{\text{S}} \subset \Pi_{\text{M}}$ and $\Pi_{\text{S}}^\text{D} \subset \Pi_{\text{M}}^\text{D}$.

For a given policy $\pi$, the interaction between the agent and the environment gives rise to a random process $T = (S_0, A_0, S_1, A_1, \ldots) \in (\mathcal{S} \times \mathcal{A})^\mathbb{N}$, where the probability of trajectory $\tau = (s_0,a_0,s_1,a_1,\ldots)$ is given by $\zeta_\pi(\tau)$. In the case of $\pi \in \Pi_{\text{S}}$, we denote with $P^\pi$ the $ \lvert \mathcal{S} \rvert \times \lvert \mathcal{S} \rvert$ matrix with elements $P^\pi(s,s') = \mathbb{E}_{A \sim \pi(\cdot \rvert s)}\left[ p(s' \rvert s, A) \right]$. 

\paragraph{The infinite-horizon discounted setting}
%
%
%
%
%
The discounted state-action occupancy under policy $\pi$ is 
\begin{align}
\label{eq:discounted_state_action_occupancy}
d_{\gamma,\pi}(s,a) &= (1-\gamma) \sum_{t=0}^\infty \gamma^t \mathbb{P}_{\pi,p_0}(S_t = s, A_t = a),
\end{align}
where $\gamma \in [0,1)$ is the discount factor and
$\mathbb{P}_{\pi,p_0}(S_t = s, A_t = a)$
denotes the probability of state-action pair $(s,a)$ at timestep $t$ when following policy $\pi$ and $S_0 \sim p_0$. The expected discounted cumulative reward of policy $\pi$ can be written as $\langle d_{\gamma,\pi}, -r \rangle$\footnote{$\langle a,b \rangle$ denotes the dot product between vectors $a$ and $b$.} where $d_{\gamma,\pi} = [d_{\gamma,\pi}(s_0,a_0), \ldots, d_{\gamma,\pi}(s_{|\mathcal{S}|},a_{|\mathcal{A}|})]$ and $r = [r(s_0,a_0), \ldots, r(s_{|\mathcal{S}|},a_{|\mathcal{A}|})]$. We aim to find
$$
\pi^* = \argmin_{\pi \in \Pi_\text{S}} \langle d_{\gamma,\pi}, -r \rangle.
$$
%

\begin{figure*}[t]
    \centering
    \begin{subfigure}[b]{0.4\textwidth}
        \centering
        \includegraphics[height=1.8cm]{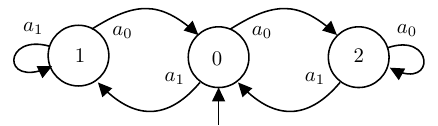}
        \caption{$\mathcal{M}_{f,1}$.}
        \label{fig:illustrative_gumdps:1}
    \end{subfigure}
    \hfill
    \begin{subfigure}[b]{0.27\textwidth}
        \centering
        \includegraphics[height=1.8cm]{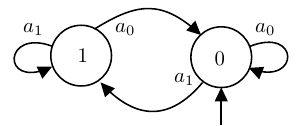}
        \caption{$\mathcal{M}_{f,2}$.}
        \label{fig:illustrative_gumdps:2}
    \end{subfigure}
    \hfill
    \begin{subfigure}[b]{0.27\textwidth}
        \centering
        \includegraphics[height=2.1cm]{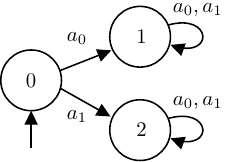}
        \caption{$\mathcal{M}_{f,3}$.}
        \label{fig:illustrative_gumdps:3}
    \end{subfigure}
    \caption{Illustrative GUMDPs \citep[$\mathcal{M}_{f,1}$ and $\mathcal{M}_{f,2}$ are adapted from][]{mutti_2023}. All GUMDPs have deterministic transitions. The objective for $\mathcal{M}_{f,1}$ is $f(d) = \langle d, \log(d) \rangle$ (entropy maximization), for $\mathcal{M}_{f,2}$ is $f(d) = \text{KL}(d | d_\beta)$ where $\beta$ is a fixed policy (imitation learning), and for $\mathcal{M}_{f,3}$ is $f(d) = d^\top A d $ where $A$ is positive-definite (quadratic minimization).}
    \label{fig:illustrative_gumdps}
\end{figure*}

\paragraph{The infinite-horizon average setting}
The average state-action occupancy under policy $\pi$ is
\begin{align}
\label{eq:average_state_action_occupancy}
d_{\text{avg},\pi}(s,a) &= \lim_{H \rightarrow \infty} \frac{1}{H} \sum_{t=0}^{H-1} \mathbb{P}_{\pi,p_0}(S_t = s, A_t = a).
\end{align}
The expected average reward of policy $\pi$ can be written as $\langle d_{\text{avg},\pi}, -r \rangle$, where $d_{\text{avg},\pi} = [d_{\text{avg},\pi}(s_0,a_0), \ldots, d_{\text{avg},\pi}(s_{|\mathcal{S}|},a_{|\mathcal{A}|})]$. The analysis of the average reward setting depends on the structure of the Markov chains induced by conditioning the MDP on different policies. We say that an MDP is \citep{puterman2014markov,altman_1999}:
\begin{itemize}
    \item \textit{unichain} if, for every $\pi \in \Pi_{\text{S}}^\text{D}$, the Markov chain with transition matrix $P^\pi$ contains a single recurrent class plus a possibly empty set of transient states. A recurrent class is an irreducible class where all states are recurrent. An irreducible class is a set of states such that every state is reachable from any other state in the set. A state is recurrent if the probability of returning to it in the future is one and transient if the probability is less than one. In a finite-state Markov chain, all irreducible classes are recurrent.
    \item \textit{multichain} if the MDP is not unichain.
\end{itemize}

\noindent Under both unichain and multichain MDPs we aim to find
$$
\pi^* = \argmin_{\pi \in \Pi_\text{S}} \langle d_{\text{avg},\pi}, -r \rangle.
$$

\subsection{General-utility Markov decision processes}

The framework of GUMDPs generalizes utility-specification by allowing the objective of the agent to be written in terms of the frequency of visitation of state-action pairs. This is in contrast to the standard MDPs framework, where the objective of the agent is encoded by the reward function. 

We define an infinite-horizon GUMDP as a tuple $\mathcal{M}_f = (\mathcal{S}, \mathcal{A}, p, p_0, f)$ where $\mathcal{S}$, $\mathcal{A}$, $p$, and $p_0$ are defined in a similar way to the standard MDP formulation. The objective of the agent is encoded by $f: \Delta(\mathcal{S} \times \mathcal{A}) \rightarrow \mathbb{R}$, as a function of a state-action occupancy $d$. Similar to the case of standard MDPs, $d$ can correspond to: (i) a discounted state-action occupancy $d_\gamma$, as defined in \eqref{eq:discounted_state_action_occupancy}, in the case of infinite-horizon discounted GUMDPs; or (ii) an average state-action occupancy $d_\text{avg}$, as defined in \eqref{eq:average_state_action_occupancy}, in the case of infinite-horizon average GUMDPs. The objective is then to find
\begin{equation}
    \label{eq:gumdp_objective}
    \pi^* \in \argmin_{\pi \in \Pi} f(d_\pi),
\end{equation}
where: (i) $d_\pi$ can either correspond to a discounted or average state-action occupancy depending on the considered setting; and (ii) $\Pi = \Pi_\text{M}$\footnote{Since $f$ can be non-linear it may happen that a stationary policy does not attain the minimum of $f$ and, hence, we need to consider the case where $\pi \in \Pi_\text{M}$ \citep[p. 402]{puterman2014markov}.} in the case of average multichain GUMDPs and $\Pi = \Pi_\text{S}$ in the case of discounted GUMDPs \citep[Theo. 3.2.]{altman_1999} and average unichain GUMDPs \citep[Theo. 8.9.4]{puterman2014markov}. When $f$ is linear, we are under the standard MDP setting; if $f$ is convex, we are under the convex MDP setting \citep{zahavy_2021}. Finally, it is known that the optimal policy may not be deterministic \citep{hazan_2019,zahavy_2021}.

\paragraph{Illustrative GUMDPs}
Throughout this paper, we make use of the GUMDPs depicted in Fig.~\ref{fig:illustrative_gumdps}, which are representative of three common tasks in the convex RL literature.

\section{From Expected to Empirical\\Objectives for GUMDPs}
\label{sec:gumdps}
In this section, we introduce multiple objectives for GUMDPs. As opposed to the objective in \eqref{eq:gumdp_objective}, which depends on the \textit{expected} discounted or average state-action occupancy $d_\pi$, the objectives herein introduced depend on the \textit{empirical} discounted or average state-action occupancy.

We start by considering that the agent interacts with its environment over multiple trials, i.e., multiple trajectories/episodes. We denote by $K$ the number of trials. We assume the $K$ trials are independently sampled. As it is generally clear from the context to understand whether we are referring to discounted or average empirical occupancies, we use $\hat{d}$ to denote both types of empirical occupancies.

\subsection{Empirical state-action occupancies}

\paragraph{Discounted state-action occupancies}
We introduce $\hat{d}_{\mathcal{T}_K}$, which denotes the empirical discounted state-action occupancy induced by a set of $K$ trajectories $\mathcal{T}_K = \{T_1, \ldots, T_K\}$, where each $T_k = (S_{k,0},A_{k,0},S_{k,1},A_{k,1},\ldots)$, defined as 
\begin{equation}
    \label{eq:estimator_discounted}
    \hat{d}_{\mathcal{T}_K}(s,a) = \frac{1}{K} \sum_{k=1}^K (1-\gamma) \sum_{t=0}^\infty \gamma^t \mathbf{1}(S_{k,t}=s, A_{k,t}=a),
\end{equation}
where $\mathbf{1}$ is the indicator function. 
In practice, it is common to truncate the trajectories of interaction between the agent and its environment. We denote by $H \in \mathbb{N}$ the length at which the trajectories are truncated, i.e. the length of the sampled trajectories. We then introduce a truncated version of estimator $\hat{d}_{\mathcal{T}_K}$, which we denote $\hat{d}_{\mathcal{T}_K, H}$, defined as
\begin{equation}
    \label{eq:estimator_discounted_truncated}
    \hat{d}_{\mathcal{T}_K, H}(s,a) = \frac{1}{K} \sum_{k=1}^K \frac{1-\gamma}{1-\gamma^H} \sum_{t=0}^{H-1} \gamma^t \mathbf{1}(S_{k,t}=s, A_{k,t}=a).
\end{equation}
%

\paragraph{Average state-action occupancies}
In the case of average state-action occupancies, we define
\begin{equation}
    \label{eq:estimator_average}
    \hat{d}_{\mathcal{T}_K}(s,a) = \frac{1}{K} \sum_{k=1}^K \lim_{H \rightarrow \infty} \frac{1}{H} \sum_{t=0}^{H-1} \mathbf{1}(S_{k,t}=s, A_{k,t}=a).
\end{equation}
We emphasize that the estimator above $\hat{d}_{\mathcal{T}_K}$ always considers infinite-length trajectories. 

\subsection{Infinite and finite trials objectives for GUMDPs}
We now introduce multiple objectives for GUMDPs that are functions of empirical discounted/average state-action occupancies. Below, in the case of a discounted GUMDP, $\hat{d}_{\mathcal{T}_K}$ corresponds to an empirical discounted occupancy and $d_\pi= d_{\gamma,\pi}$. For average GUMPDs, $\hat{d}_{\mathcal{T}_K}$ is an empirical average occupancy and $d_\pi= d_{\text{avg},\pi}$.

\noindent The \emph{finite trials} discounted/average objective, $f_K$, is
\begin{equation*}
    \min_{\pi} f_K(\pi) = \min_{\pi} \mathbb{E}_{\mathcal{T}_K}\left[ f(\hat{d}_{\mathcal{T}_K}) \right],
\end{equation*}
where $T_k \sim \zeta_\pi$ for each $T_k \in \mathcal{T}_K$. The \emph{infinite trials} discounted/average objective, $f_\infty$, is
\begin{equation*}
    \min_\pi f_\infty(\pi) = \min_\pi f(d_\pi) = \min_{\pi}f\left(\mathbb{E}_{\mathcal{T}_K}\left[\hat{d}_{\mathcal{T}_K} \right]\right),
\end{equation*}
where $T_k \sim \zeta_\pi$ for each $T_k \in \mathcal{T}_K$. We note that $f_\infty$, under both discounted and average occupancies, is equivalent to the objective introduced in \eqref{eq:gumdp_objective}. Precisely, we call the objective above the infinite trials objective because, assuming $f$ is continuous,
$
\lim_{K \rightarrow \infty}  f(\hat{d}_{\mathcal{T}_K}) = f(d_\pi) = f_\infty(\pi).
$
The \emph{finite trials truncated} objective, $f_{K,H}$, which we only consider under discounted occupancies, is
\begin{equation*}
    \min_\pi f_{K,H}(\pi) = \min_\pi \mathbb{E}_{\mathcal{T}_K}\left[ f(\hat{d}_{\mathcal{T}_K,H}) \right],
\end{equation*}
where $T_k \sim \zeta_\pi$ for each $T_k \in \mathcal{T}_K$. We note that the finite trials truncated objective is more general than the finite trials objective. In particular, $ f_{K,H} = f_{K}$ as $H \rightarrow \infty$.

\paragraph{Why there may be a mismatch between the infinite and finite trials objectives?}
When $f$ is linear, we make the following remark.
\begin{remark}
    \label{remark:linear_mdps}
    If $f$ is linear, for both discounted and average occupancies, we have that $f_\infty(\pi) = f_K(\pi)$, for any $K \in \mathbb{N}$.
\end{remark}
\begin{proof}
Under both discounted and average occupancies $\hat{d}$, for any $K \in \mathbb{N}$, it holds that
\begin{align*}
    f_\infty(\pi) = \langle d_\pi, b \rangle = \langle \mathbb{E}_{\mathcal{T}_K } \left[ \hat{d}_{\mathcal{T}_K} \right], b \rangle \\
    = \mathbb{E}_{\mathcal{T}_K } \left[ \langle \hat{d}_{\mathcal{T}_K}, b \rangle \right] = f_K(\pi),
\end{align*}
due to the linearity of the expectation.
\end{proof}
\noindent Thus, all objectives are equivalent. Intuitively, the performance of a given policy is, in expectation, the same independently of the number of trajectories drawn to evaluate its performance.

However, assume that the objective function $f$ is convex, possibly non-linear. We make the following remark.
\begin{remark}
    If $f$ is convex, for both discounted and average occupancies, we have that $f_\infty(\pi) \le f_K(\pi)$, for any $K \in \mathbb{N}$.
\end{remark}
\begin{proof}
Under both discounted and average occupancies, for any $K \in \mathbb{N}$ and convex $f$, it holds that
\begin{align*}
    f_\infty(\pi) = f(d_\pi) = f\left(\mathbb{E}_{\mathcal{T}_K \sim \zeta_\pi } \left[ \hat{d}_{\mathcal{T}_K} \right]\right) \\
    \le \mathbb{E}_{\mathcal{T}_K \sim \zeta_\pi } \left[ f\left(\hat{d}_{\mathcal{T}_K} \right) \right] = f_K(\pi),
\end{align*}
where the inequality follows from Jensen's inequality.
\end{proof}

\noindent As a consequence, the theorem above suggests that, in general, there may be a mismatch between the finite and infinite trials formulations for GUMDPs. In the next section we show that, indeed, $f_{K}(\pi) \neq f_\infty(\pi)$ in general and further investigate the impact of the number of trajectories in the mismatch between the infinite and finite trials formulations under both discounted and average occupancies.

\section{Policy Evaluation in the Finite Trials Regime}
\label{sec:policy_evaluation}
In this section, we investigate the mismatch between the different GUMDP objectives introduced in the previous section, while evaluating the performance of a fixed policy. We consider convex objective functions. First, we focus our attention on the discounted setting and show that, in general, $f_{K}(\pi) \neq f_\infty(\pi)$, for fixed $\pi \in \Pi_\text{S}$. Furthermore, we provide a lower bound on the mismatch between $f_{K}(\pi)$ and $f_\infty(\pi)$, as well as an upper probability bound on the absolute distance between $f(\hat{d}_{\mathcal{T}_K, H})$ and  $f_\infty(\pi)$. Second, we study policy evaluation under GUMDPs with average occupancies. We investigate the mismatch between $f_{K}(\pi)$ and $f_\infty(\pi)$ for different classes of GUMDPs, also proving a lower bound on the mismatch between $f_{K}(\pi)$ and $f_\infty(\pi)$. Finally, we provide a set of empirical results to support our theoretical claims.

\subsection{The infinite-horizon discounted setting}
We first consider the discounted setting. Thus, let $d_\pi = d_{\gamma,\pi}$, as defined in \eqref{eq:discounted_state_action_occupancy}. Also, we consider estimators $\hat{d}_{\mathcal{T}_K}$ and $\hat{d}_{\mathcal{T}_K,H}$ as defined in \eqref{eq:estimator_discounted} and \eqref{eq:estimator_discounted_truncated} respectively. We prove the following result.

\begin{theorem}
    \label{theo:discounted_finite_vs_infinite}
    Under the discounted setting, it does not always hold that $f_K(\pi) = f_\infty(\pi)$ for arbitrary $\pi \in \Pi_\text{S}$.
\end{theorem}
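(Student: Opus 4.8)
The statement is existential, so it suffices to exhibit a single discounted GUMDP, a convex objective $f$, and a stationary policy $\pi \in \Pi_\text{S}$ for which $f_K(\pi) \neq f_\infty(\pi)$. The natural starting point is the convexity remark already established: for convex $f$ we have $f_\infty(\pi) \le f_K(\pi)$ via Jensen's inequality, with equality exactly when Jensen is tight. The plan is therefore to construct an instance where Jensen is \emph{strict}, which happens whenever $f$ is strictly convex and the empirical occupancy $\hat{d}_{\mathcal{T}_K}$ is non-degenerate, i.e.\ not almost surely equal to its mean $d_\pi = \mathbb{E}_{\mathcal{T}_K}[\hat{d}_{\mathcal{T}_K}]$.

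Since the illustrative GUMDPs in Fig.~\ref{fig:illustrative_gumdps} have deterministic transitions, the required randomness in $\hat{d}_{\mathcal{T}_K}$ must be supplied by a \emph{stochastic} stationary policy: a random action sequence induces a random discounted empirical occupancy even when the dynamics are deterministic. I would take the simplest regime $K=1$ together with a strictly convex quadratic objective of the form used by $\mathcal{M}_{f,3}$, namely $f(d) = d^\top A d$ with $A$ positive-definite. Writing $\Sigma = \operatorname{Cov}(\hat{d}_{\mathcal{T}_1})$, a direct expansion gives $f_K(\pi) - f_\infty(\pi) = \mathbb{E}\big[\hat{d}_{\mathcal{T}_1}^\top A\, \hat{d}_{\mathcal{T}_1}\big] - d_\pi^\top A\, d_\pi = \operatorname{tr}(A\Sigma)$, which is strictly positive whenever $\Sigma \neq 0$ because $A \succ 0$.

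To make $\Sigma \neq 0$ transparent, I would instantiate a minimal GUMDP with a single recurrent state $s$ and two actions $a_1, a_2$, deterministic self-loop transitions, and the stationary policy that selects each action with probability $1/2$. Then $\hat{d}_{\mathcal{T}_1}(s,a_1) = (1-\gamma)\sum_{t=0}^\infty \gamma^t \mathbf{1}(A_t = a_1)$ is a genuinely random convex combination with mean $1/2$, and since $\hat{d}_{\mathcal{T}_1}(s,a_2) = 1 - \hat{d}_{\mathcal{T}_1}(s,a_1)$, the objective reduces to a one-dimensional strictly convex function of $\hat{d}_{\mathcal{T}_1}(s,a_1)$ whose expectation exceeds its value at the mean by a positive multiple of $\operatorname{Var}(\hat{d}_{\mathcal{T}_1}(s,a_1))$.

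The only nontrivial step is confirming $\operatorname{Var}(\hat{d}_{\mathcal{T}_1}(s,a_1)) > 0$, i.e.\ that the discounted indicator sum is not almost surely constant; this is the main obstacle to rule out cleanly, since one must exclude degenerate instances where every sampled trajectory yields the same occupancy. For the $1/2$-$1/2$ policy above it is immediate, because the all-$a_1$ and all-$a_2$ trajectories both occur with positive probability and give distinct values ($1$ versus $0$). Computing the exact variance amounts to summing $(1-\gamma)^2 \gamma^{t+t'} \operatorname{Cov}(\mathbf{1}(A_t = a_1), \mathbf{1}(A_{t'} = a_1))$ over all $t, t'$, a routine geometric series; its strict positivity is all that the theorem requires, completing the counterexample and hence establishing that $f_K(\pi) = f_\infty(\pi)$ need not hold.
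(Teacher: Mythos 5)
Your overall strategy is the same as the paper's in spirit---both exhibit a $K=1$ instance with a strictly convex quadratic $f$ for which the empirical occupancy $\hat{d}_{\mathcal{T}_1}$ is non-degenerate, so that Jensen's inequality is strict---but your concrete construction and machinery differ. The paper uses $\mathcal{M}_{f,3}$ (Fig.~\ref{fig:illustrative_gumdps:3}): the single stochastic action choice at $s_0$ sends the trajectory into one of two absorbing states, so $\hat{d}_{\mathcal{T}_1}$ takes exactly two distinct values, each with positive probability, and strictness follows by applying strict convexity to a two-point mixture, with no variance computation at all. Your single-state self-loop GUMDP instead spreads the randomness over infinitely many action choices and reduces the gap to $\operatorname{tr}(A\Sigma) > 0$; this exact identity for quadratics is correct (and is an exact-identity analogue, for quadratic $f$, of the lower bound the paper proves later in Theorem~\ref{theo:discounted_strongly_cvx_expected_lower_bound}). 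What your route buys is a quantitative expression for the gap; what the paper's buys is that non-degeneracy of $\hat{d}_{\mathcal{T}_1}$ is visible by inspection.

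There is, however, one genuinely wrong step, and it sits exactly at what you yourself flag as the only nontrivial point. In your construction the all-$a_1$ and all-$a_2$ trajectories do \emph{not} occur with positive probability: under the $1/2$-$1/2$ policy every individual infinite trajectory has probability $\lim_{n\to\infty}2^{-n}=0$, so you cannot deduce $\operatorname{Var}(\hat{d}_{\mathcal{T}_1}(s,a_1))>0$ from two atoms at $0$ and $1$ the way the paper deduces it in $\mathcal{M}_{f,3}$ (where absorption after one step really does produce a two-atom law). The fact you need is nonetheless true and follows from your own backup computation: since the $A_t$ are i.i.d., the off-diagonal covariances vanish and $\operatorname{Var}\bigl(\hat{d}_{\mathcal{T}_1}(s,a_1)\bigr) = (1-\gamma)^2\sum_{t\ge 0}\gamma^{2t}\cdot\tfrac14 = \tfrac{1-\gamma}{4(1+\gamma)} > 0$; alternatively, condition on $A_0$ and note the conditional means $(1-\gamma)+\gamma/2$ and $\gamma/2$ differ, so the law of total variance gives positivity. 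With that repair your counterexample is complete and the theorem follows.
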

\begin{proof}
    We prove the theorem by providing a GUMDP instance where $f_K(\pi) \neq f_\infty(\pi)$. We consider the GUMDP $\mathcal{M}_{f,3}$ (Fig.~\ref{fig:illustrative_gumdps:3}). For simplicity, we let $f$ and the occupancies depend only on the states.
    Hence, $d = [d(s_0), d(s_1), d(s_2)]$. We let $f(d) = d^\top A d$, where $A$ is the identity matrix (hence, $f$ is a strictly convex function). It holds that
    $d_\pi = [(1-\gamma), \; \gamma \pi(a_0|s_0), \; \gamma \pi(a_1|s_0)].$
    %
    %
    %
    On the other hand, let $K=1$. It holds that, with probability $\pi(a_0|s_0)$, the trajectory gets absorbed into $s_1$ and $\tau = (s_0,s_1,s_1, \ldots)$, yielding
    $
        \hat{d}_\tau = [(1-\gamma), \; \gamma, \; 0].
    $
    With probability $\pi(a_1|s_0)$ the trajectory gets absorbed into $s_2$ and $\tau = (s_0,s_2,s_2, \ldots)$, yielding
    $
        \hat{d}_\tau = [(1-\gamma), \; 0, \; \gamma].
    $
    Let $p = \pi(a_0|s_0)$ and note that $\pi(a_1|s_0) = 1-p$. For any non-deterministic policy, i.e., $p \in (0,1)$, it holds that
    \begin{align*}
        f_\infty(\pi) &= f(d_\pi)\\
        &= f(p [(1-\gamma), \; \gamma, \; 0] + (1-p) [(1-\gamma), \; 0, \; \gamma] )\\
        &< p f([(1-\gamma), \; \gamma, \; 0]) + (1-p) f([(1-\gamma), \; 0, \; \gamma])\\
        &= f_{K=1} (\pi),
    \end{align*}
    where the inequality holds since $f$ is strictly convex.
\end{proof}

\noindent As stated in the theorem above, under the discounted setting, $f_K(\pi) \neq f_\infty(\pi)$ in general. Thus, we further analyze the impact of the number of trials, $K$, on the deviation between $f_K(\pi)$ and $f_\infty(\pi)$. To derive the result below, we assume $f$ is $c$-strongly convex, i.e., it exists $c > 0$ such that
$f(d_1) \ge f(d_2) +  \nabla f(d_2)^\top (d_1 - d_2) + \frac{c}{2} \left\|d_1 - d_2 \right\|_2^2,$
for any $d_1$, $d_2$ belonging to the domain of $f$. We note that the objective functions of all GUMDPs in Fig.~\ref{fig:illustrative_gumdps} are $c$-strongly convex (proof in appendix). We state the following result (full proof in appendix).

\begin{theorem}
\label{theo:discounted_strongly_cvx_expected_lower_bound}
Let $\mathcal{M}_f$ be a discounted GUMDP with $c$-strongly convex $f$ and $K \in \mathbb{N}$ be the number of sampled trajectories. Then, for any policy $\pi \in \Pi_\text{S}$ it holds that
    \begin{align*}
    f_{K}(\pi) - f_\infty(\pi) &\ge \frac{c}{2K} \sum_{\substack{s \in \mathcal{S} \\ a \in \mathcal{A}}} \underset{T \sim \zeta_\pi}{\textup{Var}}\left[ \hat{d}_{T}(s,a) \right]\\
    &= \frac{c (1-\gamma)^2}{2K} \sum_{\substack{s \in \mathcal{S} \\ a \in \mathcal{A}}} \underset{T \sim \zeta_\pi}{\textup{Var}} \left[ J^{\gamma,\pi}_{r_{s,a}} \right],
    \end{align*}
    where $J^{\gamma,\pi}_{r_{s,a}} = \sum_{t=0}^\infty \gamma^t r_{s,a}(S_{t}, A_{t})$ is the discounted return for the MDP with reward function
    $r_{s,a}(s',a') = 1$
    if $s' = s$ and $a' = a$, and zero otherwise.
\end{theorem}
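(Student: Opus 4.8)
The plan is to apply the $c$-strong convexity inequality with the two points $d_1 = \hat{d}_{\mathcal{T}_K}$ and $d_2 = d_\pi$, then take expectations over the sampled trajectory set $\mathcal{T}_K \sim \zeta_\pi$. Concretely, strong convexity gives
\begin{equation*}
f(\hat{d}_{\mathcal{T}_K}) \ge f(d_\pi) + \nabla f(d_\pi)^\top (\hat{d}_{\mathcal{T}_K} - d_\pi) + \frac{c}{2}\left\| \hat{d}_{\mathcal{T}_K} - d_\pi \right\|_2^2 .
\end{equation*}
Taking $\mathbb{E}_{\mathcal{T}_K}[\cdot]$ on both sides turns the left-hand side into $f_K(\pi)$ and the first right-hand term into $f_\infty(\pi) = f(d_\pi)$. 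The key observation driving the whole argument is that $\hat{d}_{\mathcal{T}_K}$ is an \emph{unbiased} estimator of $d_\pi$, i.e.\ $\mathbb{E}_{\mathcal{T}_K}[\hat{d}_{\mathcal{T}_K}] = d_\pi$ (this is exactly the identity invoked when defining $f_\infty$ in the excerpt). Consequently the linear term $\nabla f(d_\pi)^\top \mathbb{E}_{\mathcal{T}_K}[\hat{d}_{\mathcal{T}_K} - d_\pi]$ vanishes, leaving $f_K(\pi) - f_\infty(\pi) \ge \frac{c}{2}\,\mathbb{E}_{\mathcal{T}_K}[\| \hat{d}_{\mathcal{T}_K} - d_\pi \|_2^2]$.

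Next I would rewrite the expected squared deviation as a sum of variances. Since $\mathbb{E}_{\mathcal{T}_K}[\hat{d}_{\mathcal{T}_K}(s,a)] = d_\pi(s,a)$ for each pair $(s,a)$, expanding the squared Euclidean norm coordinatewise yields $\mathbb{E}_{\mathcal{T}_K}[\| \hat{d}_{\mathcal{T}_K} - d_\pi \|_2^2] = \sum_{s,a} \textup{Var}_{\mathcal{T}_K}[\hat{d}_{\mathcal{T}_K}(s,a)]$. To obtain the $1/K$ factor, I would use that $\hat{d}_{\mathcal{T}_K}(s,a) = \frac{1}{K}\sum_{k=1}^K \hat{d}_{T_k}(s,a)$ is an average of $K$ independent and identically distributed single-trajectory estimators (the $K$ trials are assumed i.i.d.), so $\textup{Var}_{\mathcal{T}_K}[\hat{d}_{\mathcal{T}_K}(s,a)] = \frac{1}{K}\textup{Var}_{T\sim\zeta_\pi}[\hat{d}_T(s,a)]$. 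Combining these gives the first displayed inequality.

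For the second equality I would simply recognize the single-trajectory discounted estimator as a scaled discounted return: from \eqref{eq:estimator_discounted} with $K=1$,
\begin{equation*}
\hat{d}_T(s,a) = (1-\gamma)\sum_{t=0}^\infty \gamma^t \mathbf{1}(S_t = s, A_t = a) = (1-\gamma)\, J^{\gamma,\pi}_{r_{s,a}},
\end{equation*}
since the indicator is exactly the reward $r_{s,a}$. Pulling the constant $(1-\gamma)$ out of the variance (it scales the variance by $(1-\gamma)^2$) converts $\sum_{s,a}\textup{Var}_{T}[\hat{d}_T(s,a)]$ into $(1-\gamma)^2 \sum_{s,a}\textup{Var}_{T}[J^{\gamma,\pi}_{r_{s,a}}]$, which is the claimed closed form.

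I do not expect a genuine obstacle here, as every step is a direct application of a stated assumption (strong convexity), unbiasedness, i.i.d.\ additivity of variance, and a definitional rewriting. The only points requiring care are justifying that $\hat{d}_{\mathcal{T}_K}$ is unbiased so the linear term truly drops out, and ensuring the variance of the single-trajectory estimator is finite so that taking expectations and the i.i.d.\ variance decomposition are valid; finiteness follows because $\hat{d}_T(s,a) \in [0, 1-\gamma]$ is almost surely bounded, so all the relevant moments exist and the interchange of expectation and the infinite discounted sum is justified by dominated convergence.
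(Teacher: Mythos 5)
Your proposal is correct and follows essentially the same route as the paper's proof: apply the $c$-strong-convexity inequality at $d_2 = d_\pi = \mathbb{E}_{\mathcal{T}_K}[\hat{d}_{\mathcal{T}_K}]$ so that the linear term vanishes after taking expectations, expand the squared Euclidean norm coordinatewise into variances, use independence of the $K$ trajectories to obtain the $1/K$ factor, and identify $\hat{d}_T(s,a) = (1-\gamma)\,J^{\gamma,\pi}_{r_{s,a}}$ to pull out $(1-\gamma)^2$. One trivial slip in a side remark: $\hat{d}_T(s,a) \in [0,1]$ rather than $[0,1-\gamma]$, since the discounted sum of indicators can reach $1/(1-\gamma)$; boundedness by $1$ still gives the finiteness of moments your argument needs.
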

\begin{proofsketch}
    From the strongly convex assumption it holds, for a random vector $X$, that 
    $$\mathbb{E}[f(X)] \ge f(\mathbb{E}[X]) + \frac{c}{2} \mathbb{E}\left[ \left\|X - \mathbb{E}[X] \right\|_2^2 \right].$$
    Using the inequality above, it holds that
    \begin{align*}
        f_{K}(\pi) - f_\infty(\pi) &= \mathbb{E}_{\mathcal{T}_K}\left[ f(\hat{d}_{\mathcal{T}_K}) \right] - f\left( \mathbb{E}_{\mathcal{T}_K}\left[ \hat{d}_{\mathcal{T}_K} \right] \right) \\
        &\ge \frac{c}{2} \mathbb{E}_{\mathcal{T}_K}\left[ \left\| \hat{d}_{\mathcal{T}_K} - d_\pi \right\|_2^2 \right]\\
        &\overset{\text{(a)}}{=} \frac{c}{2} \sum_{s \in \mathcal{S}, a \in \mathcal{A}} \textup{Var}_{\mathcal{T}_K}\left[ \frac{1}{K} \sum_{k=1}^K \hat{d}_{T_k}(s,a) \right]\\
        &= \frac{c}{2K} \sum_{s \in \mathcal{S}, a \in \mathcal{A}} \textup{Var}_{T \sim \zeta_\pi}\left[ \hat{d}_{T}(s,a) \right],
    \end{align*}
    where (a) follows from simplifying the previous expression and substituting $\hat{d}_{\mathcal{T}_K}(s,a) = \frac{1}{K} \sum_{k=1}^K \hat{d}_{T_k}(s,a)$ where $\hat{d}_{T_k}(s,a) = (1-\gamma) \sum_{t=0}^{\infty} \gamma^t \mathbf{1}(S_{k,t}=s, A_{k,t} = a)$. The result follows since $\hat{d}_{T}(s,a)$ is equivalent to the discounted return in an MDP with an indicator reward function.
\end{proofsketch}

\noindent As stated in the theorem above, the difference between $f_{K}(\pi)$ and $f_\infty(\pi)$ can be lower bounded by the sum of the variances of the discounted returns for the MDPs with reward functions $r_{s,a}$, as defined above. We refer to \citet{Benito1982CalculatingTV,sobel_1982,sitavr_2006} for an expression to calculate the variance of discounted returns in MDPs. We highlight the $1/K$ dependence on the number of trajectories. Therefore, the result above shows that, for a low number of trajectories, the mismatch between the objectives can be significant, linearly decaying as $K$ increases. 

Finally, we provide a probability bound on the absolute deviation between $f(\hat{d}_{\mathcal{T}_K})$ and $f_\infty(\pi)$, for fixed $\pi \in \Pi_\text{S}$. To derive our result, we assume $f$ is $L$-Lipschitz, i.e., $|f(d_1) - f(d_2)| \le L \| d_1 - d_2\|_1$, for any $d_1, d_2$ belonging to the domain of $f$. We prove the following result (full proof in Appendix).

\begin{theorem}
    \label{theo:discounted_prob_upper_bound}
    Let $\mathcal{M}_f$ be a discounted GUMDP with convex and $L$-Lipschitz $f$, $K \in \mathbb{N}$ be the number of sampled trajectories, each with length $H \in \mathbb{N}$. Then, for any policy $\pi$ and $\delta \in (0,1]$ it holds with probability at least $1-\delta$
    \begin{equation*}
        |f_\infty(\pi) - f(\hat{d}_{\mathcal{T}_K, H})| \le L \left( \sqrt{\frac{2|\mathcal{S}||\mathcal{A}| \log(2H / \delta)}{K}} + 2\gamma^H \right).
    \end{equation*}
    %
    %
    %
    %
    %
\end{theorem}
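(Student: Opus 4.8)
The plan is to reduce the statement about $f$ to a statement about the $\ell_1$ distance between occupancies, and then split that distance into a deterministic truncation term and a random sampling term. Since $f$ is $L$-Lipschitz, $|f_\infty(\pi) - f(\hat{d}_{\mathcal{T}_K,H})| = |f(d_\pi) - f(\hat{d}_{\mathcal{T}_K,H})| \le L\,\|d_\pi - \hat{d}_{\mathcal{T}_K,H}\|_1$, so it suffices to bound the $\ell_1$ error by $\sqrt{2|\mathcal{S}||\mathcal{A}|\log(2H/\delta)/K} + 2\gamma^H$ with probability at least $1-\delta$. I would then introduce the mean of the truncated estimator and use the triangle inequality $\|d_\pi - \hat{d}_{\mathcal{T}_K,H}\|_1 \le \|d_\pi - \mathbb{E}_{\mathcal{T}_K}[\hat{d}_{\mathcal{T}_K,H}]\|_1 + \|\mathbb{E}_{\mathcal{T}_K}[\hat{d}_{\mathcal{T}_K,H}] - \hat{d}_{\mathcal{T}_K,H}\|_1$, which separates the analysis into a deterministic bias due to truncating trajectories at length $H$ and a random fluctuation of the empirical occupancy around its mean.

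For the truncation bias I would work coordinatewise. Writing $q_t(s,a) = \mathbb{P}_{\pi,p_0}(S_t=s,A_t=a)$, both $d_\pi(s,a) = (1-\gamma)\sum_{t=0}^\infty \gamma^t q_t(s,a)$ and $\mathbb{E}_{\mathcal{T}_K}[\hat{d}_{\mathcal{T}_K,H}(s,a)] = \tfrac{1-\gamma}{1-\gamma^H}\sum_{t=0}^{H-1}\gamma^t q_t(s,a)$ are weighted averages of the per-timestep distributions $q_t$, which are themselves probability vectors with $\sum_{s,a}q_t(s,a)=1$. Their difference has two sources: the tail mass $(1-\gamma)\sum_{t\ge H}\gamma^t = \gamma^H$ that the truncated estimator discards, and the reweighting factor $\tfrac{1}{1-\gamma^H}$ applied to the first $H$ terms, whose excess mass also equals $\gamma^H$. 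Taking $\ell_1$ norms and using that each $q_t$ sums to one, both contributions are bounded by $\gamma^H$, giving $\|d_\pi - \mathbb{E}_{\mathcal{T}_K}[\hat{d}_{\mathcal{T}_K,H}]\|_1 \le 2\gamma^H$; this is exactly the second term of the bound and carries no dependence on $K$ or $\delta$.

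For the sampling term I would exploit the same per-timestep structure. Let $\hat{q}_t(s,a) = \tfrac{1}{K}\sum_{k=1}^K \mathbf{1}(S_{k,t}=s,A_{k,t}=a)$ be the empirical state-action distribution at step $t$; then $\hat{d}_{\mathcal{T}_K,H} - \mathbb{E}_{\mathcal{T}_K}[\hat{d}_{\mathcal{T}_K,H}] = \tfrac{1-\gamma}{1-\gamma^H}\sum_{t=0}^{H-1}\gamma^t(\hat{q}_t - q_t)$, so by the triangle inequality $\|\hat{d}_{\mathcal{T}_K,H} - \mathbb{E}_{\mathcal{T}_K}[\hat{d}_{\mathcal{T}_K,H}]\|_1 \le \max_{0\le t<H}\|\hat{q}_t - q_t\|_1$, since the weights $\tfrac{1-\gamma}{1-\gamma^H}\gamma^t$ are nonnegative and sum to one. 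Each $\hat{q}_t$ is the empirical distribution of $K$ i.i.d.\ samples over the $|\mathcal{S}||\mathcal{A}|$ state-action pairs, so I would invoke a standard $\ell_1$ concentration bound for empirical distributions (of Bretagnolle--Huber--Carol type), which yields $\|\hat{q}_t - q_t\|_1 \le \sqrt{2|\mathcal{S}||\mathcal{A}|\log(2/\delta')/K}$ with probability at least $1-\delta'$ after loosening $m\log 2 + \log(1/\delta') \le m\log(2/\delta')$ using $|\mathcal{S}||\mathcal{A}|\ge 1$. Taking $\delta' = \delta/H$ and a union bound over the $H$ timesteps controls the maximum and turns $\log(2/\delta')$ into $\log(2H/\delta)$, matching the claimed form exactly.

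The technical heart is obtaining the sampling term in precisely the stated shape, i.e.\ with $|\mathcal{S}||\mathcal{A}|$ appearing \emph{inside} the square root (rather than as a multiplicative prefactor, as a naive coordinatewise Hoeffding plus union bound over state-action pairs would produce) and with the logarithm depending on $H$ rather than on $|\mathcal{S}||\mathcal{A}|$. This is exactly what the per-timestep reduction to an $\ell_1$ deviation of a categorical empirical measure, combined with a genuine $\ell_1$ distribution-concentration bound and a union bound over the $H$ timesteps, delivers. Collecting the truncation term $2\gamma^H$ and the sampling term, and multiplying through by $L$, then gives the claimed high-probability bound.
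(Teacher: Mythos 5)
Your proof is correct and takes essentially the same route as the paper's: both reduce via the $L$-Lipschitz property to an $\ell_1$ bound, split off a deterministic $2\gamma^H$ truncation term, reduce the stochastic part to a maximum of per-timestep $\ell_1$ deviations of empirical categorical distributions, and conclude with the same type of $\ell_1$ concentration inequality (your Bretagnolle--Huber--Carol bound plays exactly the role of Lemma 16 of \citet{efroni_2020} in the paper) together with a union bound over the $H$ timesteps. The only difference is organizational: you insert the truncated estimator's mean as an intermediate point, giving a clean bias--variance split, whereas the paper arrives at the same two terms through an interleaved chain of triangle inequalities.
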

\begin{proofsketch}
Via the application of successive inequalities, it can be shown that, for any $\pi$,
\begin{align}
    \label{eq:upper_bound_proof_inequality}
    |f_\infty(\pi) &- f(\hat{d}_{\mathcal{T}_K, H})| \le \nonumber \\
    &L \Bigg( \max_{t \in \{0,\ldots, H-1\}} \left\| \hat{d}_{K,t} - d_{\pi,t} \right\|_1 + 2\gamma^H \Bigg).
\end{align}
Using a union bound and the fact that $\mathbb{P}\left(\left\| d_{\pi,t} - \hat{d}_{K,t} \right\| > \epsilon' \right) \le 2 \exp\left(-\frac{1}{2 |\mathcal{S}| |\mathcal{A}|} K (\epsilon')^2 \right)$ \citep[Lemma 16]{efroni_2020} we have that, with probability at least $1-\delta$,
$$\max_{t \in \{0,\ldots, H-1\}} \left\| \hat{d}_{K,t} - d_{\pi,t} \right\|_1 \le \sqrt{\frac{2|\mathcal{S}||\mathcal{A}| \log(2H / \delta)}{K}}.$$
Substituting the result above in \eqref{eq:upper_bound_proof_inequality} yields our result.
\end{proofsketch} 
\noindent As shown above, for fixed $H \in \mathbb{N}$, the bound becomes arbitrarily tight up to a factor of $2L\gamma^H$ as we increase the number of sampled trajectories $K$; factor $2L\gamma^H$ is due to the bias of our estimator, which exponentially vanishes as $H$ increases. However, the bound highlights a $1/\sqrt{K}$ dependence on $K$, suggesting that for low $K$ values the mismatch between $f_\infty(\pi)$ and $f(\hat{d}_{\mathcal{T}_K, H})$ can become significant. Finally, the upper bound does not get tighter as $H$ increases, for fixed $K \in \mathbb{N}$.

In summary, our results under the discounted setting show that, indeed, a mismatch between $f_K$ and $f_\infty$ exists, as showcased in Theo.~\ref{theo:discounted_finite_vs_infinite}, where we provided a GUMDP under which $f_K \neq f_\infty$, and in Theo.~\ref{theo:discounted_strongly_cvx_expected_lower_bound}, where we proved a lower bound on the deviation between $f_K$ and $f_\infty$. Finally, Theo.~\ref{theo:discounted_prob_upper_bound} further analyses how $f(\hat{d}_{\mathcal{T}_K, H})$ concentrates around $f_\infty(\pi)$ depending on the number of trajectories $K$, as well as the length of each trajectory $H$.

\subsection{The infinite-horizon average setting}
\label{sec:policy_eval_average_setting}
We now study the mismatch between the infinite and finite trials formulations of GUMDPs under the case of average occupancies. Hence, we consider estimator $\hat{d}_{\mathcal{T}_K}$ as defined in \eqref{eq:estimator_average} and $d_\pi = d_{\text{avg},\pi}$. We always consider infinite-length trajectories. We investigate which properties of the GUMDP contribute to the mismatch between infinite and finite trials.

We start by focusing our attention on unichain GUMDPs, i.e., GUMDPs such that, for all $\pi \in \Pi_\text{S}^\text{D}$, the Markov chain with transition matrix $P^\pi$ has at most one recurrent class plus a possibly empty set of transient states. To prove the next result (full proof in appendix), we assume $f$ is continuous and bounded in its domain. All objective functions in Fig.~\ref{fig:illustrative_gumdps} are continuous and bounded, however, for the case of the KL-divergence in $\mathcal{M}_{f,2}$ we need to ensure $d_\beta$ is lower-bounded to meet our assumptions.

\begin{theorem}
    \label{theo:finite_vs_infinite_unichain}
    If the average GUMDP $\mathcal{M}_f$ is unichain and $f$ is bounded and continuous in its domain, then $f_K(\pi) = f_\infty(\pi)$ for any $\pi \in \Pi_\text{S}$.
\end{theorem}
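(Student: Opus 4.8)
The plan is to show that, under the unichain assumption, the empirical average occupancy of a single infinite-length trajectory is almost surely a \emph{constant} vector equal to $d_{\text{avg},\pi}$, so that the randomness underlying $f_K$ disappears entirely. The starting observation is that for a stationary policy $\pi \in \Pi_\text{S}$ the joint process $(S_t, A_t)_{t \ge 0}$ is a time-homogeneous Markov chain on the finite space $\mathcal{S} \times \mathcal{A}$. A preliminary step is to note that the unichain property, although defined over $\Pi_\text{S}^\text{D}$, carries over to every stochastic stationary policy (if some $P^\pi$ had two disjoint closed classes, one could extract a deterministic policy keeping both closed, contradicting the hypothesis). Hence $P^\pi$ has a single recurrent class, with unique stationary distribution $\mu$ over $\mathcal{S}$ supported on the recurrent states, and the joint chain has the single recurrent class $\{(s,a) : \mu(s) > 0,\ \pi(a\mid s) > 0\}$ with unique stationary distribution $\mu(s)\pi(a\mid s)$.

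First I would invoke the ergodic theorem (strong law of large numbers) for finite unichain Markov chains: regardless of the initial distribution $p_0$, almost every sample path satisfies
\[
\lim_{H \to \infty} \frac{1}{H} \sum_{t=0}^{H-1} \mathbf{1}(S_t = s, A_t = a) = \mu(s)\,\pi(a \mid s), \qquad \forall (s,a) \in \mathcal{S} \times \mathcal{A}.
\]
The transient state--action pairs are visited only finitely often and therefore contribute zero to the long-run average, while the single recurrent class is visited at its stationary frequency; crucially, because there is exactly one recurrent class, this limit is \emph{nonrandom} and independent of where the trajectory starts. Hence, for a single trajectory $T \sim \zeta_\pi$, we obtain $\hat{d}_T(s,a) = \mu(s)\pi(a\mid s)$ almost surely.

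Next I would identify this constant with $d_{\text{avg},\pi}$. Taking the Cesàro limit inside the expectation in \eqref{eq:average_state_action_occupancy}, the unichain property gives $\lim_{H\to\infty}\frac{1}{H}\sum_{t=0}^{H-1}\mathbb{P}_{\pi,p_0}(S_t=s) = \mu(s)$ for every $p_0$, so that $d_{\text{avg},\pi}(s,a) = \mu(s)\pi(a\mid s)$, which is exactly the almost-sure limit above. Consequently $\hat{d}_T = d_{\text{avg},\pi}$ almost surely, and since $\hat{d}_{\mathcal{T}_K} = \frac{1}{K}\sum_{k=1}^K \hat{d}_{T_k}$ is an average of $K$ such (a.s.\ constant) vectors, $\hat{d}_{\mathcal{T}_K} = d_{\text{avg},\pi}$ almost surely as well. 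The conclusion follows immediately:
\[
f_K(\pi) = \mathbb{E}_{\mathcal{T}_K}\!\left[ f(\hat{d}_{\mathcal{T}_K}) \right] = \mathbb{E}_{\mathcal{T}_K}\!\left[ f(d_{\text{avg},\pi}) \right] = f(d_{\text{avg},\pi}) = f_\infty(\pi),
\]
where boundedness of $f$ ensures the expectation is finite and well-defined, and continuity guarantees measurability of $f(\hat{d}_{\mathcal{T}_K})$.

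The main obstacle is the second step: rigorously arguing that a single infinite-length trajectory has a \emph{deterministic} limiting occupancy. This is precisely where the unichain hypothesis is essential --- with a single recurrent class every trajectory is eventually funneled into the same stationary behavior, eliminating all sample-path randomness in the limit. In the multichain case this step breaks, since distinct trajectories may be absorbed into distinct recurrent classes carrying distinct stationary distributions, making $\hat{d}_T$ genuinely random; this is the structural reason a mismatch $f_K \neq f_\infty$ can reappear there. The remaining ingredients (matching the a.s.\ limit to $d_{\text{avg},\pi}$, averaging over $K$ i.i.d.\ trajectories, and passing through $f$) are routine given this key fact.
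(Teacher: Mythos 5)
Your proof is correct and follows essentially the same route as the paper's: the ergodic theorem for finite Markov chains makes each trajectory's empirical occupancy almost surely equal to the deterministic vector $\mu(s)\pi(a\mid s) = d_{\text{avg},\pi}(s,a)$, after which continuity and boundedness of $f$ let the expectation collapse to $f(d_{\text{avg},\pi}) = f_\infty(\pi)$. The only notable difference is presentational: you work directly with the joint state-action chain and explicitly justify that the unichain property (defined over $\Pi_\text{S}^\text{D}$) transfers to stochastic stationary policies --- a step the paper's proof asserts without argument --- whereas the paper first treats state-only occupancies and then extends via an auxiliary chain on $\mathcal{S} \times \mathcal{A}$.
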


\begin{proofsketch}
    Consider the case where the occupancies are only state-dependent. For any $\pi \in \Pi_\text{S}$, in a unichain GUMDP, the Markov chain with transition matrix $P^\pi$ and initial distribution $p_0$ has a unique stationary distribution $\mu_\pi \in \Delta(\mathcal{S})$. Let $Z_{H,k}$ be the random vector with components $Z_{H,k}(s) = \frac{1}{H} \sum_{t=0}^{H-1} \mathbf{1}(S_{k,t}=s)$. 
    It holds that,
    \begin{align*}
        f_K(\pi) &= \mathbb{E}_{\mathcal{T}_K}\left[ f\left( \frac{1}{K} \sum_{k=1}^K \lim_{H \rightarrow \infty } Z_{H,k} \right) \right] \\
        &\overset{\text{(a)}}{=} f(\mu_\pi) = f_\infty(\pi),
    \end{align*}
    \noindent where (a) holds because: (i) from the Ergodic theorem for Markov chains \cite{LevinPeresWilmer2006}, $Z_{H,k} \rightarrow \mu_\pi$ almost surely $\forall k \in \{1,\ldots, K\}$; (ii) since $f$ is continuous, it also holds that $f(\frac{1}{K} \sum_{k=1}^K Z_{H,k}) \rightarrow f(\mu_\pi)$ almost surely; and (iii) from (ii) and the fact that $f$ is bounded, the bounded convergence theorem \citep[Theo. 1.6.7.]{durrett_1996} allows to simplify the expectation. We then generalize the result for the case of state-action dependent occupancies by considering a Markov chain defined over $\mathcal{S} \times \mathcal{A}$.
\end{proofsketch}

\noindent The result above states that, under unichain GUMDPs with continuous and bounded $f$, all objectives are equivalent. We now address the case of multichain GUMDPs, i.e., GUMDPs that are not unichain and, therefore, the Markov chain $P^\pi$ contains two or more recurrent classes.

\begin{theorem}
    \label{theo:finite_vs_infinite_multichain}
    If the average GUMDP $\mathcal{M}_f$ is multichain, then it does not always hold that $f_K(\pi) = f_\infty(\pi)$ for arbitrary $\pi \in \Pi_\text{M}$.
\end{theorem}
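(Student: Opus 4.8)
The plan is to exhibit a single multichain average GUMDP together with a fixed policy $\pi$ for which $f_K(\pi) \neq f_\infty(\pi)$. The key structural feature I would exploit is that, in a multichain setting, a single trajectory is eventually absorbed into exactly one of several recurrent classes, so its (infinite-length) average occupancy $\hat{d}_T$ concentrates on the stationary distribution of whichever class it fell into---a \emph{random} outcome---rather than on the policy-averaged mixture $d_{\text{avg},\pi}$. This is precisely the mechanism that made the discounted example in Theorem \ref{theo:discounted_finite_vs_infinite} work, and the construction carries over almost verbatim.

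Concretely, I would reuse the topology of $\mathcal{M}_{f,3}$ (Fig.~\ref{fig:illustrative_gumdps:3}): an initial state $s_0$ from which action $a_0$ leads deterministically to an absorbing self-loop at $s_1$ and action $a_1$ leads to an absorbing self-loop at $s_2$. With two absorbing self-loops, the chain under any stationary $\pi$ has two recurrent classes ($\{s_1\}$ and $\{s_2\}$), so the GUMDP is multichain. First I would compute the expected average occupancy: since $s_0$ is transient, the $\frac{1}{H}$-averaging washes it out, giving $d_{\text{avg},\pi} = [\,0,\; \pi(a_0|s_0),\; \pi(a_1|s_0)\,]$ over $(s_0,s_1,s_2)$. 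Next I would compute $\hat{d}_T$ for a single trajectory ($K=1$): with probability $p=\pi(a_0|s_0)$ the trajectory is absorbed in $s_1$ and $\hat{d}_T = [0,1,0]$, while with probability $1-p$ it is absorbed in $s_2$ and $\hat{d}_T = [0,0,1]$. Taking $f(d)=d^\top A d$ with $A$ the identity (strictly convex, bounded and continuous on the simplex), strict convexity via Jensen's inequality gives, for any $p\in(0,1)$,
\begin{align*}
    f_\infty(\pi) &= f\bigl(p[0,1,0] + (1-p)[0,0,1]\bigr)\\
    &< p\,f([0,1,0]) + (1-p)\,f([0,0,1]) = f_{K=1}(\pi),
\end{align*}
which establishes the strict mismatch and proves the theorem.

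I do not anticipate a serious obstacle; the argument is a clean transplant of the discounted proof. The one point that genuinely requires the multichain hypothesis---and which I would verify carefully---is the computation of $\hat{d}_T$ itself: because each trajectory has \emph{infinite} length and is absorbed into a \emph{single} recurrent class, $\hat{d}_T$ becomes a deterministic function of that class (a vertex of the simplex), so its randomness is genuine across trajectories and the variance driving Jensen's gap does not vanish. In a unichain instance this randomness collapses (every trajectory visits the same recurrent class), which is exactly why Theorem \ref{theo:finite_vs_infinite_unichain} yields equality; contrasting the two cases explicitly would make the role of the multichain structure transparent. A minor care point is that the statement quantifies over $\pi \in \Pi_\text{M}$, so it suffices to pick the stationary (hence Markovian) policy with $p\in(0,1)$.
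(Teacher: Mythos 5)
Your proof is correct and takes essentially the same approach as the paper: both use $\mathcal{M}_{f,3}$ with the strictly convex quadratic $f(d)=d^\top A d$ ($A$ the identity), restrict to state-only occupancies, compute $d_{\text{avg},\pi}=[0,\,p,\,1-p]$ together with the two possible single-trajectory occupancies $[0,1,0]$ and $[0,0,1]$, and conclude via strict Jensen that $f_\infty(\pi)<f_{K=1}(\pi)$ for any $p\in(0,1)$. Your closing remarks on why absorption into distinct recurrent classes is the essential mechanism (and why it collapses in the unichain case) mirror the paper's own discussion following the theorem, so there is nothing to add.
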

\begin{proof}
    We prove the theorem above by providing a GUMDP instance where $f_K(\pi) \neq f_\infty(\pi)$. We consider the GUMDP $\mathcal{M}_{f,3}$ (Fig.~\ref{fig:illustrative_gumdps:3}), which is multichain. For simplicity, we let $f$ and the occupancies depend only on the states.
    Thus, $d = [d(s_0), d(s_1), d(s_2)]$. We let $f(d) = d^\top A d$, where $A$ is the identity matrix (hence, $f$ is a strictly convex function). It holds that
    $d_\pi = [0, \; \pi(a_0|s_0), \; \pi(a_1|s_0)].$
    On the other hand, let $K=1$. With probability $\pi(a_0|s_0)$, the trajectory gets absorbed into $s_1$ and $\tau = (s_0,s_1,s_1, \ldots)$, yielding
    $
        \hat{d}_\tau = [0, \; 1, \; 0].
    $
    With probability $\pi(a_1|s_0)$ the trajectory gets absorbed into $s_2$ and $\tau = (s_0,s_2,s_2, \ldots)$, yielding
    $
        \hat{d}_\tau = [0, \; 0, \; 1].
    $
    Let $p = \pi(a_0|s_0)$ and note that $\pi(a_1|s_0) = 1-p$. For any non-deterministic policy, i.e., $p \in (0,1)$, it holds that
    \begin{align*}
        f_\infty(\pi) &= f(d_\pi) = f([0, \; p, \; (1-p)])\\
        &= f(p [0, \; 1, \; 0] + (1-p) [0, \; 0, \; 1] )\\
        &< p f([0, \; 1, \; 0]) + (1-p) f([0, \; 0, \; 1]) = f_{K=1} (\pi),
    \end{align*}
    where the inequality holds since $f$ is strictly convex.
\end{proof}

\noindent The result above shows that, under multichain GUMDPs, $f_K(\pi) \neq f_\infty(\pi)$ in general. The intuition is that each trajectory eventually gets absorbed into one of the recurrent classes and, therefore, multiple trajectories may be required so that $\hat{d}_{\mathcal{T}_K} \approx d_\pi$ and, hence, $f(\hat{d}_{\mathcal{T}_K}) \approx f(d_\pi)$. Therefore, we now further investigate the mismatch between the finite and infinite objectives by proving a lower bound on the deviation between $f_K(\pi)$ and $f_\infty(\pi)$ while assuming $f$ is $c$-strongly convex (full proof in appendix).

\begin{theorem}
\label{theo:average_strongly_cvx_expected_lower_bound}
Let $\mathcal{M}_f$ be an average GUMDP with $c$-strongly convex $f$ and $K \in \mathbb{N}$ be the number of sampled trajectories. Consider also the Markov chain with state-space $\mathcal{S}$, transition matrix $P^\pi$ and initial states distribution $p_0$. Let $\mathcal{R}$ be set of all recurrent states of the Markov chain and $\mathcal{R}_1, \ldots, \mathcal{R}_L$ the sets of recurrent classes, each associated with stationary distribution $\mu_l$. Then, for any policy $\pi \in \Pi_\text{S}$, it holds that
    \begin{align*}
    f_{K}(\pi) &- f_\infty(\pi) \ge \\
    &\frac{c}{2K} \sum_{l=1}^L \text{Var}_{B \sim \text{Ber}(\alpha_l) } \left[ B \right] \sum_{s \in \mathcal{R}_l} \sum_{a \in \mathcal{A}} \pi(a|s)^2  \mu_l(s)^2,
    \end{align*}
    where $B \sim \text{Ber}\left(p\right)$ denotes that $B$ is distributed according to a Bernoulli distribution such that $\mathbb{P}(B=1)=p$ and
    $$\alpha_{l} = \lim_{t \rightarrow \infty} \mathbb{P}(S_t \in \mathcal{R}_{l} | S_0 \sim p_0)$$
    is the probability of absorption to $\mathcal{R}_{l}$ when $S_0 \sim p_0$.
\end{theorem}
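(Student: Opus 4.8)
The plan is to mirror the structure of the proof of Theorem~\ref{theo:discounted_strongly_cvx_expected_lower_bound}, opening with the same strong-convexity inequality and then computing the per-trajectory variance explicitly using the multichain structure. Since $f$ is $c$-strongly convex, applying the inequality $\mathbb{E}[f(X)] \ge f(\mathbb{E}[X]) + \frac{c}{2}\mathbb{E}[\|X - \mathbb{E}[X]\|_2^2]$ with $X = \hat{d}_{\mathcal{T}_K}$ yields
$$f_K(\pi) - f_\infty(\pi) \ge \frac{c}{2}\,\mathbb{E}_{\mathcal{T}_K}\!\left[\|\hat{d}_{\mathcal{T}_K} - d_\pi\|_2^2\right] = \frac{c}{2K}\sum_{s,a}\text{Var}_{T \sim \zeta_\pi}\!\left[\hat{d}_T(s,a)\right],$$
where the final equality uses the independence of the $K$ trajectories together with $\hat{d}_{\mathcal{T}_K}(s,a) = \frac{1}{K}\sum_k \hat{d}_{T_k}(s,a)$, exactly as in the discounted case. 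It then remains to evaluate the single-trajectory variance of the average occupancy $\hat{d}_T(s,a) = \lim_{H\to\infty}\frac{1}{H}\sum_{t=0}^{H-1}\mathbf{1}(S_t=s, A_t=a)$.

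The core of the argument is to characterize $\hat{d}_T(s,a)$ through the recurrent class into which the trajectory is absorbed. In a finite Markov chain, a trajectory with $S_0 \sim p_0$ almost surely spends only finitely many steps among the transient states before entering exactly one recurrent class $\mathcal{R}_l$; write $B_l = \mathbf{1}(\text{trajectory absorbed into }\mathcal{R}_l)$, so that $B_l \sim \text{Ber}(\alpha_l)$. Because the transient excursion is a.s. finite, its contribution to the Cesàro average vanishes in the limit, so every transient state satisfies $\hat{d}_T(s,a)=0$ a.s. Conditioned on $B_l=1$, the Ergodic theorem applied to $\mathcal{R}_l$ (viewed as a chain on $\mathcal{S}\times\mathcal{A}$ with stationary distribution $(s,a)\mapsto\mu_l(s)\pi(a|s)$, as in the proof of Theorem~\ref{theo:finite_vs_infinite_unichain}) gives $\hat{d}_T(s,a)=\mu_l(s)\pi(a|s)$ a.s. for $s\in\mathcal{R}_l$. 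Since the recurrent classes are disjoint, each recurrent state $s$ lies in a unique $\mathcal{R}_l$, and we obtain the representation $\hat{d}_T(s,a)=\mu_l(s)\pi(a|s)\,B_l$.

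With this representation the variance is immediate: for $s\in\mathcal{R}_l$,
$$\text{Var}_{T\sim\zeta_\pi}\!\left[\hat{d}_T(s,a)\right] = \mu_l(s)^2\,\pi(a|s)^2\,\text{Var}_{B\sim\text{Ber}(\alpha_l)}[B],$$
while transient states contribute nothing. Summing over all state-action pairs, grouped by recurrent class, and substituting into the lower bound from the first paragraph yields exactly the claimed inequality. Note that the variance term is computed \emph{exactly}; the single $\ge$ in the statement originates solely from the strong-convexity step.

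I expect the main obstacle to be making the single-trajectory characterization fully rigorous, specifically justifying that the initial transient excursion does not affect the limiting Cesàro average (so that $\hat{d}_T$ depends only on the absorbing class and its $\mu_l$) and invoking the Ergodic theorem on the induced state-action chain restricted to each recurrent class. The strong-convexity step and the variance bookkeeping are routine given the discounted analogue; the novelty lies entirely in replacing the discounted-return variance by the Bernoulli absorption variance dictated by the multichain decomposition.
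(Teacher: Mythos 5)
Your proposal is correct and follows essentially the same route as the paper's proof: the strong-convexity inequality, reduction to the single-trajectory variance via independence, the almost-sure representation $\hat{d}_T(s,a) = \mu_l(s)\pi(a|s)B_l$ obtained from absorption into a recurrent class plus the ergodic theorem on the induced state-action chain, and the Bernoulli variance bookkeeping. The only difference is organizational: the paper factors your ``core of the argument'' into two standalone lemmas (one characterizing the limiting state occupancy $Y_s$ of a general finite Markov chain, one lifting it to state-action pairs via an extended chain on $\mathcal{S}\times\mathcal{A}$), whereas you inline the same reasoning.
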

\begin{proofsketch}
    Let $\hat{d}_T(s,a) = \lim_{H \rightarrow \infty} Z_H(s,a)$ where $Z_H(s,a) = \frac{1}{H} \sum_{t=0}^{H-1} \mathbf{1}(S_{t}=s, A_{t} = a)$. Under a multichain GUMDP, $Z_H(s,a) \rightarrow Y_s \pi(a|s)$ almost surely where $Y_s$ is a random variable such that: (i) if $s$ is transient, $\mathbb{P}(Y_s = 0) = 1$; and (ii) if $s$ is recurrent, $Y_s=\mu_{l(s)}(s)$ with probability $\alpha_{l(s)}$ and $Y_s=0$ with probability $1-\alpha_{l(s)}$, where $l(s)$ denotes the index of the recurrent class to which state $s$ belongs. Then,
    \begin{align*}
        f_{K}(\pi) &- f_\infty(\pi) = \mathbb{E}_{\mathcal{T}_K}\left[ f(\hat{d}_{\mathcal{T}_K}) \right] - f\left( \mathbb{E}_{\mathcal{T}_K}\left[ \hat{d}_{\mathcal{T}_K} \right] \right) \\
        &\overset{\text{(a)}}{\ge} \frac{c}{2} \mathbb{E}_{\mathcal{T}_K}\left[ \left\| \hat{d}_{\mathcal{T}_K} - d_\pi \right\|_2^2 \right]\\
        &= \frac{c}{2K} \sum_{s \in \mathcal{S}, a \in \mathcal{A}} \textup{Var}_{T \sim \zeta_\pi}\left[ \hat{d}_{T}(s,a) \right]\\
        &\overset{\text{(b)}}{=} \frac{c}{2K} \sum_{s \in \mathcal{S}, a \in \mathcal{A}} \pi(a|s)^2 \textup{Var}\left[Y_{s}\right]\\
        &\overset{\text{(c)}}{=} \frac{c}{2K} \sum_{l=1}^L \text{Var}_{B \sim \text{Ber}(\alpha_l) } \left[ B \right] \sum_{\substack{s \in \mathcal{R}_l \\ a \in \mathcal{A}}} \pi(a|s)^2  \mu_l(s)^2
    \end{align*}
    where: (a) follows from the $c$-strongly convex assumption; in (b) we used the fact that $Z_H(s,a) \rightarrow Y_s \pi(a|s)$ almost surely and the bounded convergence theorem \citep[Theo. 1.6.7.]{durrett_1996} to simplify the variance term; and (c) follows from rewriting $Y_s$ using a Bernoulli random variable, noting that $\textup{Var}\left(Y_{s}\right) = 0$ for transient states, and simplifying the resulting expression.
\end{proofsketch}

\noindent Intuitively, the result above shows that the gap between $f_K(\pi)$ and $f_\infty(\pi)$ can be lower bounded by a weighted sum of the variances of the probabilities of getting absorbed into each of the recurrent classes. Thus, we expect the gap to exist whenever the sampled trajectories can get absorbed into different recurrent classes. Also, we highlight the $1/K$ dependence on the number of sampled trajectories. Finally, we note that, in the case of unichain GUMDPs, since the unique recurrent class is reached with probability one, the lower bound above equals zero, agreeing with Theo.~\ref{theo:finite_vs_infinite_unichain}.

\begin{figure}[t]
    \centering
    \includegraphics[width=0.99\columnwidth]{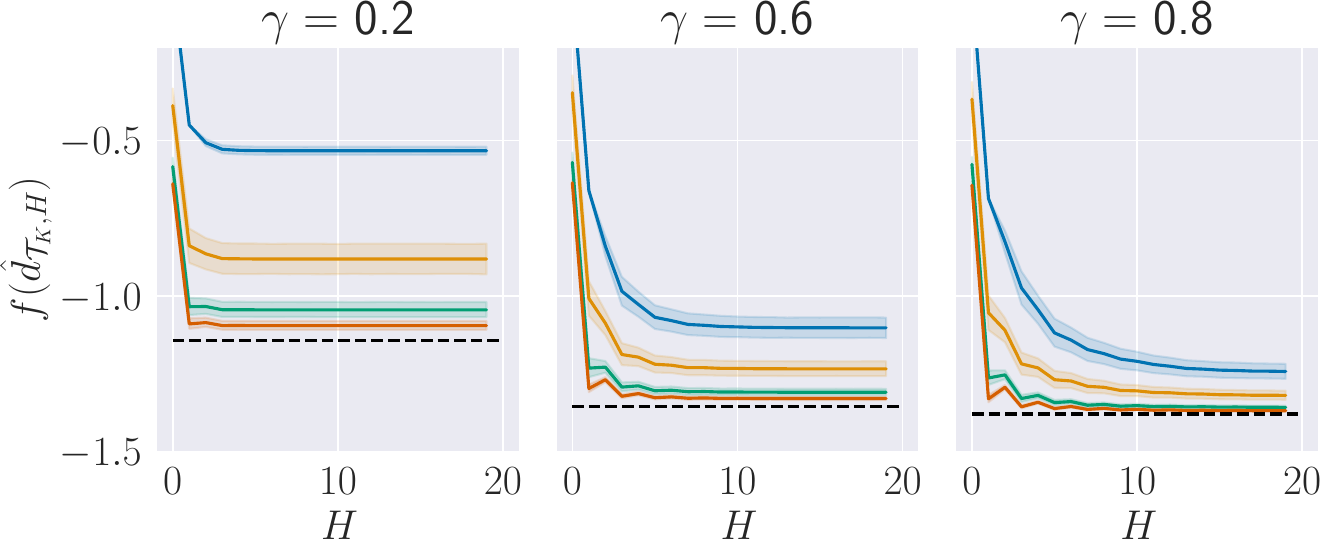}
    \includegraphics[height=0.4cm]{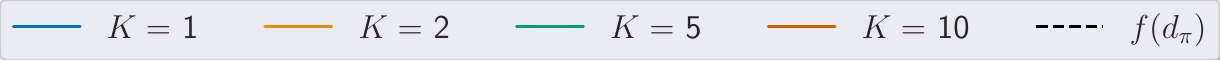}
    \caption{Empirical study of $f(\hat{d}_{\mathcal{T}_K,H})$ for different $K$, $H$ and $\gamma$ values under $\mathcal{M}_{f,1}$.}
    \label{fig:plot_dhat_vs_dpi_Ks_gammas_entropy}
\end{figure}

\begin{figure*}[t]
    \centering
    \begin{subfigure}[b]{0.49\textwidth}
        \centering
        \includegraphics[width=0.99\textwidth]{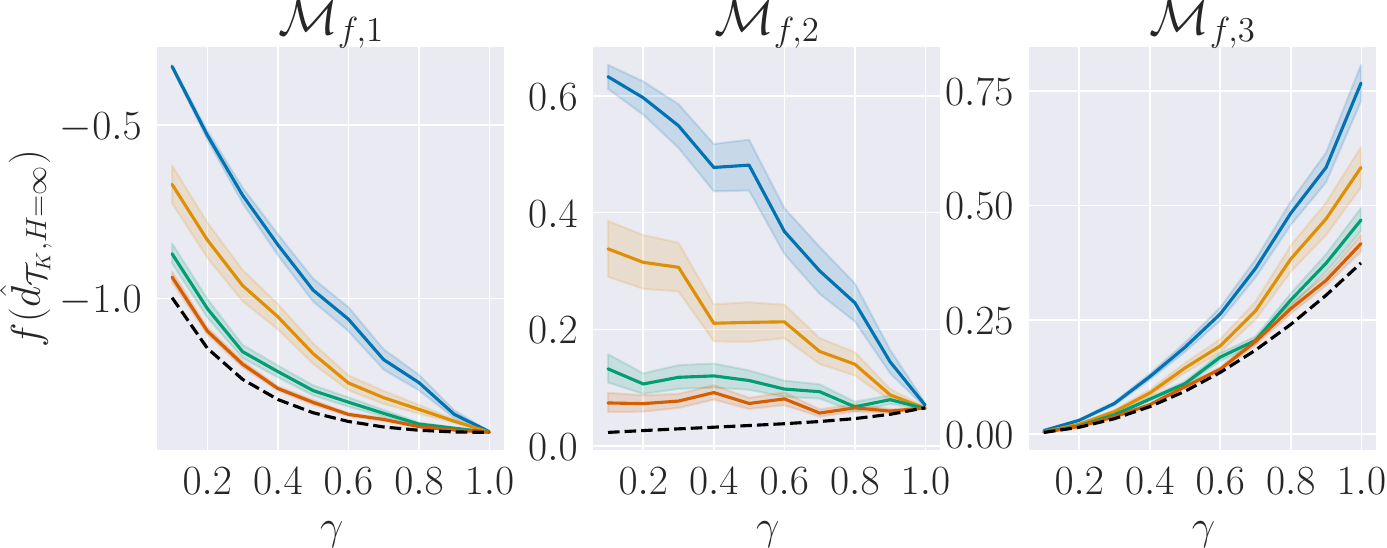}
        \caption{Standard transition matrices.}
        \label{fig:plot_dhat_vs_dpi_H=infty}
    \end{subfigure}
    \begin{subfigure}[b]{0.49\textwidth}
        \centering
        \includegraphics[width=0.99\textwidth]{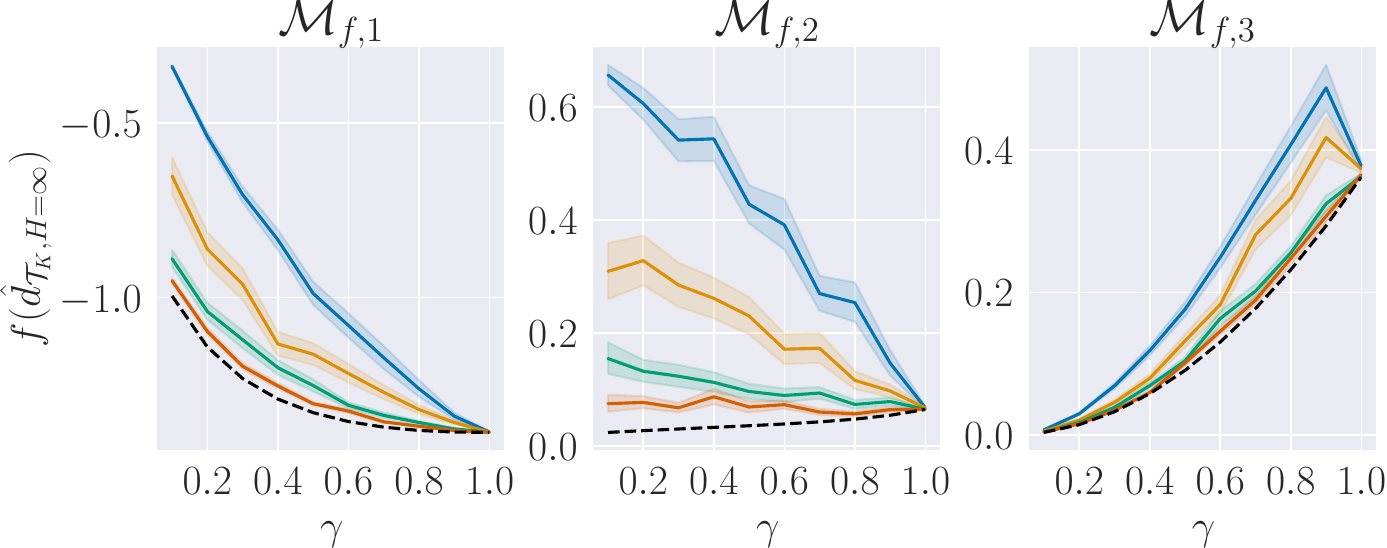}
        \caption{Noisy transition matrices.}
        \label{fig:plot_dhat_vs_dpi_H=infty_unichain}
    \end{subfigure}
    \hfill
    \begin{subfigure}[b]{0.99\textwidth}
        \centering
        \includegraphics[height=0.4cm]{imgs/legend.pdf}
    \end{subfigure}
    \caption{Empirical study of $f(\hat{d}_{\mathcal{T}_K,H=\infty})$ for different $K$ and $\gamma$ values, with $H=\infty$.} 
\end{figure*}

\subsection{Empirical results}
\label{sec:policy_eval_empirical_results}
We now empirically assess the impact of different parameters in the mismatch between $f_{K,H}(\pi)$ and $f_\infty(\pi)$ for arbitrary fixed $\pi$. Under the discounted setting, we consider $\hat{d}_{\mathcal{T}_K, H}$ as defined in \eqref{eq:estimator_discounted_truncated}. We also use $\hat{d}_{\mathcal{T}_K, H}$ to study the average setting by letting $H \rightarrow \infty$ and $\gamma \rightarrow 1$, since $\lim_{H \rightarrow \infty} \lim_{\gamma \rightarrow 1} \eqref{eq:estimator_discounted_truncated} = \eqref{eq:estimator_average}$. We consider the GUMDPs depicted in Fig.~\ref{fig:illustrative_gumdps}. Under $\mathcal{M}_{f,1}$, $\pi(\text{left}| s_0) = \pi(\text{right}| s_0) = 0.5$, and $\pi(\text{right}|s_1) = \pi(\text{left}|s_2) = 1$; for $\mathcal{M}_{f,2}$ and $\mathcal{M}_{f,3}$, $\pi$ is uniformly random. We consider 100 random seeds and report 95\% bootstrapped confidence intervals (shaded areas in plots). Complete experimental results are in the appendix. The code used can be found in the following \href{https://github.com/PPSantos/gumdps-number-of-trials/tree/master}{repository}.

\paragraph{Discounted setting ($\gamma < 1$)}
In Fig.~\ref{fig:plot_dhat_vs_dpi_Ks_gammas_entropy}, a set of plots displays the average finite trials objective function, $f(\hat{d}_{\mathcal{T}_K,H})$, in comparison to the infinite trials objective $f(d_\pi)$, under GUMDP $\mathcal{M}_{f,1}$. As can be seen, the results highlight that $f(\hat{d}_{\mathcal{T}_K,H})$ can significantly differ from $f(d_\pi)$. Also, for the displayed $\gamma$ values, both the trajectories' length $H$ and the number of trajectories $K$ need to be sufficiently high for the mismatch between $f(\hat{d}_{\mathcal{T}_K,H})$ and $f(d_\pi)$ to fade away. This is suggested by Theo.~\ref{theo:discounted_prob_upper_bound} since both $K$ and $H$ contribute to the tightness of the upper bound. We display the results for estimator $\hat{d}_{\mathcal{T}_K,H}$ under all GUMDPs in the appendix. 

In Fig.~\ref{fig:plot_dhat_vs_dpi_H=infty}, we display a set of plots comparing $f(\hat{d}_{\mathcal{T}_K,H=\infty})$ and $f(d_\pi)$ for the different GUMDPs, under infinite-length trajectories. As can be seen when $\gamma < 1$, even for $H= \infty$, $f(\hat{d}_{\mathcal{T}_K,H=\infty})$ can significantly differ from $f(d_\pi)$ if the number of sampled trajectories is low. Such results support the fact that $K$ plays a key role in regulating the mismatch between the different objectives for discounted GUMDPs and are in line with our theoretical analysis. 

\paragraph{Average setting ($\gamma \rightarrow 1$, $H \rightarrow \infty$)}
As can be seen in Fig.~\ref{fig:plot_dhat_vs_dpi_H=infty}, under $\mathcal{M}_{f,1}$ and $\mathcal{M}_{f,2}$, we have that the difference between $f(\hat{d}_{\mathcal{T}_K,H=\infty})$ and $f(d_\pi)$ fades away when $\gamma \rightarrow 1$. However, this is not the case for $\mathcal{M}_{f,3}$. Under $\mathcal{M}_{f,1}$ and $\mathcal{M}_{f,2}$, we obtain such results because, given the choice of policies, the induced Markov chains have a single recurrent class. Hence, since there exists a unique stationary distribution, $\hat{d}_{\mathcal{T}_K,H=\infty}$ converges to $d_\pi$ irrespective of $K$ and the different objectives become equivalent. However, under $\mathcal{M}_{f,3}$ and for the chosen policy, the induced Markov chain has two recurrent classes and, hence, there exist multiple stationary distributions. Thus, a low number of trajectories ($K$ value) does not suffice to evaluate the non-linear objective.

Our results are in line with the theoretical results from the previous section. We note that all GUMDPs in Fig.~\ref{fig:illustrative_gumdps} are multichain. Hence, from Theo.~\ref{theo:finite_vs_infinite_multichain}, it not always holds that $f(\hat{d}_{\mathcal{T}_K,H=\infty}) = f(d_\pi)$ in general. Our results for $\mathcal{M}_{f,3}$ exemplify that such a mismatch can occur. Naturally, being multichain does not imply that $f(\hat{d}_{\mathcal{T}_K,H=\infty}) \neq f(d_\pi)$ for: (i) all policies; or (ii) any policy. For (i), take our results under $\mathcal{M}_{f,1}$ as an example. For (ii), consider $\mathcal{M}_{f,2}$. For all policies except $\pi(\text{left}|s_1) = 1$ (zero otherwise) and $\pi(\text{right}|s_2) = 1$ (zero otherwise), the induced Markov chain has a single recurrent class and, hence, $f(\hat{d}_{\mathcal{T}_K,H=\infty}) = f(d_\pi)$. However, under the policy just described, even though the induced Markov chain has two recurrent classes, one of them is unreachable given the distribution of initial states and, hence, it also holds that $f(\hat{d}_{\mathcal{T}_K,H=\infty}) = f(d_\pi)$. Thus, for $\mathcal{M}_{f,2}$, the different objectives are equivalent for all policies.

\paragraph{Average setting ($\gamma \rightarrow 1$, $H \rightarrow \infty$) with noisy transitions}
We consider the GUMDPs in Fig.~\ref{fig:illustrative_gumdps}, but add a small amount of noise to the transition matrices so that there is a non-zero probability of transitioning to any other arbitrary state. All GUMDPs now become unichain. In Fig.~\ref{fig:plot_dhat_vs_dpi_H=infty_unichain}, we display the results obtained for different $\gamma$ values with trajectories of infinite length. As can be seen, for the discounted setting ($\gamma < 1$), it continues to exist a mismatch between the objectives. However, under the average setting ($\gamma \rightarrow 1$), the gap between the objectives fades away for all GUMDPs. Such results are in line with Theo.~\ref{theo:finite_vs_infinite_unichain}, where we showed that the different objectives are equivalent for unichain GUMDPs.

\begin{table*}[t]
\centering
\caption{Are the infinite and finite trials formulations equivalent? (\cmark = yes, \xmark = no)}
\label{tab:introduction_table_objective}
\begin{tabular}{N N N N }
\toprule
     & \multicolumn{1}{c}{\textbf{Discounted setting}} & \multicolumn{2}{c}{\textbf{Average setting}} \\ \cmidrule(ll){3-4}
\multicolumn{1}{c}{\textbf{Objective function ($f$)}} &  & \multicolumn{1}{c}{\textbf{Unichain}} & \multicolumn{1}{c}{\textbf{Multichain}} \\ \cmidrule(ll){1-1} \cmidrule(ll){2-2} \cmidrule(ll){3-3} \cmidrule(ll){4-4}
\multicolumn{1}{c}{\textbf{Linear}} & \multicolumn{1}{c}{\cmark \; [Remark~\ref{remark:linear_mdps}]} &  \multicolumn{1}{c}{\cmark \; [Remark~\ref{remark:linear_mdps}]}  &  \multicolumn{1}{c}{\cmark \; [Remark~\ref{remark:linear_mdps}]}  \\ \cmidrule(ll){1-1} \cmidrule(ll){2-2} \cmidrule(ll){3-3} \cmidrule(ll){4-4}
\multicolumn{1}{c}{\textbf{Non-linear}} & \multicolumn{1}{c}{\xmark \;[Theo. \ref{theo:discounted_finite_vs_infinite}]} &  \multicolumn{1}{c}{\cmark \; [Theo. \ref{theo:finite_vs_infinite_unichain}]}  &  \multicolumn{1}{c}{\xmark \; [Theo. \ref{theo:finite_vs_infinite_multichain}]}  \\ \cmidrule(ll){1-1} \cmidrule(ll){2-2} \cmidrule(ll){3-3} \cmidrule(ll){4-4}
\end{tabular}
\end{table*}

\section{Related Work}
To the authors' knowledge, \citet[Chap. 7]{derman_1970} was the first to highlight the mismatch between objectives that depend on empirical average occupancies in comparison to objectives that depend on expected average occupancies. The author notes that minimizing an objective function that depends on empirical average occupancies is distinct from minimizing an objective function that depends on expected average occupancies. However, the author does not provide concrete examples of this mismatch, nor characterizes the mismatch depending on different properties of the underlying GUMDP and the number of sampled trajectories. 

Other works consider the case of empirical occupancies instead of expected occupancies. As an example, \citet{ross_varadarajan_1989,ross_varadarajan_1987,ross_varadarajan_1991,haviv_1996} study average MDPs with sample-path constraints where the empirical cost induced by a trajectory needs to be below a certain threshold with probability one. 
Other works further characterize the convergence of empirical average occupancies in MDPs \cite{altman_1994,mannor_2005,tracol_2009}.

More recently, \citet{mutti_2023} show that finite-horizon GUMDPs implicitly make an infinite trials assumption, i.e., finite-horizon GUMDPs implicitly assume the performance of a given policy is evaluated under an infinite number of episodes of interaction with the environment. The authors introduce a modification of GUMDPs where the objective function depends on the empirical state-action occupancy induced over a finite number of episodes. Under the introduced finite trials formulation, the authors show that the class of Markovian policies does not suffice, in general, to achieve optimality and that non-Markovian policies may need to be considered.

In our work, we extend the work developed by \cite{mutti_2023} by considering the infinite-horizon setting. We start by showing that, under the infinite-horizon, there still exists, in general, a mismatch between the finite trials objective and the infinite trials objective. After establishing the existence of the mismatch for both the discounted and average settings, we further provide upper and lower bounds that allow to better understand how such a mismatch depends on the number of sampled trajectories. We also elaborate on how our bounds and the mismatch between the finite and infinite trials formulations depend on certain properties of the underlying GUMDP. We are the first work to provide lower bounds on the mismatch between the finite and infinite trials objectives, as \citet{mutti_2023} only prove upper bounds in the context of policy optimization. Also, while \citet{mutti_2023} show that, in general, there exists a mismatch between the finite and infinite trials formulations under finite-horizon settings, our Theo.~\ref{theo:finite_vs_infinite_unichain} states that under certain conditions (the GUMDP being unichain), there exists no mismatch. Hence, the nature of our results and those in \citet{mutti_2023} greatly differ.

\section{Conclusion}
\label{sec:conclusion}
In this work, we provided clear evidence, both theoretically and empirically, that \textit{the number of trials matters in infinite-horizon GUMDPs}. First, under the discounted setting, we showed that a mismatch between the finite and infinite trials formulations exists in general. We also provided upper and lower bounds to quantify such mismatch as a function of the number of sampled trajectories. Second, under the average setting, we showed how the structure of the underlying GUMDP influences the mismatch between the finite and infinite trials formulations: (i) for unichain GUMDPs, the infinite and finite trials formulations are equivalent; and (ii) for multichain GUMDPs there is, in general, a mismatch between the different objectives. Finally, we provided a set of empirical results to support our theoretical claims. We summarize our results in Table~\ref{tab:introduction_table_objective}.

While we focused on the case of policy evaluation, we expect the mismatch between the finite and infinite trials to also impact policy optimization. For example, under a generalized policy iteration scheme \cite{sutton_1998}, we expect the mismatch between the infinite and finite trials formulations at the policy evaluation stages to impact the resulting optimal policies. Future work should study policy optimization in the finite trials regime. 

\section*{Acknowledgements}
This work was supported by Portuguese national funds through the Portuguese Fundação para a Ciência e a Tecnologia (FCT) under projects UIDB/50021/2020 (INESC-ID multi-annual funding), PTDC/CCI-COM/5060/2021 (RELEvaNT), and AI-PackBot (project number 14935, LISBOA2030-FEDER-00854700). Pedro P. Santos acknowledges the FCT PhD grant 2021.04684.BD.

\section*{Impact Statement}
This paper presents work whose goal is to advance the field of 
Machine Learning. There are many potential societal consequences 
of our work, none of which we feel must be specifically highlighted here.

\bibliography{main}

\begin{thebibliography}{36}
\providecommand{\natexlab}[1]{#1}
\providecommand{\url}[1]{\texttt{#1}}
\expandafter\ifx\csname urlstyle\endcsname\relax
  \providecommand{\doi}[1]{doi: #1}\else
  \providecommand{\doi}{doi: \begingroup \urlstyle{rm}\Url}\fi

\bibitem[Abel et~al.(2022)Abel, Dabney, Harutyunyan, Ho, Littman, Precup, and Singh]{abel2022expressivity}
Abel, D., Dabney, W., Harutyunyan, A., Ho, M.~K., Littman, M.~L., Precup, D., and Singh, S.
\newblock On the expressivity of {Markov} reward, 2022.

\bibitem[Achiam et~al.(2018)Achiam, Edwards, Amodei, and Abbeel]{achiam2018variational}
Achiam, J., Edwards, H., Amodei, D., and Abbeel, P.
\newblock Variational option discovery algorithms, 2018.

\bibitem[Altman(1999)]{altman_1999}
Altman, E.
\newblock \emph{Constrained {Markov} Decision Processes}.
\newblock Chapman and Hall, 1999.

\bibitem[Altman \& Zeitouni(1994)Altman and Zeitouni]{altman_1994}
Altman, E. and Zeitouni, O.
\newblock Rate of convergence of empirical measures and costs in controlled {Markov} chains and transient optimality.
\newblock \emph{Mathematics of Operations Research}, 19\penalty0 (4):\penalty0 955--974, 1994.

\bibitem[Benito(1982)]{Benito1982CalculatingTV}
Benito, F.
\newblock Calculating the variance in {Markov}-processes with random reward.
\newblock \emph{Trabajos de Estadistica y de Investigacion Operativa}, 33:\penalty0 73--85, 1982.

\bibitem[Chow et~al.(1971)Chow, Robbins, and Siegmund]{chow1971great}
Chow, Y., Robbins, H., and Siegmund, D.
\newblock \emph{Great Expectations: The Theory of Optimal Stopping}.
\newblock 1971.
\newblock ISBN 9780395053140.

\bibitem[Derman(1970)]{derman_1970}
Derman, C.
\newblock \emph{Finite State Markovian Decision Processes}.
\newblock Academic Press, Inc., USA, 1970.
\newblock ISBN 0122092503.

\bibitem[Durrett(2019)]{durrett_1996}
Durrett, R.
\newblock \emph{Probability: theory and examples}.
\newblock Cambridge University Press, 2019.

\bibitem[Dvoretzky et~al.(1952)Dvoretzky, Kiefer, and Wolfowitz]{dvoretzky_1952}
Dvoretzky, A., Kiefer, J., and Wolfowitz, J.
\newblock The inventory problem: Ii. case of unknown distributions of demand.
\newblock \emph{Econometrica}, 20\penalty0 (3):\penalty0 450--466, 1952.

\bibitem[Efroni et~al.(2020)Efroni, Mannor, and Pirotta]{efroni_2020}
Efroni, Y., Mannor, S., and Pirotta, M.
\newblock Exploration-exploitation in constrained mdps.
\newblock \emph{CoRR}, abs/2003.02189, 2020.

\bibitem[Eysenbach et~al.(2018)Eysenbach, Gupta, Ibarz, and Levine]{eysenbach2018diversity}
Eysenbach, B., Gupta, A., Ibarz, J., and Levine, S.
\newblock Diversity is all you need: Learning skills without a reward function, 2018.

\bibitem[Garc{{\'i}}a et~al.(2015)Garc{{\'i}}a, Fern, and o~Fern{{\'a}}ndez]{garcia_2015}
Garc{{\'i}}a, J., Fern, and o~Fern{{\'a}}ndez.
\newblock A comprehensive survey on safe reinforcement learning.
\newblock \emph{Journal of Machine Learning Research}, 16\penalty0 (42):\penalty0 1437--1480, 2015.

\bibitem[Geist et~al.(2022)Geist, Pérolat, Laurière, Elie, Perrin, Bachem, Munos, and Pietquin]{geist2022concave}
Geist, M., Pérolat, J., Laurière, M., Elie, R., Perrin, S., Bachem, O., Munos, R., and Pietquin, O.
\newblock Concave utility reinforcement learning: the mean-field game viewpoint, 2022.

\bibitem[Haviv(1996)]{haviv_1996}
Haviv, M.
\newblock On constrained {Markov} decision processes.
\newblock \emph{Operations Research Letters}, 19\penalty0 (1):\penalty0 25--28, 1996.

\bibitem[Hazan et~al.(2019)Hazan, Kakade, Singh, and Van~Soest]{hazan_2019}
Hazan, E., Kakade, S., Singh, K., and Van~Soest, A.
\newblock Provably efficient maximum entropy exploration.
\newblock In \emph{Proceedings of the 36th International Conference on Machine Learning}, volume~97 of \emph{Proceedings of Machine Learning Research}, pp.\  2681--2691, 2019.

\bibitem[Hussein et~al.(2017)Hussein, Gaber, Elyan, and Jayne]{hussein_2017}
Hussein, A., Gaber, M.~M., Elyan, E., and Jayne, C.
\newblock Imitation learning: A survey of learning methods.
\newblock \emph{ACM Comput. Surv.}, 50\penalty0 (2), apr 2017.

\bibitem[Kallenberg(1983)]{kallenberg_1983}
Kallenberg, L.
\newblock \emph{Linear programming and finite Markovian control problems}, volume 148.
\newblock 01 1983.

\bibitem[Levin et~al.(2006)Levin, Peres, and Wilmer]{LevinPeresWilmer2006}
Levin, D.~A., Peres, Y., and Wilmer, E.~L.
\newblock \emph{{Markov chains and mixing times}}.
\newblock American Mathematical Society, 2006.

\bibitem[Lillicrap et~al.(2016)Lillicrap, Hunt, Pritzel, Heess, Erez, Tassa, Silver, and Wierstra]{lillicrap_2016}
Lillicrap, T., Hunt, J., Pritzel, A., Heess, N., Erez, T., Tassa, Y., Silver, D., and Wierstra, D.
\newblock Continuous control with deep reinforcement learning.
\newblock \emph{CoRR}, abs/1509.02971, 2016.

\bibitem[Mannor \& Tsitsiklis(2005)Mannor and Tsitsiklis]{mannor_2005}
Mannor, S. and Tsitsiklis, J.~N.
\newblock On the empirical state-action frequencies in {Markov} decision processes under general policies.
\newblock \emph{Mathematics of Operations Research}, 30\penalty0 (3):\penalty0 545--561, 2005.

\bibitem[Mnih et~al.(2015)Mnih, Kavukcuoglu, Silver, Graves, Antonoglou, Wierstra, and Riedmiller]{mnih_2015}
Mnih, V., Kavukcuoglu, K., Silver, D., Graves, A., Antonoglou, I., Wierstra, D., and Riedmiller, M.
\newblock Playing atari with deep reinforcement learning.
\newblock \emph{Nature}, 518\penalty0 (7540):\penalty0 529--533, 2015.

\bibitem[Mutti et~al.(2023)Mutti, Santi, Bartolomeis, and Restelli]{mutti_2023}
Mutti, M., Santi, R.~D., Bartolomeis, P.~D., and Restelli, M.
\newblock Convex reinforcement learning in finite trials.
\newblock \emph{Journal of Machine Learning Research}, 24\penalty0 (250):\penalty0 1--42, 2023.

\bibitem[Osa et~al.(2018)Osa, Pajarinen, Neumann, Bagnell, Abbeel, and Peters]{Osa_2018}
Osa, T., Pajarinen, J., Neumann, G., Bagnell, J.~A., Abbeel, P., and Peters, J.
\newblock An algorithmic perspective on imitation learning.
\newblock \emph{Foundations and Trends in Robotics}, 7\penalty0 (1–2):\penalty0 1–179, 2018.
\newblock ISSN 1935-8261.

\bibitem[Puterman(2014)]{puterman2014markov}
Puterman, M.~L.
\newblock \emph{Markov decision processes: discrete stochastic dynamic programming}.
\newblock John Wiley \& Sons, 2014.

\bibitem[Ross \& Varadarajan(1987)Ross and Varadarajan]{ross_varadarajan_1987}
Ross, K. and Varadarajan, R.
\newblock Sample path theory for time-average {Markov} decision processes.
\newblock \emph{Proceedings of the IEEE Conference on Decision and Control}, pp.\  2264--2269, 1987.

\bibitem[Ross \& Varadarajan(1989)Ross and Varadarajan]{ross_varadarajan_1989}
Ross, K.~W. and Varadarajan, R.
\newblock Markov decision processes with sample path constraints: The communicating case.
\newblock \emph{Oper. Res.}, 37\penalty0 (5):\penalty0 780–790, 1989.

\bibitem[Ross \& Varadarajan(1991)Ross and Varadarajan]{ross_varadarajan_1991}
Ross, K.~W. and Varadarajan, R.
\newblock Multichain {Markov} decision processes with a sample path constraint: A decomposition approach.
\newblock \emph{Mathematics of Operations Research}, 16\penalty0 (1):\penalty0 195--207, 1991.

\bibitem[Silver et~al.(2017)Silver, Schrittwieser, Simonyan, Antonoglou, Huang, Guez, Hubert, Baker, Lai, Bolton, Chen, Lillicrap, Hui, Sifre, van~den Driessche, Graepel, and Hassabis]{silver_2017}
Silver, D., Schrittwieser, J., Simonyan, K., Antonoglou, I., Huang, A., Guez, A., Hubert, T., Baker, L., Lai, M., Bolton, A., Chen, Y., Lillicrap, T., Hui, F., Sifre, L., van~den Driessche, G., Graepel, T., and Hassabis, D.
\newblock Mastering the game of {Go} without human knowledge.
\newblock \emph{Nature}, 550\penalty0 (7676):\penalty0 354--359, 2017.

\bibitem[Sita{\v{r}}(2006)]{sitavr_2006}
Sita{\v{r}}, M.
\newblock Mean and variance in {Markov} decision processes.
\newblock 2006.

\bibitem[Sobel(1982)]{sobel_1982}
Sobel, M.~J.
\newblock The variance of discounted {Markov} decision processes.
\newblock \emph{Journal of Applied Probability}, 19\penalty0 (4):\penalty0 794--802, 1982.

\bibitem[Stidham(1978)]{stidham_1978}
Stidham, S.
\newblock Socially and individually optimal control of arrivals to a {GI/M/1} queue.
\newblock \emph{Management Science}, 24\penalty0 (15):\penalty0 1598--1610, 1978.

\bibitem[Sutton \& Barto(2018)Sutton and Barto]{sutton_1998}
Sutton, R.~S. and Barto, A.~G.
\newblock \emph{Reinforcement Learning: An Introduction}.
\newblock The MIT Press, second edition, 2018.

\bibitem[Tracol(2009)]{tracol_2009}
Tracol, M.
\newblock Fast convergence to state-action frequency polytopes for {MDPs}.
\newblock \emph{Operations Research Letters}, 37\penalty0 (2):\penalty0 123--126, 2009.

\bibitem[White(1988)]{white_1988}
White, D.~J.
\newblock Further real applications of {Markov} decision processes.
\newblock \emph{Interfaces}, 18\penalty0 (5):\penalty0 55--61, 1988.

\bibitem[Zahavy et~al.(2021)Zahavy, O'Donoghue, Desjardins, and Singh]{zahavy_2021}
Zahavy, T., O'Donoghue, B., Desjardins, G., and Singh, S.
\newblock Reward is enough for convex {MDPs}.
\newblock \emph{CoRR}, abs/2106.00661, 2021.

\bibitem[Zhang et~al.(2020)Zhang, Koppel, Bedi, Szepesvari, and Wang]{zhang2020variational}
Zhang, J., Koppel, A., Bedi, A.~S., Szepesvari, C., and Wang, M.
\newblock Variational policy gradient method for reinforcement learning with general utilities, 2020.

\end{thebibliography}
\bibliographystyle{icml2025}

\newpage
\appendix
\onecolumn

\appendix
\clearpage

\section{Empirical State-Action Occupancies}
\label{appendix:empirical_occupancies}

Given a random trajectory $T = (S_0,A_0,S_1,A_1,\ldots)$, we note that
\begin{equation*}
    \mathbb{P}_\pi(S_t = s, A_t = a| S_0 \sim p_0) = \mathbb{E}_{T \sim \zeta_\pi} \left[ \mathbf{1}(S_t=s, A_t =a)\right] = \sum_{\tau} \zeta_\pi(\tau) \mathbf{1}(s_t=s, a_t=a),
\end{equation*}
where $\zeta_\pi(\tau)$ denotes the probability of trajectory $\tau = (s_0,a_0, s_1, a_1, \ldots)$ under policy $\pi$.

\subsection{Discounted occupancies}
Given a random trajectory $T = (S_0,A_0,S_1,A_1,\ldots)$, consider the estimator $\hat{d}_{T}(s,a)$ defined as
\begin{equation}
    \label{eq:appendix_discounted_T}
    \hat{d}_{T}(s,a) = (1-\gamma) \sum_{t=0}^\infty \gamma^t \mathbf{1}(S_t=s, A_t=a).
\end{equation}
It holds that, for all $s \in \mathcal{S}$ and $a \in \mathcal{A}$,
\begin{align*}
\mathbb{E}_{T \sim \zeta_\pi }\left[ \hat{d}_{T}(s,a) \right] &= \mathbb{E}_{T \sim \zeta_\pi }\left[ (1-\gamma) \sum_{t=0}^\infty \gamma^t \mathbf{1}(S_t=s, A_t=a) \right]\\
&= (1-\gamma) \sum_{\tau} \zeta_\pi(\tau) \sum_{t=0}^\infty \gamma^t \mathbf{1}(s_t=s,a_t=a) \\
&= (1-\gamma) \sum_{t=0}^\infty \gamma^t \sum_{\tau} \zeta_\pi(\tau) \mathbf{1}(s_t=s,a_t=a) \\
&= (1-\gamma) \sum_{t=0}^\infty \gamma^t \mathbb{P}_\pi(S_t = s, A_t = a| s_0 \sim p_0) \\
&= d_\pi(s,a),
\end{align*}
i.e., $\hat{d}_{T}$ is an unbiased estimator for $d_\pi$. Given a set of $K$ random trajectories $\mathcal{T}_K = \{T_1, \ldots, T_K\}$, consider estimator
\begin{equation*}
    \hat{d}_{\mathcal{T}_K}(s,a) = \frac{1}{K} \sum_{k=1}^K \hat{d}_{T_k}(s,a),
\end{equation*}
for $\hat{d}_{T_k}$ as defined in \eqref{eq:appendix_discounted_T}. We note again that, for all $s \in \mathcal{S}$ and $a \in \mathcal{A}$, and any $K \in \mathbb{N}$,
\begin{align*}
\mathbb{E}_{\mathcal{T}_K}\left[ \hat{d}_{\mathcal{T}_K}(s,a) \right] &= \mathbb{E}_{\mathcal{T}_K}\left[ \frac{1}{K} \sum_{k=1}^K \hat{d}_{T_k}(s,a) \right]\\
&= \frac{1}{K} \sum_{k=1}^K \mathbb{E}_{T_k \sim \zeta_\pi}\left[ \hat{d}_{T_k}(s,a) \right]\\
&= d_\pi(s,a),
\end{align*}
i.e., the estimator $\hat{d}_{\mathcal{T}_K}$ is unbiased. We now show that estimator $\hat{d}_{\mathcal{T}_K}$ is also consistent.

\begin{remark}
    \label{remark:d_hat_consistency}
    ($\hat{d}_{\mathcal{T}_K}$ is a consistent estimator) For any $s \in \mathcal{S}$ and $a \in \mathcal{A}$, the estimator $\hat{d}_{\mathcal{T}_K}(s,a)$ is consistent in probability for $d_\pi(s,a)$, i.e.,
    $ \lim_{K \rightarrow \infty} \mathbb{P}\left( \left|\hat{d}_{\mathcal{T}_K}(s,a) - d_\pi(s,a)\right| > \epsilon \right) = 0, \forall \epsilon > 0.$
    This is true because the estimator consists of a sample average of random variables $\Tilde{d}_k(s,a) = \hat{d}_{T_k}(s,a)$ (we note that $\hat{d}_{T_k}(s,a)$ is a random variable since it is the result of applying a function to the random trajectory $T_k$). In particular, since random variables $\Tilde{d}_k(s,a)$ are i.i.d. and $\mathbb{E}\left[\Tilde{d}_k(s,a)\right] = d_\pi(s,a) < \infty$, for all $k$, the weak law of large numbers states that $\frac{1}{K} \sum_{k=1}^K \Tilde{d}_{k}(s,a)$ converges in probability to $d_\pi(s,a)$ when $K \rightarrow \infty$.
\end{remark}

\subsection{Average occupancies}
Given a random trajectory $T = (S_0,A_0,S_1,A_1,\ldots)$, consider estimator $\hat{d}_{T}(s,a)$ defined as
\begin{equation}
    \label{eq:appendix_average_T}
    \hat{d}_{T}(s,a) = \lim_{H \rightarrow \infty} \frac{1}{H} \sum_{t=0}^{H-1} \mathbf{1}(S_{t}=s, A_{t}=a).
\end{equation}

It holds that, for all $s \in \mathcal{S}$ and $a \in \mathcal{A}$,
\begin{align*}
\mathbb{E}_{T \sim \zeta_\pi }\left[ \hat{d}_{T}(s,a) \right] &= \mathbb{E}_{T \sim \zeta_\pi }\left[ \lim_{H \rightarrow \infty} \frac{1}{H} \sum_{t=0}^{H-1} \mathbf{1}(S_{t}=s, A_{t}=a) \right]\\
&= \sum_{\tau} \zeta_\pi(\tau) \lim_{H \rightarrow \infty} \frac{1}{H} \sum_{t=0}^{H-1} \mathbf{1}(S_{t}=s, A_{t}=a) \\
&= \lim_{H \rightarrow \infty} \frac{1}{H} \sum_{\tau} \zeta_\pi(\tau) \sum_{t=0}^{H-1} \mathbf{1}(S_{t}=s, A_{t}=a) \\
&= \lim_{H \rightarrow \infty} \frac{1}{H} \sum_{t=0}^{H-1} \sum_{\tau} \zeta_\pi(\tau) \mathbf{1}(S_{t}=s, A_{t}=a) \\
&= \lim_{H \rightarrow \infty} \frac{1}{H} \sum_{t=0}^{H-1} \mathbb{P}_\pi(S_t = s, A_t = a) \\
&= d_{\text{avg},\pi}(s,a),
\end{align*}
i.e., $\hat{d}_{T}$ is an unbiased estimator for $d_{\text{avg},\pi}$. Given a set of $K$ random trajectories $\mathcal{T}_K = \{T_1, \ldots, T_K\}$, consider estimator
\begin{equation*}
    \hat{d}_{\mathcal{T}_K}(s,a) = \frac{1}{K} \sum_{k=1}^K \hat{d}_{T_k}(s,a),
\end{equation*}
for $\hat{d}_{T_k}$ as defined in \eqref{eq:appendix_average_T}. We note again that, for all $s \in \mathcal{S}$ and $a \in \mathcal{A}$, and any $K \in \mathbb{N}$,
\begin{align*}
\mathbb{E}_{\mathcal{T}_K}\left[ \hat{d}_{\mathcal{T}_K}(s,a) \right] &= \mathbb{E}_{\mathcal{T}_K}\left[ \frac{1}{K} \sum_{k=1}^K \hat{d}_{T_k}(s,a) \right]\\
&= \frac{1}{K} \sum_{k=1}^K \mathbb{E}_{T_k \sim \zeta_\pi}\left[ \hat{d}_{T_k}(s,a) \right]\\
&= d_{\text{avg},\pi}(s,a),
\end{align*}
i.e., the estimator $\hat{d}_{\mathcal{T}_K}$ is unbiased. Finally, similarly to the case of discounted occupancies, the average occupancy estimator $\hat{d}_{\mathcal{T}_K}$ is also consistent, i.e., $ \lim_{K \rightarrow \infty} \mathbb{P}\left( \left|\hat{d}_{\mathcal{T}_K}(s,a) - d_\pi(s,a)\right| > \epsilon \right) = 0, \forall \epsilon > 0$. The line of reasoning is the same as that in Remark~\ref{remark:d_hat_consistency}.

\section{Policy Evaluation in the Finite Trials Regime}
\label{appendix:policy_evaluation}

\begin{assumption}
    We say that $f$ is $c$-strongly convex if there exists $c>0$ such that
    \begin{equation}
        \label{eq:strongly_convex_eq}
        f(d_1) \ge f(d_2) +  \nabla f(d_2)^\top (d_1 - d_2) + \frac{c}{2} \left\|d_1 - d_2 \right\|_2^2,
    \end{equation}
    for any $d_1$, $d_2$ belonging to the domain of $f$. Equivalently, $f$ is $c$-strongly convex if there exists $c>0$ such that
    \begin{equation}
        \label{eq:strongly_convex_eq_2}
        \nabla^2 f(d) \succeq c I,
    \end{equation}
    for all $d$ belonging to the domain of $f$, where $I$ is the identity matrix.
\end{assumption}

\begin{remark}
    Consider the objective function $f(d) = \langle d, \log(d) \rangle$. It holds that $f$ is $c$-strongly convex and any $c \in (0, 1]$ satisfies \eqref{eq:strongly_convex_eq} and \eqref{eq:strongly_convex_eq_2}.
\end{remark}
\begin{proof}
    It holds that $\nabla^2 f(d) = \text{diag}([1/d(1), \;1/d(2), \; \ldots \; ] )$ where $\text{diag}(a)$ denotes the diagonal matrix with vector $a$ in its diagonal. From the strong convexity definition,
    \begin{align*}
        \nabla^2 f(d) &\succeq c I, \; \forall d \\
        \iff \nabla^2 f(d) - cI &\succeq 0, \; \forall d \\
        \iff \lambda_i(d) - c &\ge 0, \; \forall i, \; \forall d \\
        \iff \lambda_\text{min}(d) &\ge c, \; \forall d \\
        \iff \min_i \frac{1}{d(i)} &\ge c, \; \forall d, \\
    \end{align*}
    where $\lambda_i(d)$ are the eigenvalues of matrix $\nabla^2 f(d)$. Since $d \in \Delta(\mathcal{S} \times \mathcal{A})$, it holds that $\min_i \frac{1}{d(i)} \ge 1$ for any $d \in \Delta(\mathcal{S} \times \mathcal{A})$ and, hence, $f$ is $c$-strongly convex with any $c \in (0,1]$ satisfying \eqref{eq:strongly_convex_eq} and \eqref{eq:strongly_convex_eq_2}.
\end{proof}

\begin{remark}
    Consider the objective function $f(d) = \text{KL}(d | d_\beta) = \sum_i d(i) \log\left(\frac{d(i)}{d_\beta(i)}\right)$, where $d_\beta$ is fixed. It holds that $f$ is $c$-strongly convex and any $c \in (0, 1]$ satisfies \eqref{eq:strongly_convex_eq} and \eqref{eq:strongly_convex_eq_2}.
\end{remark}
\begin{proof}
    It holds that $\nabla^2 f(d) = \text{diag}([1/d(1), \;1/d(2), \; \ldots \; ] )$ where $\text{diag}(a)$ denotes the diagonal matrix with vector $a$ in its diagonal. Thus, similar to the case of the entropy function, it holds that $f$ is $c$-strongly convex with any $c \in (0,1]$ satisfying \eqref{eq:strongly_convex_eq} and \eqref{eq:strongly_convex_eq_2}.
\end{proof}

\begin{remark}
    Consider the objective function $f(d) = d^\top A d$. If $A$ is positive definite, i.e., $\lambda_\text{min}(A) > 0$ where $\lambda_\text{min}(A)$ denotes the smallest eigenvalue of matrix $A$, then $f$ is $c$-strongly convex. Furthermore, any $c \in (0, 2\lambda_\text{min}(A)]$ satisfies \eqref{eq:strongly_convex_eq} and \eqref{eq:strongly_convex_eq_2}.
\end{remark}
\begin{proof}
    It holds that $\nabla^2 f(d) = 2 A$. From the condition for strong convexity,
    \begin{align*}
        \nabla^2 f(d) &\succeq c I, \; \forall d  \\
        \iff 2A - cI &\succeq 0 \\
        \iff 2 \lambda_i - c &\ge 0, \; \forall i \\
        \iff 2 \lambda_i  &\ge c, \; \forall i \\
        \iff 2 \lambda_\text{min}(A)  &\ge c, \\
    \end{align*}
    where $\lambda_i$ are the eigenvalues of matrix $A$. If $\lambda_\text{min}(A) > 0$, then any $c \in (0, 2\lambda_\text{min}(A)]$ satisfies the condition above.
\end{proof}

\subsection{Discounted setting}

\subsubsection{Proof of Theorem \ref{theo:discounted_strongly_cvx_expected_lower_bound}}

\begin{customthm}{\ref{theo:discounted_strongly_cvx_expected_lower_bound}}
Let $\mathcal{M}_f$ be a GUMDP with $c$-strongly convex $f$ and $K \in \mathbb{N}$ be the number of sampled trajectories. Then, for any policy $\pi \in \Pi_\text{S}$ it holds that
    \begin{align*}
    f_{K}(\pi) - f_\infty(\pi) &\ge \frac{c}{2K} \sum_{\substack{s \in \mathcal{S} \\ a \in \mathcal{A}}} \underset{T \sim \zeta_\pi}{\textup{Var}}\left[ \hat{d}_{T}(s,a) \right]\\
    &= \frac{c (1-\gamma)^2}{2K} \sum_{\substack{s \in \mathcal{S} \\ a \in \mathcal{A}}} \underset{T \sim \zeta_\pi}{\textup{Var}} \left[ J^{\gamma,\pi}_{r_{s,a}} \right],
    \end{align*}
    where $J^{\gamma,\pi}_{r_{s,a}} = \sum_{t=0}^\infty \gamma^t r_{s,a}(S_{t}, A_{t})$ is the discounted return for the MDP with reward function
    $r_{s,a}(s',a') = 1$
    if $s' = s$ and $a' = a$, and zero otherwise.
\end{customthm}
\begin{proof}
    For any policy $\pi \in \Pi_\text{S}$ it holds that
    \begingroup
    \allowdisplaybreaks
    \begin{align*}
    f_{K}(\pi) - f_\infty(\pi)  &= \mathbb{E}_{\mathcal{T}_K}\left[ f(\hat{d}_{\mathcal{T}_K}) \right] - f\left( \mathbb{E}_{\mathcal{T}_K}\left[ \hat{d}_{\mathcal{T}_K} \right] \right) \\
    &\overset{\text{(a)}}{\ge} \frac{c}{2} \mathbb{E}_{\mathcal{T}_K}\left[ \left\| \hat{d}_{\mathcal{T}_K} - d_\pi \right\|_2^2 \right] \\
    &= \frac{c}{2} \mathbb{E}_{\mathcal{T}_K}\left[ \sum_{s \in \mathcal{S}, a \in \mathcal{A}} \left( \hat{d}_{\mathcal{T}_K}(s,a) - d_\pi(s,a) \right)^2 \right] \\
    &= \frac{c}{2} \sum_{s \in \mathcal{S}, a \in \mathcal{A}} \mathbb{E}_{\mathcal{T}_K}\left[ \left( \hat{d}_{\mathcal{T}_K}(s,a) - d_\pi(s,a) \right)^2 \right] \\
    &= \frac{c}{2} \sum_{s \in \mathcal{S}, a \in \mathcal{A}} \textup{Var}_{\mathcal{T}_K}\left[ \hat{d}_{\mathcal{T}_K}(s,a) \right] \\
    &= \frac{c}{2} \sum_{s \in \mathcal{S}, a \in \mathcal{A}} \textup{Var}_{\{T_1, \ldots, T_K\}}\left[ \frac{1}{K} \sum_{k=1}^K \hat{d}_{T_k}(s,a) \right] \\
    &= \frac{c}{2K} \sum_{s \in \mathcal{S}, a \in \mathcal{A}} \textup{Var}_{T \sim \zeta_\pi}\left[ \hat{d}_{T}(s,a) \right]\\
    &= \frac{c (1-\gamma)^2}{2K} \sum_{s \in \mathcal{S}, a \in \mathcal{A}} \textup{Var}_{T \sim \zeta_\pi}\left[ \sum_{t=0}^\infty \gamma^t \mathbf{1}(S_t=s, A_t = a) \right],\\
    &= \frac{c (1-\gamma)^2}{2K} \sum_{s \in \mathcal{S}, a \in \mathcal{A}} \textup{Var}_{T \sim \zeta_\pi}\left[ J^{\gamma,\pi}_{r_{s,a}} \right],
    \end{align*}
    \endgroup
    where (a) follows from the strongly convex assumption and the fact that
    \begin{align*}
        f(X) &\ge f(\mathbb{E}[X]) +  \nabla f(\mathbb{E}[X])^\top (X - \mathbb{E}[X]) +\frac{c}{2} \left\|X - \mathbb{E}[X] \right\|_2^2 \\
        &\implies \mathbb{E}[f(X)] \ge f(\mathbb{E}[X]) + \frac{c}{2} \mathbb{E}\left[ \left\|X - \mathbb{E}[X] \right\|_2^2 \right],
    \end{align*}
    where $X$ is a random vector. We refer to \cite{Benito1982CalculatingTV,sobel_1982,sitavr_2006} for a closed-form expression for the calculation of $\textup{Var}_{T \sim \zeta_\pi}\left[ J^{\gamma,\pi}_{r_{s,a}}\right]$.
\end{proof}

\subsubsection{Proof of Theorem \ref{theo:discounted_prob_upper_bound}}
\label{appendix:policy_evaluation:theo_upper_bound_proof}

\begin{customthm}{\ref{theo:discounted_prob_upper_bound}}
    Let $\mathcal{M}_f$ be a GUMDP with convex and $L$-Lipschitz $f$, $K \in \mathbb{N}$ be the number of sampled trajectories, each with length $H \in \mathbb{N}$. Then, for any policy $\pi$ and $\delta \in (0,1]$ it holds with probability at least $1-\delta$
    \begin{equation*}
        |f_\infty(\pi) - f(\hat{d}_{\mathcal{T}_K, H})| \le L \left( \sqrt{\frac{2|\mathcal{S}||\mathcal{A}| \log(2H / \delta)}{K}} + 2\gamma^H \right) = E_\text{Upper}(K,H).
    \end{equation*}
    For fixed $H \in \mathbb{N}$ it holds that
    $$\lim_{K \rightarrow \infty} E_\text{Upper}(K,H) = 2L\gamma^H,$$
    and for fixed $K \in \mathbb{N}$
    $$\lim_{H \rightarrow \infty} E_\text{Upper}(K,H) = \infty.$$
\end{customthm}

\begin{proof}
    For any policy $\pi$ and $H \in \mathbb{N}$, we have
    \begin{align*}
        \left|f_\infty(\pi) - f(\hat{d}_{\mathcal{T}_K, H})\right| &= \left|f(d_\pi) - f(\hat{d}_{\mathcal{T}_K, H})\right| \\
        &\overset{\text{(a)}}{\le} L \left\|d_\pi -\hat{d}_{\mathcal{T}_K, H}\right\|_1\\
        &\overset{\text{(b)}}{=} L \left\| (1-\gamma) \sum_{t=0}^\infty \gamma^t d_{\pi,t} - \frac{(1-\gamma)}{1-\gamma^H} \sum_{t=0}^{H-1} \gamma^t \hat{d}_{K,t}\right\|_1 \\
        &= L \left\| \frac{1-\gamma}{1-\gamma^H} \sum_{t=0}^{H-1} \gamma^t \left( \left(1-\gamma^H \right) d_{\pi,t} - \hat{d}_{K,t} \right) + (1-\gamma) \sum_{t=H}^\infty \gamma^t d_{\pi,t} \right\|_1 \\
        &\overset{\text{(c)}}{\le} L \left\| \frac{1-\gamma}{1-\gamma^H} \sum_{t=0}^{H-1} \gamma^t \left( \left(1-\gamma^H \right) d_{\pi,t} - \hat{d}_{K,t} \right) \right\|_1 + \gamma^H \\
        &\overset{\text{(d)}}{\le} L \left( \frac{1-\gamma}{1-\gamma^H} \sum_{t=0}^{H-1} \gamma^t \left\| \left(1-\gamma^H \right) d_{\pi,t} - \hat{d}_{K,t} \right\|_1 + \gamma^H \right) \\
        &\overset{\text{(e)}}{\le} L \left( \frac{1-\gamma}{1-\gamma^H} \sum_{t=0}^{H-1} \gamma^t \left( \left\| \hat{d}_{K,t} - d_{\pi,t} \right\|_1 + \left\| \gamma^H d_{\pi,t} \right\|_1 \right) + \gamma^H \right) \\
        &\le L \left( \frac{1-\gamma}{1-\gamma^H} \sum_{t=0}^{H-1} \gamma^t \left\| \hat{d}_{K,t} - d_{\pi,t} \right\|_1 + 2\gamma^H \right) \\
        &\le L \left( \max_{t \in \{0,\ldots, H-1\}} \left\| \hat{d}_{K,t} - d_{\pi,t} \right\|_1 + 2\gamma^H \right)
    \end{align*}
    where: (a) is due to the $L$-Lipschitz assumption; in (b) we used $d_\pi = (1-\gamma) \sum_{t=0}^\infty \gamma^t d_{\pi,t}$ where $d_{\pi,t}$ denotes the expected occupancy under policy $\pi$ at timestep $t$, and $\hat{d}_{\mathcal{T}_K, H} = (1-\gamma) /(1-\gamma^H)\sum_{t=0}^{H-1} \gamma^t \hat{d}_{K,t}$ where $\hat{d}_{K,t}$ denotes the empirical distribution induced by the $K$ random trajectories at timestep $t$; and (c), (d) and (e) follow from the triangular inequality. We aim to bound the last inequality above with high probability; to do so, we note that
    \begin{align*}
    \mathbb{P}\left( \max_{t \in \{0,\ldots, H-1\}} \left\| \hat{d}_{K,t} - d_{\pi,t} \right\|_1 > \epsilon' \right) &\le \mathbb{P}\left( \bigcup_{t=0}^{H-1}  \left\| \hat{d}_{K,t} - d_{\pi,t} \right\|_1  > \epsilon' \right) \\
    &\overset{\text{(a)}}{\le} \sum_{t=0}^{H-1} \mathbb{P}\left( \left\| \hat{d}_{K,t} - d_{\pi,t} \right\|_1  > \epsilon' \right)\\
    &\overset{\text{(b)}}{\le} \sum_{t=0}^{H-1} 2 \exp\left(-\frac{1}{2 |\mathcal{S}| |\mathcal{A}|} K (\epsilon')^2 \right)\\
    &= 2H \exp\left(-\frac{1}{2 |\mathcal{S}| |\mathcal{A}|} K (\epsilon')^2 \right).
    \end{align*}
    where: (a) follows from a union bound, and (b) from the fact that $\mathbb{P}\left(\left\| d_{\pi,t} - \hat{d}_{K,t} \right\| > \epsilon' \right) \le 2 \exp\left(-\frac{1}{2 |\mathcal{S}| |\mathcal{A}|} K (\epsilon')^2 \right)$ (Lemma 16 in \cite{efroni_2020}). Thus, it holds with probability at least $1-\delta$
    $$\max_{t \in \{0,\ldots, H-1\}} \left\| \hat{d}_{K,t} - d_{\pi,t} \right\|_1 \le \sqrt{\frac{2|\mathcal{S}||\mathcal{A}| \log(2H / \delta)}{K}}.$$
    Given the above we conclude that, with probability at least $1 - \delta$,
    $$\left|f_\infty(\pi) - f(\hat{d}_{\mathcal{T}_K, H})\right| \le L \left( \sqrt{\frac{2|\mathcal{S}||\mathcal{A}| \log(2H / \delta)}{K}} + 2\gamma^H \right).$$
    For the limits we have, for fixed $H \in \mathbb{N}$,
    \begin{align*}
        \lim_{K \rightarrow \infty} E_\text{Upper}(K,H) &= \lim_{K \rightarrow \infty} L \left( \sqrt{\frac{2|\mathcal{S}||\mathcal{A}| \log(2H / \delta)}{K}} + 2\gamma^H \right)\\
        &= L \lim_{K \rightarrow \infty}  \sqrt{\frac{2|\mathcal{S}||\mathcal{A}| \log(2H / \delta)}{K}} + 2 L \gamma^H\\
        &= 0 + 2 L \gamma^H.
    \end{align*}
    For fixed $K \in \mathbb{N}$,
    \begin{align*}
        \lim_{H \rightarrow \infty} E_\text{Upper}(K,H) &= \lim_{H \rightarrow \infty} L \left( \sqrt{\frac{2|\mathcal{S}||\mathcal{A}| \log(2H / \delta)}{K}} + 2\gamma^H \right)\\
        &= L \lim_{H \rightarrow \infty}  \sqrt{\frac{2|\mathcal{S}||\mathcal{A}| \log(2H / \delta)}{K}} + 2 L \lim_{H \rightarrow \infty} \gamma^H\\
        &= L \sqrt{\frac{2|\mathcal{S}||\mathcal{A}|}{K}} \lim_{H \rightarrow \infty} \sqrt{\log(2H/\delta)} + 0\\
        &= L \sqrt{\frac{2|\mathcal{S}||\mathcal{A}|}{K}} \sqrt{\lim_{H \rightarrow \infty} \log(2H/\delta)} + 0\\
        &= L \sqrt{\frac{2|\mathcal{S}||\mathcal{A}|}{K}} \cdot \infty + 0\\
        &= \infty
    \end{align*}
\end{proof}

\subsection{Average setting}

\subsubsection{Proof of Theorem \ref{theo:finite_vs_infinite_unichain}}
\label{appendix:policy_evaluation:theo_finite_vs_infinite_unichain_proof}

\begin{customthm}{\ref{theo:finite_vs_infinite_unichain}}
    If the GUMDP $\mathcal{M}_f$ is unichain and $f$ is bounded and continuous in its domain, then $f_K(\pi) = f_\infty(\pi)$ for any $\pi \in \Pi_\text{S}$.
\end{customthm}
\begin{proof}
    We start by focusing on the case of state-dependant occupancies and later generalize our result for the case of state-action-dependant occupancies.

    \bigbreak
    
    \noindent For any $ \pi \in \Pi_\text{S}$, in a unichain GUMDP, the Markov chain with transition matrix $P^\pi$ and initial states distribution $p_0$ contains a single recurrent class $\mathcal{R}$, and a possibly non-empty set of transient states $\mathcal{Z}$. Associated to the unique recurrent class is $\mu_\pi \in \Delta(\mathcal{S})$, the unique stationary distribution of the Markov chain, which satisfies $\mu_\pi(s) > 0$ for $s \in \mathcal{R}$ and $\mu_\pi(s) = 0$ for $s \in \mathcal{Z}$. Furthermore, the unique stationary distribution $\mu_\pi$ satisfies
    \begin{align*}
        \sum_{s' \in \mathcal{S}} P^\pi(s|s') \mu_\pi(s') &= \mu_\pi(s), \quad \forall s \in \mathcal{S}\\
        \sum_{s \in \mathcal{S}} \mu_\pi(s) &= 1.
    \end{align*}
    All aforementioned facts can be found in textbooks such as \cite{puterman2014markov}.

    \bigbreak

    \noindent Now, for a fixed policy $\pi \in \Pi_{S}$, consider the estimator
    \begin{equation}
        \hat{d}_{T}(s) = \lim_{H \rightarrow \infty} \frac{1}{H} \sum_{t=0}^{H-1} \mathbf{1}(S_{t}=s),
    \end{equation}
    where $T = (S_0,S_1, \ldots)$ denotes a random trajectory from the Markov chain with transition matrix $P^\pi$ and initial distribution $p_0$. Since there is a single recurrent class, independent of the initial state distribution, a random trajectory drawn from the Markov chain will eventually get absorbed into the unique recurrent class in a finite number of steps. Hence:
    \begin{itemize}
        \item for any transient state $s \in \mathcal{Z}$ of the Markov chain, we have that $\hat{d}_{T}(s) = 0$ almost surely. From the definition of a transient state, it holds that $\sum_{t=0}^\infty \mathbf{1}(S_t = s) < \infty$ with probability one, yielding
        \begin{equation*}
            \hat{d}_{T}(s) = \lim_{H \rightarrow \infty} \frac{1}{H} \sum_{t=0}^{H-1} \mathbf{1}(S_{t}=s) \le \lim_{H \rightarrow \infty} \frac{1}{H} \sum_{t=0}^\infty \mathbf{1}(S_t = s) = 0.
        \end{equation*}
        \item for any recurrent state $s \in \mathcal{R}$ belonging to the unique recurrent class of the Markov chain, we have that $\hat{d}_{T}(s) = \mu_\pi(s)$ almost surely. Recurrent classes within a larger Markov chain can be seen as independent Markov chains \citep{puterman2014markov}. Let $P_{R \rightarrow R}$ denote the transition matrix for the states belonging to the recurrent class of the Markov chain. Then, since $P_{R \rightarrow R}$ is an irreducible Markov chain (any state is reachable from any other state), from the ergodic theorem for Markov chains \citep{LevinPeresWilmer2006} it holds that, for any initial distribution,
        \begin{equation*}
            \mathbb{P}\left( \hat{d}_{T}(s) = \mu_\pi(s) \right) = \mathbb{P}\left(\lim_{H \rightarrow \infty} \frac{1}{H} \sum_{t=0}^{H-1} \mathbf{1}(S_{t}=s) = \mu_\pi(s) \right) = 1.
        \end{equation*}
    \end{itemize}

    \bigbreak
    
    \noindent Now, we use the fact that estimator $\hat{d}_T(s)$ converges almost surely to $\mu_\pi(s)$ in order to show the equivalence between $f_K(\pi)$ and $f_\infty(\pi)$. For now, we assume $f$ is defined over state-dependant occupancies and later extend our analysis for the case of state-action-dependant occupancies. Let $Z_{H,k}$ be the random vector with components $Z_{H,k}(s) = \frac{1}{H} \sum_{t=0}^{H-1} \mathbf{1}(S_{k,t}=s)$. Since, for each $s \in \mathcal{S}$ and $k \in \{1, \ldots, K\}$, it holds that $Z_{H,k}(s) \rightarrow \mu_\pi(s)$ almost surely, we have that random vector $Z_{H,k} \rightarrow \mu_\pi$ almost surely, for any $k \in \{1, \ldots, K\}$, i.e., 
    \begin{equation*}
        \mathbb{P}\left(\lim_{H \rightarrow \infty} Z_{H,k} = \mu_\pi \right) = 1, \quad \forall k \in \{1, \ldots, K\}.
    \end{equation*}
    By the definition of the finite trials objective (considering a state-dependant occupancy and objective function), it holds that
    \begin{align*}
        f_K(\pi) = \mathbb{E}_{\mathcal{T}_K}\left[ f(\hat{d}_{\mathcal{T}_K}) \right] = \mathbb{E}_{\mathcal{T}_K}\left[ f\left( \frac{1}{K} \sum_{k=1}^K \lim_{H \rightarrow \infty } Z_{H,k} \right) \right].
    \end{align*}
    Since $Z_{H,k} \rightarrow \mu_\pi$ almost surely for any $k \in \{1, \ldots, K\}$, it also holds that $\frac{1}{K} \sum_{k=1}^K Z_{H,k} \rightarrow \mu_\pi$ almost surely since the $K$ trajectories are independently sampled. Assuming $f$ is continuous, it holds that $f(\frac{1}{K} \sum_{k=1}^K Z_{H,k}) \rightarrow f(\mu_\pi)$ almost surely. Since $f$ is bounded in its domain by assumption it also implies that $\left|f\left( \frac{1}{K} \sum_{k=1}^K Z_{H,k} \right)\right|$ is bounded for any $H \in \mathbb{N}$. Thus, from the dominated/bounded convergence theorem \citep[Theo. 1.6.7.]{durrett_1996}, it holds that
    \begin{align*}
       \mathbb{E}_{\mathcal{T}_K}\left[ f\left( \frac{1}{K} \sum_{k=1}^K \lim_{H \rightarrow \infty } Z_{H,k} \right) \right] = \mathbb{E}\left[ f(\mu_\pi) \right] = f(\mu_\pi),
    \end{align*}
    where the second equality holds because $f(\frac{1}{K} \sum_{k=1}^K Z_{H,k})$ converges almost surely to $f(\mu_\pi)$, which is a non-random quantity.

    \bigbreak

    \noindent Finally, it remains to show that the result above also holds for the case of state-action occupancies. Consider a second Markov chain, which we call the extended Markov chain, that has state space $\Tilde{\mathcal{S}} = \mathcal{S} \times \mathcal{A}$\footnote{We denote a state of the extended Markov chain with the tuple $(s,a)$. However, to make the notation simpler, we usually drop the parenthesis from the $(s,a)$ tuple, thus writing $p_0(s,a)$ instead of $p_0((s,a))$, $\tilde{P}(s',a' |s,a)$ instead of $\tilde{P}((s',a')|(s,a))$, $\mu_\pi(s,a)$ instead of $\mu_\pi((s,a))$, etc.}, transition matrix $\tilde{P}(s',a' |s,a) = p(s'|s,a) \pi(a'|s')$, and $\tilde{p}_0(s,a) = p_0(s) \pi(a|s)$. This Markov chain encapsulates both the transition dynamics $p$ and the policy $\pi$ within the transition matrix $\tilde{P}$ and it should be clear that a random trajectory from the extended Markov chain $\left( (S_0,A_0), (S_1,A_1), \ldots \right)$ precisely describes a random sequence of state-action pairs when using $\pi \in \Pi_\text{S}$ to interact with the GUMDP. It holds that the extended Markov chain has a unique stationary distribution $\tilde{\mu}_\pi(s,a) = \mu_\pi(s) \pi(a|s)$. This is true because $\tilde{\mu}_\pi$ is a stationary distribution for the extended Markov chain if it satisfies
    \begin{align} 
        \tilde{\mu}_\pi(s',a') &= \sum_{s \in \mathcal{S}} \sum_{a \in \mathcal{A}} \tilde{P}(s',a'|s,a) \tilde{\mu}_\pi(s,a), \; \forall s', a' \label{eq:unichain_theo_proof_1} \\
        1 &= \sum_{s \in \mathcal{S}} \sum_{a \in \mathcal{A}} \tilde{\mu}_\pi(s,a) \label{eq:unichain_theo_proof_2}.
    \end{align}
    Letting $\tilde{\mu}_\pi(s,a) = \mu_\pi(s) \pi(a|s)$ satisfies \eqref{eq:unichain_theo_proof_1} since, when $\pi(a'|s') > 0$,
    \begin{align*}
        \mu_\pi(s') \pi(a'|s') &= \sum_{s \in \mathcal{S}} \sum_{a \in \mathcal{A}} \pi(a|s) \tilde{P}(s',a'|s,a) \mu_\pi(s) \\
        \iff \mu_\pi(s') \pi(a'|s') &= \sum_{s \in \mathcal{S}} \sum_{a \in \mathcal{A}} \pi(a|s) \pi(a'|s') p(s'|s,a) \mu_\pi(s) \\
        \iff \mu_\pi(s') &= \sum_{s \in \mathcal{S}} \sum_{a \in \mathcal{A}} \pi(a|s) p(s'|s,a) \mu_\pi(s) \\
        \iff \mu_\pi(s') &= \sum_{s \in \mathcal{S}} P^\pi(s'|s) \mu_\pi(s),
    \end{align*}
    and the last equality above holds since $\mu_\pi$ is the stationary distribution of the Markov chain with transition matrix $P^\pi$. If $\pi(a'|s') = 0$, then \eqref{eq:unichain_theo_proof_1} also holds given that $\tilde{\mu}_\pi(s',a') = \mu_\pi(s') \pi(a'|s') = 0$. Equation \eqref{eq:unichain_theo_proof_2} is also straightforwardly satisfied. It can also be seen from the equations above that the stationary distribution $\tilde{\mu}_\pi$ is unique. Assume the opposite, i.e., there exist multiple stationary distributions for the extended Markov chain. This would imply that it also exist multiple vectors satisfying the last equation above, which we know it is not possible because the Markov chain with transition matrix $P^\pi$ has a unique stationary distribution.

    \bigbreak

    \noindent Consider estimator
    \begin{equation}
        \hat{d}_{T}(s,a) = \lim_{H \rightarrow \infty} \frac{1}{H} \sum_{t=0}^{H-1} \mathbf{1}(S_{t}=s, A_t = a),
    \end{equation}
    where $T = (S_0, A_0, S_1, A_1, \ldots)$ denotes a random sequence of state-action pairs from the extended Markov chain. Since there is a single recurrent class (associated with the unique stationary distribution of the extended Markov chain), independent of the initial state distribution, a random trajectory drawn from the extended Markov chain will eventually get absorbed into the unique recurrent class in a finite number of steps. Hence:
    \begin{itemize}
        \item if $\pi(a|s) = 0$ for some state-action pair $(s,a)$, then $(s,a)$ is never visited in the extended Markov chain under any distribution of initial states and, hence, $ \mathbb{P} \left( \hat{d}_{T}(s,a)  = 0 \right) = 1$.
        \item for any state-action pair $(s,a)$ such that $\pi(a|s) > 0$, if $(s,a)$ is transient in the extended Markov chain (equivalent to $s$ being transient in the Markov chain $P^\pi$), then $ \mathbb{P} \left( \hat{d}_{T}(s,a)  = 0 \right) = 1$.
        \item for any state-action pair $(s,a)$ such that $\pi(a|s) > 0$, if $(s,a)$ is recurrent in the extended Markov chain (equivalent to $s$ being recurrent in the Markov chain $P^\pi$), then from the Ergodic theorem for Markov chains \cite{LevinPeresWilmer2006}
        $$\mathbb{P}\left( \hat{d}_{T}(s,a) = \tilde{\mu}_\pi(s,a) \right) = \mathbb{P} \left( \lim_{H \rightarrow \infty } \frac{1}{H} \sum_{t=0}^{H-1} \mathbf{1}(S_{t}=s, A_t = a) = \tilde{\mu}_\pi(s,a) \right) = 1.$$
        %
    \end{itemize}
    
    \noindent Thus, noting that $\tilde{\mu}_\pi(s,a) = \mu_\pi(s) \pi(a|s) = d_\pi(s,a)$ and following the same line of reasoning as we did for the case of state-dependant occupancies, we can use the fact that estimator $\hat{d}_T(s,a)$ converges almost surely to $d_\pi(s,a)$ to show the equivalence between $f_K(\pi)$ and $f_\infty(\pi)$.
    %
    %
\end{proof}

\begin{lemma}
    \label{lemma:markov_chain_asymptotic behavior}
    Consider a Markov chain with finite state-space $\mathcal{S}$ and transition matrix $P$. Let $p_0 \in \Delta(\mathcal{S})$ be the distribution of initial states of the Markov chain, i.e., $S_0 \sim p_0$, and afterwards $S_{t} \sim P(\cdot | S_{t-1})$, for all $t>0$. Assume that the state-space can be partitioned into $L$ disjoint recurrent classes $\mathcal{R}_1, \ldots, \mathcal{R}_L$ and a set of transient states $\mathcal{Z}$. For each recurrent state $s \in \mathcal{R}_1 \cup \ldots \cup \mathcal{R}_L$, we let $l(s)$ denote the index of the recurrent class to which state $s$ belongs. Let also $\mu_l$ denote the unique stationary distribution associated with each recurrent class $\mathcal{R}_l$. It holds, for any $s \in \mathcal{S}$, that
    $$\mathbb{P} \left( \lim_{H \rightarrow \infty } \frac{1}{H} \sum_{t=0}^{H-1} \mathbf{1}(S_t = s)  = Y_s \right) = 1,$$
    where $Y_s$ is a random variable such that, if $s \in \mathcal{Z}$ then $\mathbb{P}(Y_s=0) = 1$, and if $s$ is recurrent then
    $$\mathbb{P} \left(Y_s = y \right) = \begin{cases}
        \alpha_{l(s)}, & \text{if $y=\mu_{l(s)}(s)$}, \\
        1 - \alpha_{l(s)}, & \text{if $y = 0$}, \\
        0, & \text{otherwise},
    \end{cases}$$
    and
    $$\alpha_{l(s)} = \lim_{t \rightarrow \infty} \mathbb{P}(S_t \in \mathcal{R}_{l(s)} | S_0 \sim p_0)$$
    denotes the probability of absortion into recurrent class $\mathcal{R}_{l(s)}$, i.e., the recurrent class to which recurrent state $s$ belongs, when the initial state $S_0$ is distributed according to $p_0$ and can be calculated in a closed-form manner \citep[Theo. 2.3.4.]{kallenberg_1983}. In other words, $\lim_{H \rightarrow \infty } \frac{1}{H} \sum_{t=0}^{H-1} \mathbf{1}(S_t = s)$ converges almost surely to random variable $Y_s$, which describes the asymptotic proportion of time the chain spends in any state $s \in \mathcal{S}$.
\end{lemma}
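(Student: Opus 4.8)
The plan is to condition on which recurrent class the chain is eventually absorbed into, and then treat the three possible roles of the state $s$ separately. Since the state space is finite and decomposes as $\mathcal{Z} \cup \mathcal{R}_1 \cup \cdots \cup \mathcal{R}_L$ with each recurrent class closed, the trajectory almost surely leaves $\mathcal{Z}$ after finitely many steps and is absorbed into exactly one recurrent class. Let $A_l$ denote the absorption event $\{S_t \in \mathcal{R}_l \text{ for all large } t\}$; because $\mathcal{R}_l$ is closed the events $\{S_t \in \mathcal{R}_l\}$ increase in $t$, so by continuity of probability from below $\mathbb{P}(A_l) = \lim_{t \to \infty} \mathbb{P}(S_t \in \mathcal{R}_l) = \alpha_l$, and $A_1, \ldots, A_L$ partition the sample space up to a null set.

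First I would dispose of the easy cases. If $s \in \mathcal{Z}$ is transient, then $\sum_{t=0}^{\infty}\mathbf{1}(S_t=s) < \infty$ almost surely, so dividing a bounded quantity by $H$ forces $\frac{1}{H}\sum_{t=0}^{H-1}\mathbf{1}(S_t=s) \to 0$; this is exactly the argument already used in the proof of Theorem~\ref{theo:finite_vs_infinite_unichain}, giving $\mathbb{P}(Y_s = 0) = 1$. Next, if $s$ is recurrent with $s \in \mathcal{R}_{l(s)}$, I would use that recurrent classes are closed: any visit to $s$ traps the chain inside $\mathcal{R}_{l(s)}$ forever, so on the event $A_{l'}$ with $l' \neq l(s)$ the state $s$ is never visited and the empirical average is identically zero there.

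The substantive case is $s \in \mathcal{R}_{l(s)}$ on the absorption event $A_{l(s)}$. Here I would let $\tau = \min\{t : S_t \in \mathcal{R}_{l(s)}\}$, which is almost surely finite on $A_{l(s)}$, and split, for $H > \tau$,
$$\frac{1}{H}\sum_{t=0}^{H-1}\mathbf{1}(S_t=s) = \frac{1}{H}\sum_{t=0}^{\tau-1}\mathbf{1}(S_t=s) + \frac{H-\tau}{H}\cdot\frac{1}{H-\tau}\sum_{t=\tau}^{H-1}\mathbf{1}(S_t=s).$$
The prefix sum has at most $\tau$ terms, so its contribution vanishes as $H \to \infty$, while $\frac{H-\tau}{H} \to 1$. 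After time $\tau$ the process evolves as the irreducible finite Markov chain restricted to $\mathcal{R}_{l(s)}$, entering at some random state of that class. Because the ergodic theorem for Markov chains \citep{LevinPeresWilmer2006} guarantees convergence of empirical state frequencies to the stationary distribution for \emph{every} initial state of an irreducible finite chain, the second term converges almost surely to $\mu_{l(s)}(s)$ regardless of the entrance point. Combining the three cases shows $Y_s = \mu_{l(s)}(s)$ on $A_{l(s)}$ and $Y_s = 0$ on its complement, which is precisely the claimed two-point distribution with weights $\alpha_{l(s)}$ and $1 - \alpha_{l(s)}$.

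The main obstacle I anticipate is this final case: I must apply the ergodic theorem to the \emph{shifted} post-absorption process even though its entrance distribution into $\mathcal{R}_{l(s)}$ is random and correlated with the conditioning event $A_{l(s)}$. The resolution is that the ergodic conclusion holds uniformly over all starting states of the irreducible class, so the almost-sure limit $\mu_{l(s)}(s)$ does not depend on where or how the chain enters; formally one invokes the strong Markov property at $\tau$ together with the ergodic theorem applied at each fixed entrance state. The closed-form value of $\alpha_{l(s)}$ as an absorption probability then follows from standard hitting-probability computations \citep[Theo. 2.3.4.]{kallenberg_1983}.
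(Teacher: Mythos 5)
Your proof is correct and follows essentially the same route as the paper's: the same case split (transient states, recurrent states on versus off the absorption event into their own class), the same identification of $\alpha_{l(s)}$ as the absorption probability, and the same key tool, namely the ergodic theorem for the irreducible finite chain restricted to each recurrent class. If anything, you are more careful than the paper, which asserts the post-absorption convergence directly, whereas you make explicit the prefix/suffix split at the hitting time $\tau$ and the strong Markov argument that neutralizes the random entrance state.
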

\begin{proof}
    The state-space $\mathcal{S}$ of every finite-state Markov chain can be partitioned into $L$ disjoint recurrent classes $\mathcal{R}_1, \ldots, \mathcal{R}_L$ and a set of transient states $\mathcal{Z}$. For each recurrent state $s \in \mathcal{R}_1 \cup \ldots \cup \mathcal{R}_L$, we let $l(s)$ denote the index of the recurrent class to which state $s$ belongs. Every recurrent class $\mathcal{R}_l$ can be treated as an independent Markov chain, associated with a unique stationary distribution $\mu_l$, which satisfies $\mu_l(s) > 0$ for all $s \in \mathcal{R}_l$ and $\mu_l(s) = 0$ for all $s \notin \mathcal{R}_l$ \cite{puterman2014markov}. Once absorbed into a given recurrent class, the chain cannot leave the recurrent class and will visit every state in the recurrent class infinitely often.
    
    Let $\hat{d}_T(s) = \lim_{H \rightarrow \infty} \frac{1}{H} \sum_{t=0}^{H-1} \mathbf{1}\left( S_t = s\right)$ denote the asymptotic proportion of time the chain spends in any state $s \in \mathcal{S}$. It holds that:
    \begin{itemize}
        \item for any transient state $s \in \mathcal{Z}$ of the Markov chain, we have that $\hat{d}_T(s) = 0$ almost surely. From the definition of a transient state, it holds that $\sum_{t=0}^\infty \mathbf{1}(S_t = s) < \infty$ with probability one, yielding
        \begin{equation*}
            \hat{d}_T(s) = \lim_{H \rightarrow \infty} \frac{1}{H} \sum_{t=0}^{H-1} \mathbf{1}(S_{t}=s) \le \lim_{H \rightarrow \infty} \frac{1}{H} \sum_{t=0}^\infty \mathbf{1}(S_t = s) = 0.
        \end{equation*}
        \item for each recurrent class $\mathcal{R}_l$, if the Markov chain gets absorbed into $\mathcal{R}_l$, then it holds that
        $$\mathbb{P} \left( \lim_{H \rightarrow \infty } \frac{1}{H} \sum_{t=0}^{H-1} \mathbf{1}(S_t = s)  = \mu_l(s) \right) = 1, \quad \text{for $s \in \mathcal{R}_l$},$$
        i.e., the asymptotic proportion of time the chain spends in each state $s \in \mathcal{R}_l$ converges almost surely to $\mu_l(s)$ for all $s \in \mathcal{R}_l$. This result follows from the ergodic theorem for Markov chains \citep{LevinPeresWilmer2006}.
        \item for each recurrent class $\mathcal{R}_l$, if the Markov chain gets absorbed into another recurrent class $l' \neq l$ then it holds that $\hat{d}_T(s) = 0$ for all $s \in \mathcal{R}_l$ since, once absorbed into class $\mathcal{R}_{l'}$ the chain cannot leave the recurrent class (by definition).
    \end{itemize}
    Therefore, the chain starts in an arbitrary state $S_0 \sim p_0$ and eventually gets absorbed with probability one into a given recurrent class $\mathcal{R}_l$. Once absorbed into $\mathcal{R}_l$, the chain behaves as an independent Markov chain defined only on states $\mathcal{R}_l$ and: (i) the chain cannot leave the recurrent class, hence $\hat{d}_T(s) = 0$ for all $s \notin \mathcal{R}_l$; and (ii) $\hat{d}_T(s)$ converges almost surely to $\mu_l(s)$ for every $s \in \mathcal{R}_l$. Thus, it holds that $\lim_{H \rightarrow \infty } \frac{1}{H} \sum_{t=0}^{H-1} \mathbf{1}(S_t = s)$ can be described, in the almost surely sense, by a random variable $Y_s$ such that: (i) if $s \in \mathcal{Z}$ then $Y_s = 0$ almost surely; (ii) if $s \in \mathcal{R}_l$, i.e., if $s$ belongs to some recurrent class $\mathcal{R}_l$, and the chain gets absorbed into $\mathcal{R}_l$, then $Y_s = \mu_l(s)$ almost surely; and (iii) if $s \in \mathcal{R}_l$ but the chain gets absorbed into other recurrent class $\mathcal{R}_{l'}$, then $Y_s = 0$. Therefore, the probability density function of $Y_s$ can be described as follows: if $s \in \mathcal{Z}$, i.e., if $s$ is transient, then $\mathbb{P} \left(Y_s = 0 \right) = 1$. On the other hand, if $s$ is recurrent
    $$\mathbb{P} \left(Y_s = y \right) = \begin{cases}
        \alpha_{l(s)}, & \text{if $y=\mu_{l(s)}(s)$}, \\
        1 - \alpha_{l(s)}, & \text{$y = 0$}, \\
        0, & \text{otherwise},
    \end{cases}$$
    where
    $$\alpha_{l(s)} = \lim_{t \rightarrow \infty} \mathbb{P}(S_t \in \mathcal{R}_{l(s)} | S_0 \sim p_0)$$
    is the probability of absorption to recurrent class $\mathcal{R}_{l(s)}$, i.e., the recurrent class to which recurrent state $s$ belongs, when $S_0 \sim p_0$ and can be calculated in a closed-form manner \citep[Theo. 2.3.4.]{kallenberg_1983}
\end{proof}

\begin{lemma}
    \label{lemma:gumdp_asymptotic_behavior}
    Consider a GUMDP with finite state-space $\mathcal{S}$, finite action-space $\mathcal{A}$, transition probability function $p : \mathcal{S} \times \mathcal{A} \rightarrow \Delta(\mathcal{S})$, and let $p_0 \in \Delta(\mathcal{S})$ be the distribution of initial states. For any fixed stationary policy $\pi \in \Pi_\text{S}$, the interaction between the policy and the GUMDP gives rise to a random sequence of state-action pairs $S_0, A_0, S_1, A_1, \ldots$. Consider also the Markov chain with state space $\mathcal{S}$, transition matrix $P^\pi(s'|s) = \sum_{a \in \mathcal{A}} p(s'|s,a)\pi(a|s)$, and initial distribution $p_0$. It holds, for any $s \in \mathcal{S}$ and $a \in \mathcal{A}$, that
    $$\mathbb{P} \left( \lim_{H \rightarrow \infty } \frac{1}{H} \sum_{t=0}^{H-1} \mathbf{1}(S_t = s, A_t = a)  = Y_s \pi(a|s) \right) = 1,$$
    where $Y_s$ is the random variable defined in Lemma \ref{lemma:markov_chain_asymptotic behavior} by considering the Markov chain with transition matrix $P^\pi$.
\end{lemma}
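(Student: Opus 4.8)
The plan is to reduce the joint state-action frequency to the pure state frequency already handled by Lemma~\ref{lemma:markov_chain_asymptotic behavior}, multiplied by a conditional action frequency that I will control with the strong law of large numbers. Fix $s \in \mathcal{S}$ and $a \in \mathcal{A}$ and write $N_H(s) = \sum_{t=0}^{H-1} \mathbf{1}(S_t = s)$ for the number of visits to $s$ before time $H$. Whenever $N_H(s) > 0$ I factor
\[
\frac{1}{H} \sum_{t=0}^{H-1} \mathbf{1}(S_t = s, A_t = a) = \frac{N_H(s)}{H} \cdot \frac{1}{N_H(s)} \sum_{t : S_t = s,\, t < H} \mathbf{1}(A_t = a).
\]
By Lemma~\ref{lemma:markov_chain_asymptotic behavior} applied to the chain with transition matrix $P^\pi$, the first factor $N_H(s)/H$ converges almost surely to $Y_s$, so it remains to understand the second factor on the event determined by which recurrent class, if any, absorbs the trajectory.

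The crucial structural fact I will invoke is that, because $\pi$ is stationary and Markovian, $A_t$ is drawn from $\pi(\cdot \mid S_t)$ independently of the past given $S_t$; hence the actions chosen on successive visits to $s$ are i.i.d.\ samples from $\pi(\cdot \mid s)$, irrespective of the state dynamics. I would make this precise by introducing an i.i.d.\ sequence $\xi_1, \xi_2, \ldots \sim \pi(\cdot \mid s)$ representing the action taken on the $n$-th visit to $s$, so that $\frac{1}{N_H(s)} \sum_{t : S_t = s} \mathbf{1}(A_t = a) = \frac{1}{N_H(s)} \sum_{n=1}^{N_H(s)} \mathbf{1}(\xi_n = a)$.

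Next I split into the two cases of Lemma~\ref{lemma:markov_chain_asymptotic behavior}. If $Y_s = 0$ --- either $s$ is transient, or $s$ is recurrent but the trajectory is absorbed into a different class --- then $s$ is visited only finitely often, so $N_H(s)$ is eventually constant and the whole expression is bounded above by $N_H(s)/H \to 0 = Y_s \pi(a \mid s)$. If $Y_s > 0$, the trajectory is absorbed into the recurrent class containing $s$, so $N_H(s) \to \infty$; applying the strong law of large numbers to $\xi_1, \xi_2, \ldots$ gives $\frac{1}{N_H(s)} \sum_{n=1}^{N_H(s)} \mathbf{1}(\xi_n = a) \to \pi(a \mid s)$ almost surely, and multiplying by the first factor yields the limit $Y_s \pi(a \mid s)$. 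Since these events partition the sample space up to a null set and the claimed limit holds almost surely on each, the conclusion follows.

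The main obstacle I anticipate is the rigorous justification of the ``i.i.d.\ actions at visit times'' step together with applying the strong law when the number of summands $N_H(s)$ is itself random. I would handle this through the coupling above: conditional on the entire state trajectory, the actions are independent with $A_t \sim \pi(\cdot \mid S_t)$, so restricting to the visit times of $s$ gives genuinely i.i.d.\ $\pi(\cdot \mid s)$ draws, and since $N_H(s) \to \infty$ on the relevant event the SLLN transfers to the random index. An equivalent route, consistent with the extended-chain device used in the proof of Theorem~\ref{theo:finite_vs_infinite_unichain}, would be to apply Lemma~\ref{lemma:markov_chain_asymptotic behavior} directly to the extended chain on $\mathcal{S} \times \mathcal{A}$ whose recurrent classes carry stationary distributions $\tilde{\mu}_l(s,a) = \mu_l(s)\pi(a \mid s)$; there the only point to verify is that absorption probabilities into the extended recurrent classes coincide with the original $\alpha_l$, which holds because the absorption events correspond bijectively.
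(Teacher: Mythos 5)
Your route is genuinely different from the paper's, and it is essentially sound. The paper proves this lemma by constructing the extended Markov chain on $\mathcal{S} \times \mathcal{A}$ with kernel $\tilde{P}(s',a'\,|\,s,a) = p(s'|s,a)\,\pi(a'|s')$, establishing a structural correspondence between the two chains (one-to-one matching of recurrent classes, equality of absorption probabilities $\alpha_l = \tilde{\alpha}_l$, and the identity $\tilde{\mu}_l(s,a) = \mu_l(s)\pi(a|s)$ for the stationary distributions), and then applying Lemma~\ref{lemma:markov_chain_asymptotic behavior} directly to the extended chain. Your primary argument instead factors the empirical frequency as $\frac{N_H(s)}{H} \cdot \frac{1}{N_H(s)}\sum_{t < H:\, S_t = s} \mathbf{1}(A_t = a)$, applies the state-level lemma to the first factor, and controls the second factor by a law of large numbers for the actions taken at visit times. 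This is more elementary and more modular: it reuses the state-level lemma as a black box and avoids re-deriving the entire chain-structure correspondence (your ``alternative route'' at the end is precisely the paper's proof, and you correctly identify that its only delicate point is matching the absorption probabilities). The case split on $Y_s = 0$ versus $Y_s > 0$, the bound by $N_H(s)/H$ in the finitely-visited case, and the composition of the SLLN limit with the random index $N_H(s) \to \infty$ are all handled correctly.

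There is, however, one flawed justification that needs repair. You assert that ``conditional on the entire state trajectory, the actions are independent with $A_t \sim \pi(\cdot \mid S_t)$.'' This is false: conditioning on the state path includes conditioning on $S_{t+1}$, which tilts the action law; the correct conditional distribution of $A_t$ given the full state trajectory is proportional to $\pi(a|S_t)\,p(S_{t+1}|S_t,a)$, not $\pi(a|S_t)$. Concretely, in $\mathcal{M}_{f,3}$, conditional on the path $(s_0, s_1, s_1, \ldots)$ the action $A_0$ equals $a_0$ with probability one, whatever $\pi(\cdot|s_0)$ is. The conclusion you want (actions at successive visits to $s$ form i.i.d.\ $\pi(\cdot|s)$ draws) is nevertheless true, but its correct justification is sequential rather than conditional-on-the-path: either invoke the strong Markov property at the visit times $\tau_n$ (which are stopping times, so $A_{\tau_n}$ is distributed as $\pi(\cdot|s)$ independently of the past $\mathcal{F}_{\tau_n}$, though not of the future), or, equivalently, realize the process \`a la Ionescu--Tulcea by pre-drawing an i.i.d.\ array $\xi_1, \xi_2, \ldots \sim \pi(\cdot|s)$ and using $\xi_n$ as the action on the $n$-th visit to $s$ --- which is exactly the coupling you introduce. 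The SLLN then applies to the full sequence $(\xi_n)$ unconditionally, and composes with $N_H(s) \to \infty$ on the absorption event. With that one sentence replaced, your proof is complete.
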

\begin{proof}
    For a given fixed policy $\pi \in \Pi_\text{S}$, consider two Markov chains:
    \begin{itemize}
        \item the first Markov chain has state space $\mathcal{S}$, transition matrix $P^\pi(s'|s) = \sum_{a \in \mathcal{A}} p(s'|s,a)\pi(a|s)$ and $p_0$ as defined in the original GUMDP. Assume this Markov chain can be partitioned into $L$ disjoint recurrent classes $\mathcal{R}_1, \ldots, \mathcal{R}_L$ and a set of transient states $\mathcal{Z}$. For each recurrent state $s \in \mathcal{R}_1 \cup \ldots \cup \mathcal{R}_L$, we let $l(s)$ denote the index of the recurrent class to which state $s$ belongs. Let $\alpha_l$ denote the probability of absorption to recurrent class $\mathcal{R}_l$ given $p_0$. Every recurrent class $\mathcal{R}_l$ can be treated as an independent Markov chain, associated with a unique stationary distribution $\mu_l$, which satisfies $\mu_l(s) > 0$ for all $s \in \mathcal{R}_l$ and $\mu_l(s) = 0$ for all $s \notin \mathcal{R}_l$ \cite{puterman2014markov}.
        \item the second Markov chain, which we call extended Markov chain, has state space $\Tilde{\mathcal{S}} = \mathcal{S} \times \mathcal{A}$ (hence we denote with the pair $(s,a)$ a given state of the extended Markov chain), transition matrix $\tilde{P}(s',a' |s,a) = p(s'|s,a) \pi(a'|s')$, and $\tilde{p}_0(s,a) = p_0(s) \pi(a|s)$. This Markov chain encapsulates both the transition dynamics  $p$ and the policy $\pi$ within the transition matrix $\tilde{P}$. It should also be clear that a random trajectory from the extended Markov chain $\left( (S_0,A_0), (S_1,A_1), \ldots \right)$ precisely describes a random sequence of state-action pairs when using $\pi \in \Pi_\text{S}$ to interact with the GUMDP. Assume we can partition the extended Markov chain into $L$ disjoint recurrent classes $\Tilde{\mathcal{R}}_1, \ldots, \Tilde{\mathcal{R}}_L$ and a set of transient states $\Tilde{\mathcal{Z}}$. For each recurrent state $(s,a) \in \tilde{\mathcal{R}}_1 \cup \ldots \cup \tilde{\mathcal{R}}_L$, we let $\tilde{l}(s,a)$ denote the index of the recurrent class to which state $(s,a)$ belongs. Let also $\tilde{\alpha}_l$ denote the probability of absorption to recurrent class $\tilde{\mathcal{R}}_l$. Every recurrent class $\tilde{\mathcal{R}}_l$ can be treated as an independent Markov chain, associated with a unique stationary distribution $\tilde{\mu}_l$, which satisfies $\tilde{\mu}_l(s,a) > 0$ for all $s \in \tilde{\mathcal{R}}_l$ and $\mu_l(s,a) = 0$ for all $s \notin \tilde{\mathcal{R}}_l$ \cite{puterman2014markov}.
    \end{itemize}

    \noindent We are now interested in understanding how the long-term behavior of both chains is related, for fixed $\pi \in \Pi_\text{S}$. We make the following remarks:
    \begin{enumerate}
        \item if $\pi(a|s) = 0$ for some state-action pair $(s,a)$, then $(s,a)$ is never visited in the extended Markov chain for any distribution of initial states $p_0$. 
        \item For each $(s,a)$ such that $\pi(a|s) > 0$, if $(s,a)$ is transient, then all states $(s,a')$ for $a' \in \mathcal{A}$ such that $\pi(a'|s) > 0$ are also transient. This is true because if some pair $(s,a)$ is only finitely visited (from the definition of a transient state) it implies that state $s$ is also finitely visited and, therefore, all pairs $(s,a')$ for $a' \in \mathcal{A}$ such that $\pi(a'|s)>0$ need also to be finitely visited. For each $(s,a)$ such that $\pi(a|s) > 0$, if $(s,a)$ is recurrent, then all states $(s,a')$ for $a' \in \mathcal{A}$ such that $\pi(a'|s) > 0$ are also recurrent. This is true because, if some pair $(s,a)$ is infinitely visited (from the definition of a recurrent state) it implies that state $s$ is also infinitely visited and, therefore, all pairs $(s,a')$ for $a' \in \mathcal{A}$ such that $\pi(a'|s)>0$ need also to be infinitely visited. 
        \item A given state $s$ is transient for $P^\pi$ if and only if states $\{(s,a) : \pi(a|s) > 0\}$ are transient for $\tilde{P}$. A given state $s$ is recurrent for $P^\pi$ if and only if states $\{(s,a) : \pi(a|s) > 0\}$ are recurrent for $\tilde{P}$.
        \item Two states $s$ and $s'$ are reachable from each other for $P^\pi$ if and only if all states in $\{(s,a) : \pi(a|s) > 0\}$ and all states in $\{(s',a) : \pi(a|s') > 0\}$ are reachable from each other for $\tilde{P}$.
        \item Given the point above, a given set of states $\mathcal{C}$ is communicating for $P^\pi$, i.e., all states in $\mathcal{C}$ are reachable from each other according to $P^\pi$, if and only if the set of states $\{(s,a): s \in \mathcal{C} \text{ and } \pi(a|s)>0 \}$ is communicating for $\tilde{P}$.
        \item A communicating class $\mathcal{C}$ is closed for $P^\pi$, i.e., it is impossible to leave $\mathcal{C}$, if and only if the set of states $\{(s,a): s \in \mathcal{C} \text{ and } \pi(a|s)>0 \}$ is closed for $\tilde{P}$.
        \item Since a recurrent class is a set of states that is communicating and closed, given the two points above, there exists a one-to-one correspondence between the recurrent classes $\mathcal{R}_1, \ldots, \mathcal{R}_L$ and $\tilde{\mathcal{R}}_1, \ldots, \tilde{\mathcal{R}}_L$. In particular, $\tilde{\mathcal{R}}_l = \{(s,a) : s \in \mathcal{R}_l \text{ and } \pi(a|s) > 0 \}$ and $\mathcal{R}_l = \{ s : (s,a) \in \tilde{R}_l \text{ and } \pi(a|s) > 0 \}$.
        \item The probabilities of absorption into any of the recurrent classes are the same for both Markov chains, i.e., $\alpha_l = \tilde{\alpha}_l$ for all $l \in \{1,\ldots,L\}$. Let $\alpha_{s,l}$ denote the probability of absorption into recurrent class $\mathcal{R}_l$ in the Markov chain $P^\pi$ when the initial state is $s$, and $\tilde{\alpha}_{(s,a),l}$ denote the probability of absorption into recurrent class $\tilde{\mathcal{R}}_l$ in the extended Markov chain when the initial state is $(s,a)$. It holds, for any $l \in \{1,\ldots, L\}$, that
        \begin{equation}
            \label{eq:lemma_proof_absorption_probs}
            \alpha_l = \sum_{s \in \mathcal{S}} p_0(s) \alpha_{s,l}
        \end{equation}
        and
        \begin{align*}
            \tilde{\alpha}_l &= \sum_{s \in \mathcal{S}} \sum_{a \in \mathcal{A}} p_0(s,a) \tilde{\alpha}_{(s,a), l}\\
            &= \sum_{s \in \mathcal{S}} \sum_{a \in \mathcal{A}} p_0(s) \pi(a|s) \tilde{\alpha}_{(s,a), l}.
        \end{align*}
        It should also be clear that $\alpha_{s,l} = \sum_{a \in \mathcal{A}} \pi(a|s) \tilde{\alpha}_{(s,a),l}$ given the one-to-one correspondence between the recurrent classes of both Markov chains. Replacing $\alpha_{s,l} = \sum_{a \in \mathcal{A}} \pi(a|s) \tilde{\alpha}_{(s,a),l}$ in \eqref{eq:lemma_proof_absorption_probs} yields $\tilde{\alpha}_l$ and, hence, $\alpha_l = \tilde{\alpha}_l$. We refer to \citep[Theo. 2.3.4.]{kallenberg_1983} for a closed-form expression to calculate the probabilities of absorption into each recurrent class.
        \item For every recurrent class $\tilde{\mathcal{R}}_l$ of the extended Markov chain it holds that $\tilde{\mu}_l(s,a) = \mu_l(s)\pi(a|s)$. This is true because $\tilde{\mu}_l$ is a stationary distribution for the extended Markov chain if it satisfies
        \begin{align} 
            \tilde{\mu}_l(s',a') &= \sum_{s \in \mathcal{S}} \sum_{a \in \mathcal{A}} \tilde{P}(s',a'|s,a) \tilde{\mu}_l(s,a), \; \forall s', a' \label{eq:lemma_proof_eq_1} \\
            1 &= \sum_{s \in \mathcal{S}} \sum_{a \in \mathcal{A}} \tilde{\mu}_l(s,a) \label{eq:lemma_proof_eq_2}.
        \end{align}
        Letting $\tilde{\mu}_l(s,a) = \mu_l(s) \pi(a|s)$ satisfies \eqref{eq:lemma_proof_eq_1} since, when $\pi(a'|s') > 0$,
        \begin{align*}
            \mu_l(s') \pi(a'|s') &= \sum_{s \in \mathcal{S}} \sum_{a \in \mathcal{A}} \pi(a|s) \tilde{P}(s',a'|s,a) \mu_l(s) \\
            \iff \mu_l(s') \pi(a'|s') &= \sum_{s \in \mathcal{S}} \sum_{a \in \mathcal{A}} \pi(a|s) \pi(a'|s') p(s'|s,a) \mu_l(s) \\
            \iff \mu_l(s') &= \sum_{s \in \mathcal{S}} \sum_{a \in \mathcal{A}} \pi(a|s) p(s'|s,a) \mu_l(s) \\
            \iff \mu_l(s') &= \sum_{s \in \mathcal{S}} P^\pi(s'|s) \mu_l(s),
        \end{align*}
        and the last equality above holds since $\mu_l$ is the stationary distribution of the Markov chain with transition matrix $P^\pi$. If $\pi(a'|s') = 0$, then \eqref{eq:lemma_proof_eq_1} also holds given that $\tilde{\mu}_l(s',a') = \mu_l(s') \pi(a'|s') = 0$. Equation \eqref{eq:lemma_proof_eq_2} is also straightforwardly satisfied. It can also be seen from the equations above that the stationary distribution $\tilde{\mu}_\pi$ is unique.
    \end{enumerate}
    
    \noindent In summary, the limiting behavior of the extended Markov chain can be described by the Markov chain with transition matrix $P^\pi$ since the recurrent classes of both Markov chains are intrinsically related (with the same probabilities of absorption), and the stationary distributions for each of the recurrent classes $\tilde{\mathcal{R}}_l$ of the extended Markov chain satisfy $\tilde{\mu}_l(s,a) = \mu_l(s) \pi(a|s)$.

    \bigbreak

    \noindent Thus, for the extended Markov chain and any $(s,a)$, according to Lemma \ref{lemma:markov_chain_asymptotic behavior}, it holds that
        $$\mathbb{P} \left( \lim_{H \rightarrow \infty } \frac{1}{H} \sum_{t=0}^{H-1} \mathbf{1}(S_t = s, A_t=a)  = \tilde{Y}_{(s,a)} \right) = 1,$$
    where $\tilde{Y}_{(s,a)}$ is a random variable such that, if $(s,a) \in \tilde{\mathcal{Z}}$ then $\mathbb{P}(\tilde{Y}_{(s,a)}=0) = 1$, and if $(s,a)$ belongs to recurrent class $\tilde{\mathcal{R}}_l$ 
    $$\mathbb{P} \left(\tilde{Y}_{(s,a)} = y \right) = \begin{cases}
        \tilde{\alpha}_l, & \text{if $y=\tilde{\mu}_l(s,a)$}, \\
        1 - \tilde{\alpha}_l, & \text{if $y = 0$}, \\
        0, & \text{otherwise},
    \end{cases}$$
    and
    $$\tilde{\alpha}_{l} = \lim_{t \rightarrow \infty} \mathbb{P}(\tilde{S}_t \in \tilde{\mathcal{R}}_l | \tilde{S}_0 \sim \tilde{p}_0)$$
    denotes the probability of absortion into recurrent class $\tilde{\mathcal{R}}_l$ when the initial state $\tilde{S}_0$ is distributed according to $\tilde{p}_0$. Since there exists a one-to-one equivalence between the sets of recurrent classes of both Markov chains, the probabilities of absorption into each recurrent class are the same in both Markov chains, and $\tilde{\mu}_l(s,a) = \mu_l(s) \pi(a|s)$, it holds that random variable $\tilde{Y}_{(s,a)}$ can be equivalently described by $Y_s \pi(a|s)$, where $Y_s$ is the random variable defined in Lemma~\ref{lemma:markov_chain_asymptotic behavior} for the Markov chain with transition matrix $P^\pi$.
\end{proof}

\subsubsection{Proof of Theorem \ref{theo:average_strongly_cvx_expected_lower_bound}}

\begin{customthm}{\ref{theo:average_strongly_cvx_expected_lower_bound}}
    Let $\mathcal{M}_f$ be an average GUMDP with $c$-strongly convex $f$ and $K \in \mathbb{N}$ be the number of sampled trajectories. Consider also the Markov chain with state-space $\mathcal{S}$, transition matrix $P^\pi$ and initial states distribution $p_0$. Let $\mathcal{R}$ be the set of all recurrent states of the Markov chain and $\mathcal{R}_1, \ldots, \mathcal{R}_L$ the sets of recurrent classes, each associated with stationary distribution $\mu_l$. For each $s \in \mathcal{R}$, let $l(s)$ denote the index of the recurrent class to which $s$ belongs. Then, for any policy $\pi \in \Pi_\text{S}$, it holds that
    \begin{equation*}
    f_{K}(\pi) - f_\infty(\pi) \ge \frac{c}{2K} \sum_{\substack{s \in \mathcal{R} \\ a \in \mathcal{A}}} \pi(a|s)^2 \left(\mu_{l(s)}(s)\right)^2 \underset{ B \sim \text{Ber}\left(\alpha_{l(s) } \right)}{\textup{Var}} \left[B\right],
    \end{equation*}
    where $B \sim \text{Ber}\left(p\right)$ denotes that $B$ is distributed according to a Bernoulli distribution such that $\mathbb{P}(B=1)=p$ and
    $$\alpha_{l(s)} = \lim_{t \rightarrow \infty} \mathbb{P}(S_t \in \mathcal{R}_{l(s)} | S_0 \sim p_0)$$
    is the probability of absorption to recurrent class $\mathcal{R}_{l(s)}$ when $S_0 \sim p_0$.
\end{customthm}

\begin{proof}
    For any policy $\pi \in \Pi_\text{S}$ it holds that
    \begin{align*}
    f_{K}(\pi) - f_\infty(\pi)  &= \mathbb{E}_{\mathcal{T}_K}\left[ f(\hat{d}_{\mathcal{T}_K}) \right] - f\left( \mathbb{E}_{\mathcal{T}_K}\left[ \hat{d}_{\mathcal{T}_K} \right] \right) \\
    &\overset{\text{(a)}}{\ge} \frac{c}{2} \mathbb{E}_{\mathcal{T}_K}\left[ \left\| \hat{d}_{\mathcal{T}_K} - d_\pi \right\|_2^2 \right] \\
    &= \frac{c}{2} \mathbb{E}_{\mathcal{T}_K}\left[ \sum_{s \in \mathcal{S}, a \in \mathcal{A}} \left( \hat{d}_{\mathcal{T}_K}(s,a) - d_\pi(s,a) \right)^2 \right] \\
    &= \frac{c}{2} \sum_{s \in \mathcal{S}, a \in \mathcal{A}} \mathbb{E}_{\mathcal{T}_K}\left[ \left( \hat{d}_{\mathcal{T}_K}(s,a) - d_\pi(s,a) \right)^2 \right] \\
    &= \frac{c}{2} \sum_{s \in \mathcal{S}, a \in \mathcal{A}} \textup{Var}_{\mathcal{T}_K}\left[ \hat{d}_{\mathcal{T}_K}(s,a) \right] \\
    &= \frac{c}{2} \sum_{s \in \mathcal{S}, a \in \mathcal{A}} \textup{Var}_{\{T_1, \ldots, T_K\}}\left[ \frac{1}{K} \sum_{k=1}^K \hat{d}_{T_k}(s,a) \right] \\
    &= \frac{c}{2K} \sum_{s \in \mathcal{S}, a \in \mathcal{A}} \textup{Var}_{T \sim \zeta_\pi}\left[ \hat{d}_{T}(s,a) \right]\\
    &= \frac{c}{2K} \sum_{s \in \mathcal{S}, a \in \mathcal{A}} \left( \mathbb{E}_{T \sim \zeta_\pi}\left[ \hat{d}_{T}(s,a)^2 \right] - \mathbb{E}_{T \sim \zeta_\pi}\left[ \hat{d}_{T}(s,a) \right]^2 \right),
    \end{align*}
    where (a) follows from the strongly convex assumption and the fact that
    \begin{align*}
        f(X) &\ge f(\mathbb{E}[X]) +  \nabla f(\mathbb{E}[X])^\top (X - \mathbb{E}[X]) +\frac{c}{2} \left\|X - \mathbb{E}[X] \right\|_2^2 \\
        &\implies \mathbb{E}[f(X)] \ge f(\mathbb{E}[X]) + \frac{c}{2} \mathbb{E}\left[ \left\|X - \mathbb{E}[X] \right\|_2^2 \right],
    \end{align*}
    where $X$ is a random vector.

    \noindent Focusing on the term $\mathbb{E}_{T \sim \zeta_\pi}\left[ \left( \hat{d}_{T}(s,a) \right)^2 \right]$ we have that
    \begin{align*}
        \mathbb{E}_{T \sim \zeta_\pi}\left[ \left( \hat{d}_{T}(s,a) \right)^2 \right] &= \mathbb{E}_{T \sim \zeta_\pi}\left[ \left( \lim_{H \rightarrow \infty} \frac{1}{H} \sum_{t=0}^{H-1} \mathbf{1}\left(S_t = s, A_t = a\right)\right)^2 \right] \\
        &\overset{\text{(a)}}{=} \mathbb{E}_{Y_{s}}\left[ \left(Y_{s} \pi(a|s) \right)^2 \right]\\
        &= \pi(a|s)^2 \mathbb{E}_{Y_{s}}\left[ Y_{s}^2 \right].
    \end{align*}
    Let $Z_H = \frac{1}{H} \sum_{t=0}^{H-1} \mathbf{1}\left(S_t = s, A_t = a\right)$. Step (a) above holds because: (i) from Lemma~\ref{lemma:gumdp_asymptotic_behavior} it holds that $\mathbb{P}\left(\lim_{H \rightarrow \infty} Z_H = Y_{s}\pi(a|s) \right) = 1$, i.e., $Z_H \rightarrow Y_{s}\pi(a|s)$ almost surely (note also that $|Z_H| \le 1$ for all $H \in \mathbb{N}$); (ii) since $g(x) = x^2$ is continuous it holds that $g(Z_H) \rightarrow g(Y_{s}\pi(a|s))$ almost surely (note also that $|g(Z_H)| \le 1$ for all $H \in \mathbb{N}$); (iii) from the bounded convergence theorem \citep[Theo. 1.6.7.]{durrett_1996} it holds that $\mathbb{E}_{T}\left[ g\left( \lim_{H \rightarrow \infty } Z_{H} \right) \right] = \mathbb{E}_{Y_s}\left[ g( Y_s \pi(a|s) ) \right]$. Also in (a), $Y_{s}$ is distributed according to the probability density function defined in Lemma~\ref{lemma:markov_chain_asymptotic behavior}.

    \noindent For the term $\mathbb{E}_{T \sim \zeta_\pi}\left[ \left( \hat{d}_{T}(s,a) \right) \right]^2$, via similar arguments, it holds that
    $$\mathbb{E}_{T \sim \zeta_\pi}\left[ \left( \hat{d}_{T}(s,a) \right) \right]^2 = \mathbb{E}_{Y_{s}}\left[ \left(Y_{s} \pi(a|s) \right) \right]^2 = \pi(a|s)^2 \mathbb{E}_{Y_{s}}\left[ \left(Y_{s} \right) \right]^2.$$

    Replacing both terms in our original lower bound yields
        \begin{align*}
    f_{K}(\pi) - f_\infty(\pi) &\ge \frac{c}{2K} \sum_{s \in \mathcal{S}, a \in \mathcal{A}} \pi(a|s)^2 \left(\mathbb{E}_{Y_{s}}\left[ Y_{s}^2 \right] -  \mathbb{E}_{Y_{s}}\left[ \left(Y_{s}  \right) \right]^2 \right)\\
    &= \frac{c}{2K} \sum_{s \in \mathcal{S}, a \in \mathcal{A}} \pi(a|s)^2 \textup{Var}\left[Y_{s}\right]\\
    &\overset{\text{(a)}}{=} \frac{c}{2K} \sum_{s \in \mathcal{R}}\sum_{a \in \mathcal{A}} \pi(a|s)^2 \textup{Var}\left[Y_{s}\right]\\
    &\overset{\text{(b)}}{=} \frac{c}{2K} \sum_{s \in \mathcal{R}}\sum_{a \in \mathcal{A}} \pi(a|s)^2 \textup{Var}_{B \sim \text{Bernoulli}\left(p=\alpha_{l(s)}\right)}\left[\mu_{l(s)}(s)B\right]\\
    &= \frac{c}{2K} \sum_{l=1}^L \text{Var}_{B \sim \text{Ber}(\alpha_l) } \left[ B \right] \sum_{s \in \mathcal{R}_l} \sum_{a \in \mathcal{A}} \pi(a|s)^2  \mu_l(s)^2
    \end{align*}
    where in (a) we let $\mathcal{R}$ denote the set of all recurrent states for the Markov chain with transition matrix $P^\pi$ and the equality holds because if $s$ is transient then $\mathbb{P}(Y_{s} = 0) = 1$ and, hence, $\textup{Var}\left[Y_{s}\right] = 0$. In (b), the equality holds since we can rewrite random variable $Y_{s}$ using a scaled Bernoulli random variable.
\end{proof}

\section{Empirical Results}
In Algorithm~\ref{algo:approximating_f_d_pi} we present the pseudocode of the sampling scheme used to approximate the different GUMDPs formulations, under both discounted and average occupancies. The different objectives can be approximated, for a sufficiently large number of iterations $N$, by running Algorithm~\ref{algo:approximating_f_d_pi} with the desired $K$, $H$, and $\gamma$ parameters. Under the discounted setting, we vary parameters $K$, $H$, and $\gamma$. Under the average setting, we vary $K$ while setting $\gamma \approx 1$ and $H=\infty$. Under both the average and discounted setting, we set $K=\infty$ and $H=\infty$ to compute the infinite trials objective. 

\begin{algorithm}[H]
\caption{Estimating $f_{K,H}(\pi)$ via samples.}
\label{algo:approximating_f_d_pi}
\begin{algorithmic}[1]
\STATE \textbf{Inputs:} $N \in \mathbb{N}$ (num. of iterations), $K \in \mathbb{N}$ (num. of trajectories), $H \in \mathbb{N}$ (trajectories' horizon), and $\gamma$ (discount factor).
\STATE $\hat{f}_0 = 0$
\FOR{$n$ in $\{1, \ldots, N\}$}
\STATE $\{\tau_1, \ldots, \tau_K\} \sim \zeta_\pi$
\STATE $\hat{d}_K(s,a) = \frac{1-\gamma}{1-\gamma^H} \frac{1}{K} \sum_{k=1}^K \sum_{t=0}^{H-1} \gamma^t \mathbf{1}(s_{k,t}=s,a_{k,t}=a), \; \forall s,a$
\STATE $\hat{f}_n = \hat{f}_{n-1}  + \frac{f(\hat{d}_K) - \hat{f}_{n-1}}{N}$
\ENDFOR\\
\STATE \textbf{Return:} $\hat{f}_N$
\end{algorithmic}
\end{algorithm}

\subsection{Empirical results for the GUMDPs in Fig.~\ref{fig:illustrative_gumdps}}
We consider the GUMDPs depicted in Fig.~\ref{fig:illustrative_gumdps}, representative of three tasks in the convex RL literature. Under $\mathcal{M}_{f,1}$ we let $\pi(\text{left}| s_0) = \pi(\text{right}| s_0) = 0.5$, $\pi(\text{right}|s_1) = 1$ (zero otherwise), and $\pi(\text{left}|s_2) = 1$ (zero otherwise); for both $\mathcal{M}_{f,2}$ and $\mathcal{M}_{f,3}$ we let $\pi$ be the uniformly random policy. Figures~\ref{appendix:fig:policy_evaluation:results:entropy_1}, \ref{appendix:fig:policy_evaluation:results:imitation_1}, and \ref{appendix:fig:policy_evaluation:results:quadratic_1} display the results obtained under the different GUMDPs illustrated in Fig.~\ref{fig:illustrative_gumdps} for different $\gamma$, $K$ and $H$ values. Figure~\ref{appendix:fig:policy_evaluation:results:gammas_merged} displays the results obtained for the three GUMDPs for different $K$ and $\gamma$ values with $H=\infty$.

\begin{figure}[h]
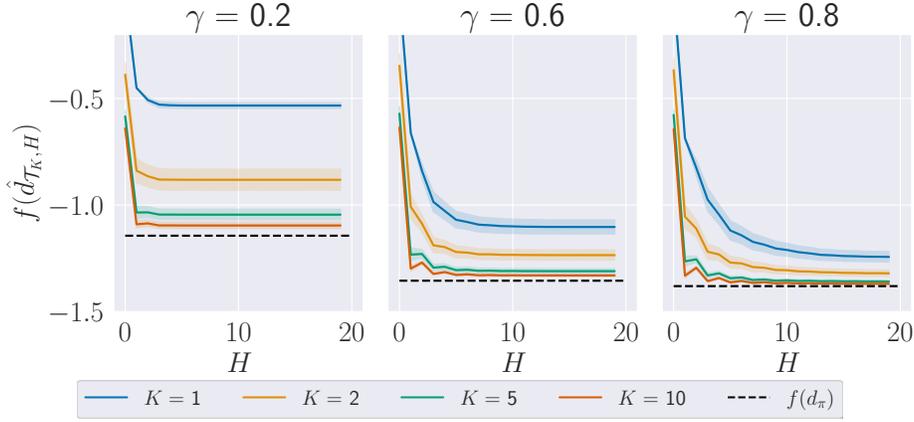

    \centering
    \includegraphics[height=5.0cm]{imgs/policy_eval_empirical_results/entropy_mdp_plot1.pdf}
    \includegraphics[height=0.5cm]{imgs/legend.pdf}
    \caption{($\mathcal{M}_{f,1}$, standard transitions) Empirical study of $f(\hat{d}_{\mathcal{T}_K,H})$ for different $K$, $H$ and $\gamma$ values under GUMDP $\mathcal{M}_{f,1}$ with policy $\pi(\text{left}|s_0) = 0.5$, $\pi(\text{right}|s_0) = 0.5$, $\pi(\text{right}|s_1) = 1$, $\pi(\text{left}|s_2) = 1$. The results are computed over 100 random seeds. Shaded areas correspond to the 95\% bootstrapped confidence intervals.}
    \label{appendix:fig:policy_evaluation:results:entropy_1}
\end{figure}

\begin{figure}[h]
    \centering
    \includegraphics[height=5.0cm]{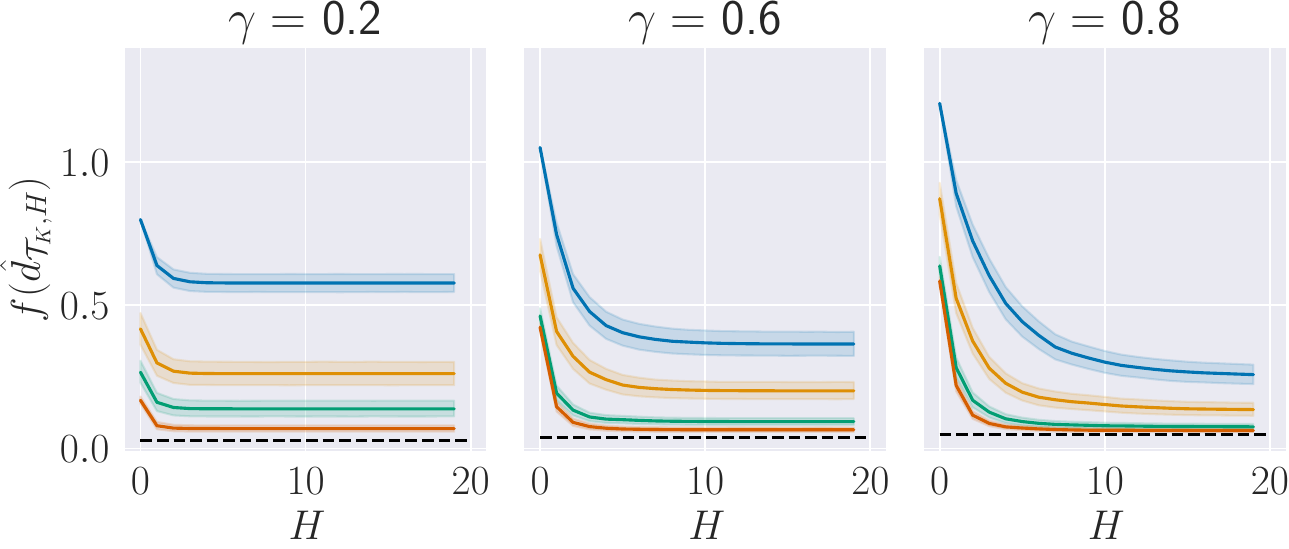}
    \includegraphics[height=0.5cm]{imgs/legend.pdf}
    \caption{($\mathcal{M}_{f,2}$, standard transitions) Empirical study of $f(\hat{d}_{\mathcal{T}_K,H})$ for different $K$, $H$ and $\gamma$ values under GUMDP $\mathcal{M}_{f,2}$ with a uniformly random policy. The results are computed over 100 random seeds. Shaded areas correspond to the 95\% bootstrapped confidence intervals.}
    \label{appendix:fig:policy_evaluation:results:imitation_1}
\end{figure}

\begin{figure}[h]
    \centering
    \includegraphics[height=5.0cm]{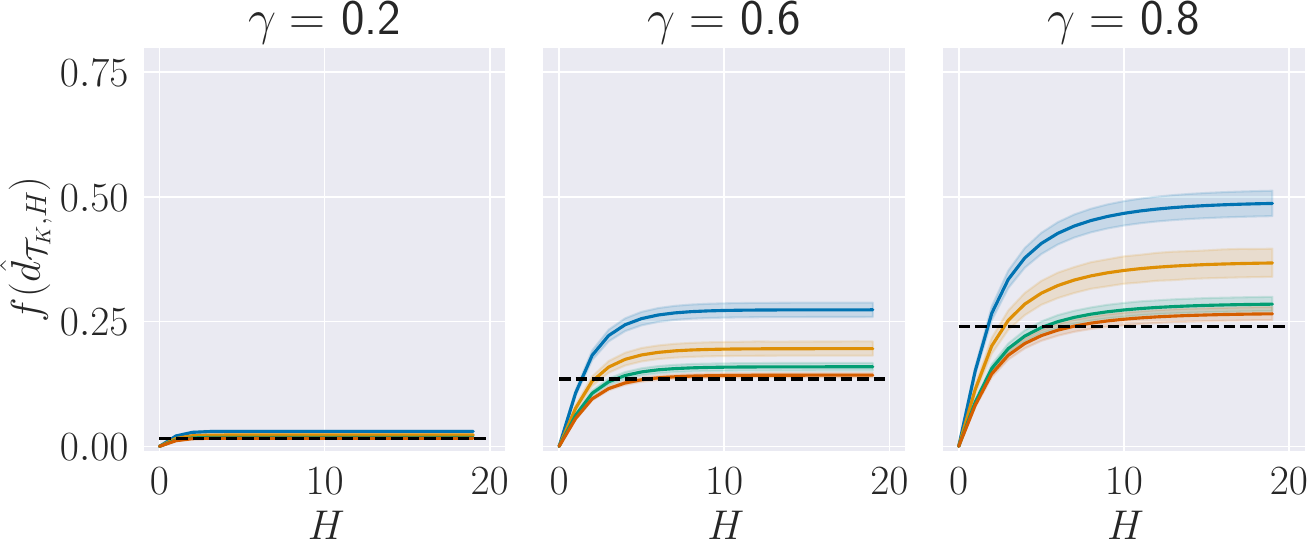}
    \includegraphics[height=0.5cm]{imgs/legend.pdf}
    \caption{($\mathcal{M}_{f,3}$, standard transitions) Empirical study of $f(\hat{d}_{\mathcal{T}_K,H})$ for different $K$, $H$ and $\gamma$ values under GUMDP $\mathcal{M}_{f,3}$ with a uniformly random policy. The results are computed over 100 random seeds. Shaded areas correspond to the 95\% bootstrapped confidence intervals.}
    \label{appendix:fig:policy_evaluation:results:quadratic_1}
\end{figure}

\begin{figure}[h]
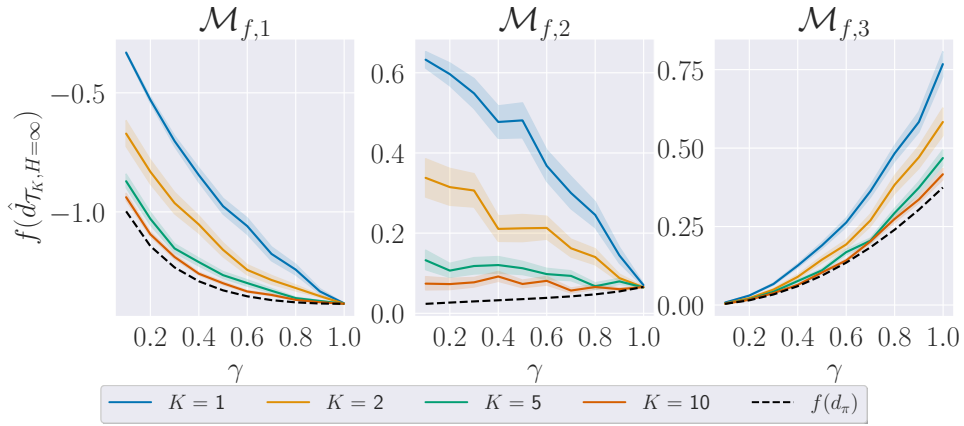

    \centering
    \includegraphics[height=5.0cm]{imgs/policy_eval_empirical_results/gammas_merged_plot.pdf}
    \includegraphics[height=0.5cm]{imgs/legend.pdf}
    \caption{(Standard transitions) Empirical study of $f(\hat{d}_{\mathcal{T}_K,H=\infty})$ for different $K$ and $\gamma$ values with $H=\infty$. The results are computed over 100 random seeds. Shaded areas correspond to the 95\% bootstrapped confidence intervals.}
    \label{appendix:fig:policy_evaluation:results:gammas_merged}
\end{figure}

\subsection{Empirical results for the GUMDPs in Fig.~\ref{fig:illustrative_gumdps} with noisy transitions}
We consider again the three GUMDPs illustrated in Fig.~\ref{fig:illustrative_gumdps}, but add a small amount of noise to the transition matrices of each GUMDP so that there is a non-zero probability, at each timestep, of transitioning from a given state to any other arbitrary state. Under $\mathcal{M}_{f,1}$ we let $\pi(\text{left}| s_0) = \pi(\text{right}| s_0) = 0.5$, $\pi(\text{right}|s_1) = 1$ (zero otherwise), and $\pi(\text{left}|s_2) = 1$ (zero otherwise); for both $\mathcal{M}_{f,2}$ and $\mathcal{M}_{f,3}$ we let $\pi$ be the uniformly random policy. Figures~\ref{appendix:fig:policy_evaluation:results:noisy_entropy_1}, \ref{appendix:fig:policy_evaluation:results:noisy_imitation_1}, and \ref{appendix:fig:policy_evaluation:results:noisy_quadratic_1} display the results obtained under the different GUMDPs illustrated in Fig.~\ref{fig:illustrative_gumdps} for different $\gamma$, $K$ and $H$ values. Figure~\ref{appendix:fig:policy_evaluation:results:noisy_gammas_merged} displays the results obtained for the three GUMDPs for different $K$ and $\gamma$ values with $H=\infty$ and noisy transitions.

\begin{figure}[h]
    \centering
    \includegraphics[height=5.0cm]{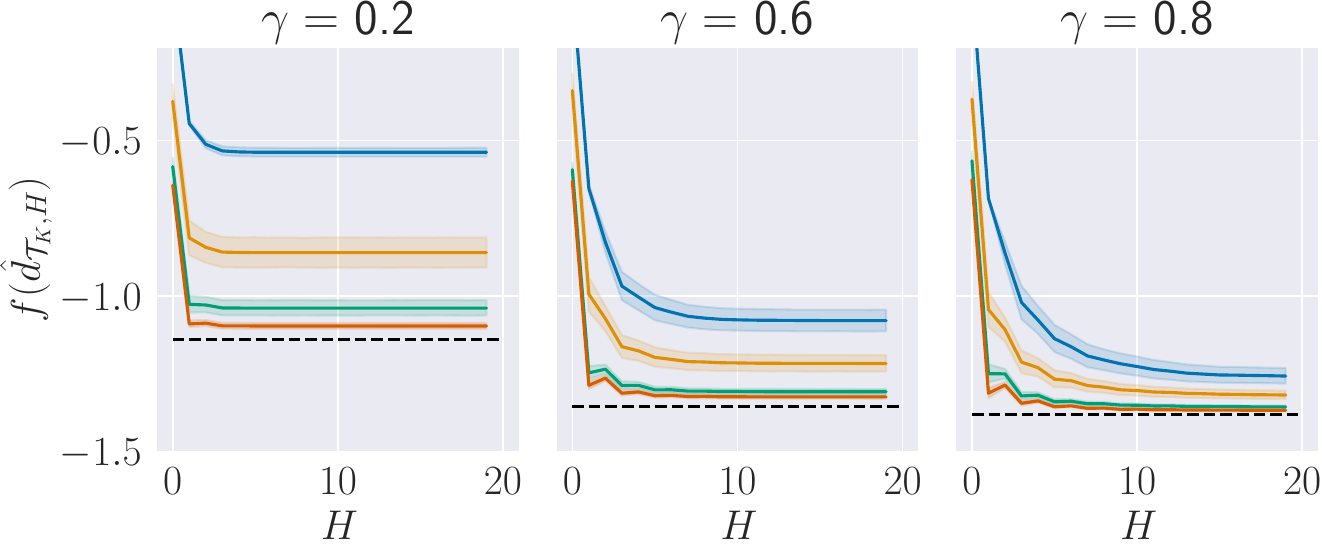}
    \includegraphics[height=0.5cm]{imgs/legend.pdf}
    \caption{($\mathcal{M}_{f,1}$, noisy transitions) Empirical study of $f(\hat{d}_{\mathcal{T}_K,H})$ for different $K$, $H$ and $\gamma$ values under GUMDP $\mathcal{M}_{f,1}$ with noisy transitions and policy $\pi(\text{left}|s_0) = 0.5$, $\pi(\text{right}|s_0) = 0.5$, $\pi(\text{right}|s_1) = 1$, $\pi(\text{left}|s_2) = 1$. The results are computed over 100 random seeds. Shaded areas correspond to the 95\% bootstrapped confidence intervals.}
    \label{appendix:fig:policy_evaluation:results:noisy_entropy_1}
\end{figure}

\begin{figure}[h]
    \centering
    \includegraphics[height=5.0cm]{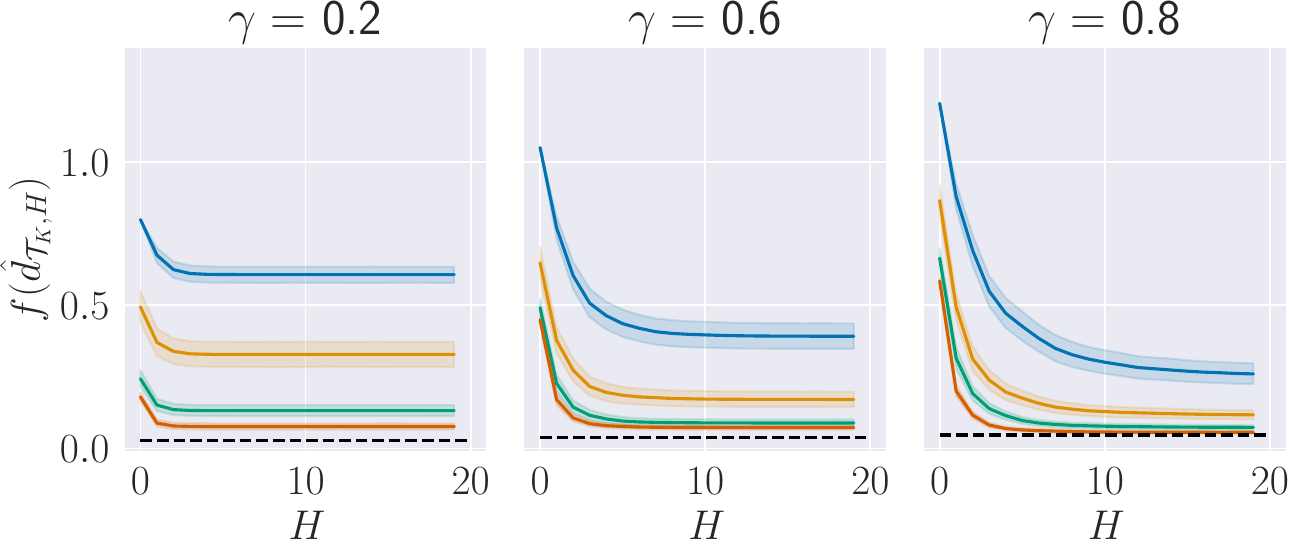}
    \includegraphics[height=0.5cm]{imgs/legend.pdf}
    \caption{($\mathcal{M}_{f,2}$, noisy transitions) Empirical study of $f(\hat{d}_{\mathcal{T}_K,H})$ for different $K$, $H$ and $\gamma$ values under GUMDP $\mathcal{M}_{f,2}$ with noisy transitions and a uniformly random policy. The results are computed over 100 random seeds. Shaded areas correspond to the 95\% bootstrapped confidence intervals.}
    \label{appendix:fig:policy_evaluation:results:noisy_imitation_1}
\end{figure}

\begin{figure}[h]
    \centering
    \includegraphics[height=5.0cm]{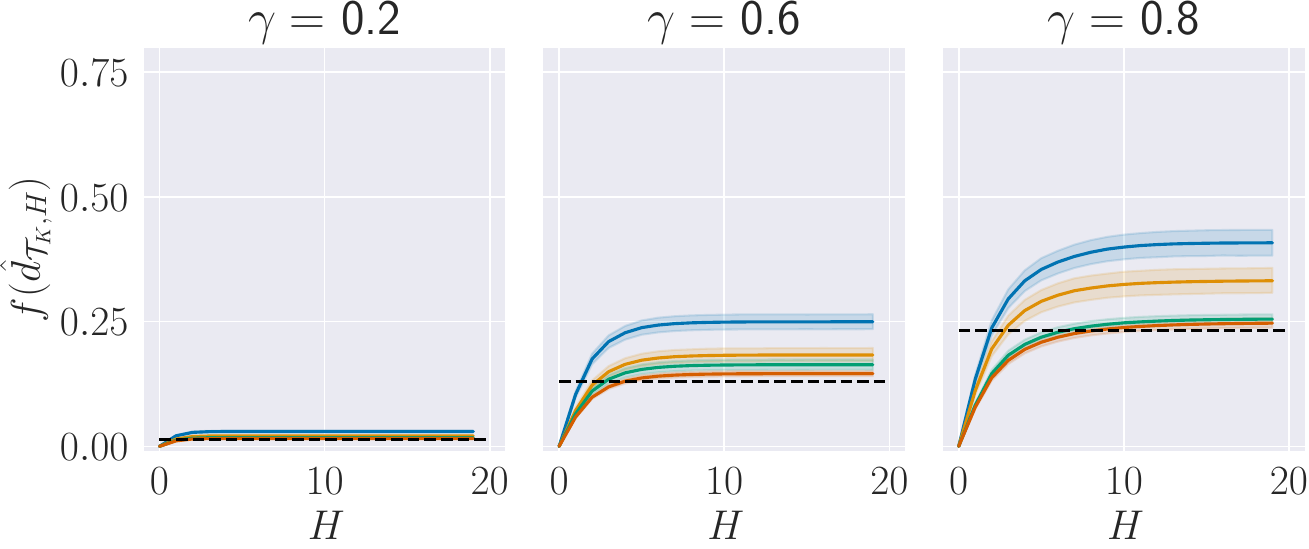}
    \includegraphics[height=0.5cm]{imgs/legend.pdf}
    \caption{($\mathcal{M}_{f,3}$, noisy transitions) Empirical study of $f(\hat{d}_{\mathcal{T}_K,H})$ for different $K$, $H$ and $\gamma$ values under GUMDP $\mathcal{M}_{f,3}$ with noisy transitions and a uniformly random policy. The results are computed over 100 random seeds. Shaded areas correspond to the 95\% bootstrapped confidence intervals.}
    \label{appendix:fig:policy_evaluation:results:noisy_quadratic_1}
\end{figure}

\begin{figure}[h]
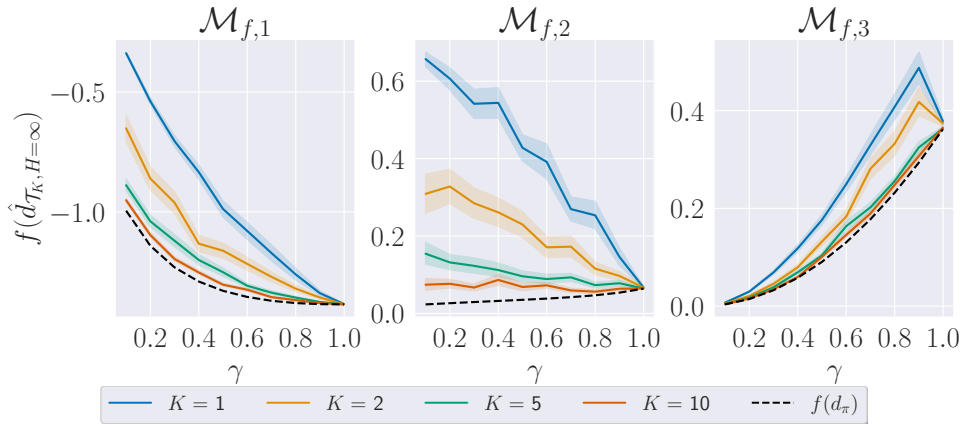

    \centering
    \includegraphics[height=5.0cm]{imgs/policy_eval_empirical_results/noisy_gammas_merged_plot.pdf}
    \includegraphics[height=0.5cm]{imgs/legend.pdf}
    \caption{(Noisy transitions) Empirical study of $f(\hat{d}_{\mathcal{T}_K,H=\infty})$ for different $K$ and $\gamma$ values with $H=\infty$ and noisy transitions. The results are computed over 100 random seeds. Shaded areas correspond to the 95\% bootstrapped confidence intervals.}
    \label{appendix:fig:policy_evaluation:results:noisy_gammas_merged}
\end{figure}


\end{document}